\documentclass[accepted]{uai2022} 

\usepackage[american]{babel}

\usepackage{natbib} 
    \bibliographystyle{plainnat}
    \bibpunct{(}{)}{;}{a}{,}{,}
    
\usepackage{mathtools} 
\usepackage{booktabs} 

\usepackage{textcase}


\usepackage{amsmath, amsthm, amssymb, amsfonts, mathtools, graphicx, enumitem}
\usepackage{algorithm,algorithmic}

\usepackage{mkolar_definitions}

\newtheorem*{theorem*}{Theorem}

\usepackage{comment}

\usepackage{tikz}
\usepackage{bbm}
\usetikzlibrary{automata, arrows}
\usetikzlibrary{positioning}

\usepackage[capitalise,nameinlink]{cleveref}
\Crefname{equation}{Eq.}{Eqs.}
\Crefname{assumption}{Assumption}{Assumptions}
\Crefname{condition}{Condition}{Conditions}

\allowdisplaybreaks

\newcommand{\gapmin}{C_\mathrm{gap}}
\newcommand{\gap}{\mathrm{gap}}
\newcommand{\cgap}{C_\mathrm{gap}}
\newcommand{\gapq}{\mathrm{gap}(Q^*)}
\newcommand{\estat}{\varepsilon_{\mathrm{stat},n}}

\hypersetup{
  colorlinks   = true, 
  urlcolor     = blue, 
  linkcolor    = blue, 
  citecolor   = blue  
}

\newcount\Comments  
\Comments=0 
\definecolor{darkred}{rgb}{0.7,0,0}
\definecolor{darkgreen}{rgb}{0,0.5,0}
\definecolor{orange}{rgb}{0.7,0.4,0}
\definecolor{purple}{rgb}{0.8,0.0,0.8}

\title{Offline Reinforcement Learning Under Value and \\ Density-Ratio Realizability: The Power of Gaps}

%
%
\author{Jinglin Chen, Nan Jiang}
\affil{%
    Department of Computer Science\\
    University of Illinois Urbana-Champaign\\
    Urbana, IL, USA
    
    \textrm{jinglinc@illinois.edu, nanjiang@illinois.edu}
}
  
\begin{document}
\maketitle

\begin{abstract}
We consider a challenging theoretical problem in offline reinforcement learning (RL): obtaining sample-efficiency guarantees with a dataset lacking sufficient coverage, under only realizability-type assumptions for the function approximators. While the existing theory has addressed learning under realizability and under non-exploratory data separately, no work has been able to address both simultaneously (except for a concurrent work which we compare in detail). Under an additional gap assumption, we provide guarantees to a simple pessimistic algorithm based on a version space formed by marginalized importance sampling (MIS), and the guarantee only requires the data to cover the optimal policy and the function classes to realize the optimal value and density-ratio functions. While similar gap assumptions have been used in other areas of RL theory, our work is the first to identify the utility and the novel mechanism of gap assumptions in offline RL with weak function approximation.
\end{abstract}

\section{Introduction and Related Works}


In offline reinforcement learning (RL), the learner searches for a good policy purely from historical (or offline) data, without direct interactions with the real environment. The lack of  intervention with the system makes offline RL a promising paradigm for learning sequential decision-making strategies in many important real-world applications. 

Early research in offline RL focused on analyzing approximate value and policy iteration algorithms and had significant overlap with the approximate dynamic programming literature \citep{munos2003error, munos2007performance, munos2008finite, antos2008learning,farahmand2010error}. These algorithms and their guarantees typically require relatively strong assumptions on both the expressivity of the function class and the exploratoriness of the dataset. For example, the analyses of Fitted Q-Iteration \citep{ernst2005tree,antos2007fitted} require the function class to be \textit{closed} under Bellman updates (also known as Bellman-completeness), and the offline data distribution to provide coverage (in some technical sense) over \textit{all} candidate policies \citep{chen2019information}. The former requirement is \textit{non-monotone} in the function class, which shatters the standard machine-learning intuition that a richer function class should always have better (or at least no worse) approximation power; it is also closely related to the instability of RL training and the infamous ``deadly triad'' \citep{sutton2018reinforcement, wang2021instabilities}. The latter requirement is very likely violated in practice since we have no control over how the historical data is collected 
\citep{fujimoto2019off}. 

Given these considerations, it is desirable to come up with novel algorithms and/or analyses to relax these assumptions. In particular, the ideal assumption on the function class is \textit{realizability}, that there is a target function of interest (such as the optimal value function) and we only require the function class to (approximately) capture such a function. The ideal assumption on the data distribution is \textit{single-policy} coverage, that it is ok for the data to not cover all policies, as long as an optimal (or sufficiently good) policy is covered. 

In recent years, significant progress has been made towards providing provable guarantees in offline RL under these relaxed assumptions. In particular, the principle of pessimism in face of uncertainty proves to be useful in designing algorithms that work under single-policy coverage \citep{liu2020provably, jin2021pessimism, xie2021bellman,yin2021near,rashidinejad2021bridging}, but most of the existing pessimistic algorithms require Bellman completeness on the function class. On the other hand, relaxing Bellman completeness to realizability has been difficult: there is merely one existing result that requires only the realizability of optimal value function \citep{xie2020batch}, yet their data assumption is even stronger than all-policy coverage. In fact, a recent information-theoretic lower bound by \citet{foster2021offline} confirms that even with a strong notion of all-policy coverage called (all-policy) concentrability, plus the realizability of value functions for \textit{all} policies, offline RL is still fundamentally intractable. 

Despite the lower bound, not all hope is lost. A promising way of breaking the lower bound is to assume the realizability of other functions beyond value functions. 
Indeed, positive results that are analogues to what we want are established in off-policy evaluation (OPE)---where the goal is to estimate the performance of a target policy from offline data---when additional realizability of \textit{density-ratio} functions is assumed. In particular,   \citet{liu2018breaking, uehara2020minimax} show that, as long as the data covers the target policy, and we are given function classes that can represent \textit{both} the value function and the density-ratio function (or marginalized importance weights) of the target policy, 
it is possible to estimate the performance of the target policy in a sample-efficient manner. 
One way of using such results for policy learning is to use OPE as a subroutine and optimize a policy using OPE's assessment of the policy's performance. Unfortunately, such a direct application introduces prohibitive expressivity assumptions we wanted to avoid beginning with, such as the realizability of value functions for \textit{all} candidate policies \citep{jiang2020minimax}. 

In this paper, we provide sample-efficiency guarantees for offline RL under the desired assumptions, that data is only guaranteed to cover the optimal policy, and the function classes only represent the optimal value function and density ratio, respectively. Our algorithms have a simple procedure that combines marginalized importance sampling (MIS) with pessimism in a novel fashion. The key enabler of our guarantees is an additional \textit{gap} assumption, that there is a nontrivial gap between the values of the (unique) greedy action and the second-best for every state. Similar gap assumptions are common in RL theory to characterize easy problems in which stronger-than-usual guarantees can be obtained. They are often used to achieve logarithmic or constant regret in bandits and tabular online RL  \citep{bubeck2012regret,ok2018exploration,slivkins2019introduction,lattimore2020bandit,he2021logarithmic,papini2021reinforcement}, and similar guarantees in offline RL under Bellman-completeness and additional structural assumptions on the value-function class \citep{hu2021fast}. They are also used in online RL with function approximation to block exponential error amplification \citep{du2019provably}. To our knowledge, our work is the first one to identify the utility of gap assumptions in offline RL with weak function approximation and offer interesting insights into novel aspects and mechanisms of gaps 
(see \pref{sec:appx_error}).

\paragraph{Paper Organization} The rest of the paper is organized as follows: \pref{sec:prelim} introduces preliminary concepts and the problem setting. \pref{sec:alg} describes the algorithm. \pref{sec:main} provides the core analysis. 
We further extend our results to the setting where the function classes are misspecified (\pref{sec:appx_error}), and when the gap parameter is unknown but we have access to a small amount of online interactions (\pref{sec:unknown_gap}). We conclude the paper with further discussions in \pref{sec:discuss}, including a detailed comparison to the concurrent work of \citet{zhan2022offline} on the same problem.

\section{Preliminaries} \label{sec:prelim}
\paragraph{Markov Decision Processes (MDPs)}
We consider finite horizon episodic MDPs defined in the form of $\Mcal = (\Xcal, \Acal, P, R, H, x_0)$, where $\Xcal=\Xcal_0\bigcup\ldots\bigcup\Xcal_{H-1}$ is the layered state space with $\Xcal_h$ denoting the state space at timestep $h$, $\Acal$ is the action space, $P=(P_0,\ldots,P_{H-1})$ is the transition function with $P_h: \Xcal_h\times\Acal\to\Delta(\Xcal_{h+1})$, $R=(R_0,\ldots,R_{H-1})$ is the reward function with $R_h: \Xcal_h\times\Acal\to [0, 1]$, $H$ is the length of horizon, and $x_0$ is the fixed initial distribution.\footnote{We consider fixed initial state and deterministic reward function. They can be easily generalized to the stochastic case.} We assume the state and action spaces are finite but can be arbitrarily large, and $\Delta(\cdot)$ denotes the probability simplex over a finite set. We define a policy $\pi=\{\pi_0,\ldots,\pi_{H-1}\}$, where for each $h\in[H]$, $\pi_h: \Xcal_h \to \Delta(\Acal)$ is the policy at timestep $h$ and we use $[H]$ to denote $\{0,\ldots,H-1\}$. With a slight abuse of notation, when $\pi_h(\cdot)$ is a deterministic policy, we assume $\pi_h(\cdot):\Xcal_h\to \Acal$. Policy $\pi$ induces a distribution over trajectories from the initial state distribution, which we denote as $\Pr\nolimits_\pi(\cdot)$ and can be described as starting with $x_0$ and $a_h \sim \pi(\cdot|x_h), r_h = R_h(x_h, a_h), x_{h+1} \sim P_h(\cdot|x_h, a_h), \forall h \in[H]$. As a convention, we will use $x_h,a_h,r_h$ to refer the state, action, and reward at timestep $h$ (thus $x_h\in\Xcal_h$). The performance of a policy is measured by its expected return, defined as $v^\pi := \EE_\pi[\sum_{h=0}^{H-1} r_h]$, where the expectation is taken with respect to $\Pr\nolimits_\pi(\cdot)$. For any  $f_h\in\RR^{\Xcal_h\times\Acal}$, we use $\pi_{f_h}(x_h):=\argmax_{a_h\in\Acal}f_h(x_h,a_h)$ to denote its greedy policy at timestep $h$. Among all policies, there always exists a policy, denoted as $\pi^*$, that maximizes the return from all starting states simultaneously. This policy is the greedy policy of the optimal action-value (or Q-) function, $Q^*=(Q_0^*,\ldots,Q_{H-1}^*)$, i.e., $\pi^* = \pi_{Q^*}:=(\pi_{Q^*_0},\ldots,\pi_{Q^*_{H-1}})$. $Q^*$ is the unique solution to the Bellman optimality equations $Q^*_h = \Tcal_h Q^*_{h+1}$, where $\Tcal_h: \RR^{\Xcal_{h+1}\times\Acal} \to \RR^{\Xcal_h\times\Acal}$ is the Bellman optimality operator: $\forall f_{h+1} \in \RR^{\Xcal_{h+1}\times\Acal}$, $(\Tcal_hf_{h+1})(x_h,a_h):=R_h(x_h,a_h)+\EE_{x_{h+1} \sim P_h(\cdot|x_h, a_h)}[ \max_{a_{h+1}}f_{h+1}(x_{h+1}, a_{h+1})]$. We can similarly define policy-specific Q-functions $Q^\pi$ and their state-value function counterparts, namely $V^*$ and $V^\pi$. Another useful concept is the notion of state-action occupancy of a policy $\pi$, $d^\pi_h(x_h',a_h') := \Pr\nolimits_\pi(x_h=x_h',a_h=a_h')$. As a shorthand, we define $d^*_h := d^{\pi^*}_h$ and use $a_{i:j}$ to refer actions $a_i,\ldots,a_j$.

\paragraph{Offline RL} We consider a standard theoretical setup for offline RL, where we are given a dataset $\Dcal=\Dcal_0\bigcup\ldots\bigcup\Dcal_{H-1}$ with the form $\Dcal_h=\{x_h^{(i)},a_h^{(i)},r_h^{(i)},$ $x_{h+1}^{(i)}\}_{i=1}^n$ and $\Dcal_h$ consists of $\{x_h,a_h,r_h,x_{h+1}\}$ tuples sampled i.i.d.~from the following generative process: $(x_h,a_h)\sim d^D_h,r_h=R_h(x_h,a_h),x_{h+1}\sim P_h(\cdot| x_h,a_h)$. Note that $r_h$ and $x_{h+1}$ are generated according to the MDP reward and transition functions, and $d^D_h$ fully determines the quality and coverage of the data distribution. For a given policy $\pi$, $w^\pi_h(x_h,a_h):= d^\pi_h(x_h,a_h) / d^D_h(x_h,a_h)$ measures how well $d^D_h$ covers the occupancy induced by $\pi$ at timestep $h$ and is often known as the density-ratio function or the marginalized importance weight. It plays an important role in offline RL algorithms and analyses. As another shorthand, we use notation $w^\pi=(w_0^\pi,\ldots,w_h^\pi)$ to denote the density-ratio function over all timesteps and notation $w^*:= w^{\pi^*}$ to denote the density ratio of the optimal policy.

\paragraph{Function Approximation} We consider the function approximation setting, where we are given a function class $\Fcal=\Fcal_0\times\ldots\times\Fcal_{H-1}$ with $\Fcal_h\subseteq(\Xcal_h\times\Acal\rightarrow\RR),\forall h\in[H]$ and a weight function class $\Wcal=\Wcal_0\times\ldots\times\Wcal_{H-1}$ with $\Wcal_h \subseteq(\Xcal_h\times\Acal\rightarrow\RR), \forall h\in[H]$. We assume these are finite classes and use $\log(|\Fcal|)$ and $\log(|\Wcal|)$ to measure their statistical capacities. The extension to continuous or infinite classes with a covering argument is standard. By default, for any $f\in\Fcal,$ we assume $f_H=\zero$ for technical simplicity and use $V_f$ to denote its induces state-value function, i.e., $V_f(x_h)=\max_{a_h\in\Acal} f_h(x_h,a_h)$. We will also use $\pi_f(x_h)$ instead of $\pi_{f_h}(x_h)$ for simplicity since only $f_h$ operates on $x_h\in \Xcal_h$ and there is no confusion.

\section{Algorithm} \label{sec:alg}

In this section, we introduce our algorithm PABC (Pessimism under Average Bellman error Constraints), whose pseudo-code is given in \pref{alg:pess_alg}. The algorithm takes two steps: a prescreening step (\pref{line:prescreen}), followed by the main step (\pref{line:pess_select}). We first give an intuition for the main step, deferring the explanation of the prescreening step and the related gap definitions to the later part of this section.

\begin{algorithm}[htb]
	\caption{PABC (Pessimism under Average Bellman error Constraints)\label{alg:pess_alg}}
	\begin{algorithmic}[1]
	    \REQUIRE threshold $\alpha>0$, gap parameter $\cgap$, function class $\Fcal$, weight function class $\Wcal$, and dataset $\Dcal$.
	    \STATE Perform prescreening according to input  $\cgap$: \label{line:prescreen}
	    \begin{align} 
	    \Fcal(\cgap):=\{f \in \Fcal: \gap(f)\ge\cgap\}.
	    \end{align}\vspace{-1.5em}
		\STATE Find the pessimism value function in  $\Fcal(\cgap)$  subject to average Bellman error constraints \label{line:pess_select}
		\begin{align}
		\label{eq:constraint}
		&\hat f=\argmin_{f\in\Fcal(\cgap)} f_0(x_0,\pi_{f}(x_0))
		\notag\\
    	\text{s.t.}& \max_{w\in\Wcal,h\in[H]} |\Lcal_{\Dcal}(f,w,h)|\le\alpha,
    	\end{align}
    	where the empirical loss $\Lcal_{\Dcal}(f,w,h)$ is defined as
    	\begin{align}
        \Lcal_{\Dcal}(f,w,h) &
        =\frac{1}{n}\sum_{i=1}^n[w_h(x_h^{(i)},a_h^{(i)})(f_h(x_h^{(i)},a_h^{(i)}) \nonumber\\
        &-r_h^{(i)}-f_{h+1}(x_{h+1}^{(i)},\pi_f(x_{h+1}^{(i)})))]. \label{eq:LD}
        \end{align}
		\ENSURE policy $\pi_{\hat f}$ and return estimation $\hat f_0(x_0, \pi_{\hat f}(x_0))$.
	\end{algorithmic}
\end{algorithm}

The main step (\pref{line:pess_select}) runs a constrained optimization to select a function $\hat f \in \Fcal$, whose greedy policy is the output. The objective of the optimization minimizes the value at the initial state, which is a form of initial-state pessimism \citep{xie2021bellman,zanette2021provable} and proved to be useful in handling insufficient data coverage. The constraints eliminate functions with large \textit{average Bellman errors} \citep{jiang2017contextual, xie2020q}. 

\paragraph{Average Bellman Error Constraints} To provide intuition, we know that $Q^*$ has $0$ average Bellman errors for all state-action pairs, that is, $\forall h\in[H], x_h\in\Xcal_h, a_h\in\Acal$, $(Q^*_h - \Tcal_h Q^*_{h+1})(x_h, a_h) = 0.$  
Thus it also has $0$ average Bellman errors under any distribution $\nu_h$ at timestep $h$:
$$
\EE_{(x_h,a_h) \sim \nu_h}[(Q^*_{h} - \Tcal_h Q^*_{h+1})(x_h,a_h)] = 0.
$$
This holds even if $\nu_h$ is an unnormalized distribution. Therefore, we can safely eliminate any candidate function $f\in\Fcal$, if it has a large average Bellman error $\EE_{\nu_h}[f_h - \Tcal_h f_{h+1}]$ under any (possibly unnormalized) distribution $\nu_h$. Unlike the more standard versions of Bellman errors such as $\EE_{\nu_h}[(f_h - \Tcal_h f_{h+1})^2]$, which squares the Bellman error in each state before taking expectation and cannot be directly estimated due to the infamous \textit{double-sampling} difficulty \citep{baird1995residual, farahmand2011model}, the average Bellman error can be easily estimated. 
In the algorithm, we consider a variety of  (possibly unnormalized) distributions $\nu_h = w_h \cdot d^D_h$ for $w\in\Wcal,h\in[H]$. Since the average Bellman error can only be empirically approximated (see \Cref{eq:LD}), we relax the constraints and allow a threshold $\alpha$ to incorporate the statistical errors. 
We note that the constraint alone is similar to the MABO algorithm by \citet{xie2020q}, but they do not use pessimism and cannot handle insufficient data coverage. They also assume that $\Wcal$ is sufficiently rich to approximate $w^{\pi_f}$ \textit{for all} $f\in\Fcal$, and a main goal of our work is to avoid such ``for all'' assumptions.

\paragraph{Gap and Prescreening} As mentioned in the introduction, a key assumption that enables our results is a gap assumption on value functions. To prepare for the discussion, we define the gap of a function as follows:
\begin{definition}[Gap]
\label{def:gap}
For any $f = (f_0, \ldots, f_{H-1})$, we define its gap at timestep $h \in [H]$ and state $x_h\in\Xcal_h$  as follows: If $\argmax_{a_h \in\Acal} f_h(x_h, a_h)$ is unique, 
then we define $\gap(f;h,x_h):=\min_{a_h\neq \pi_{f}(x_h)}f_h(x_h,\pi_f(x_h))-f(x_h,a_h)$.  Otherwise, we define $\gap(f;h,x_h):=0$.

The gap of $f$ is then defined as
$$\gap(f):=\min_{h\in[H],x_h\in\Xcal_h}\gap(f;h,x_h).$$
\end{definition}
As we see, this definition of the gap is similar to the one used in prior works \citep{simchowitz2019non,mou2020sample,du2019provably,he2021logarithmic,yang2021q,hu2021fast,wang2021exponential,papini2021reinforcement,wu2021gap}, except that we require a unique optimal action for the gap to be non-zero. A motivating example of similar gap assumptions in other areas of RL theory can be found in \citet{wu2021gap}. 
With \pref{def:gap}, we can now define the minimum gap of a function class:
\begin{definition}[Gap of a function class]
\label{def:gapmin_val_class}
Given a function class $\Gcal=\Gcal_0\times\ldots\times\Gcal_{H-1}$, where $\Gcal_h \subseteq(\Xcal_h\times\Acal\rightarrow\RR), \forall h\in[H]$, we define its gap as
$$\gap(\Gcal):=\min_{g\in\Gcal} \gap(g).$$
\end{definition}

Prior theoretical results relying on similar gap assumptions often make such assumptions on the true optimal value function $Q^*$ \citep{simchowitz2019non, yang2021q}. As we will see in our analyses, however, what is really important for us is that the \textit{learned} function $\hat f$ has a large gap, not the true $Q^*$. Since we have no control over which $f$ in the function class will be finally chosen, we perform the prescreening step in \pref{line:prescreen} to eliminate functions with the gap lower than a pre-defined threshold $\cgap \ge 0$. It is immediate to see that $\gap(\Fcal(\cgap))\ge\cgap$. Of course, this runs into the risk of eliminating $Q^*$, and if we do not want any misspecification, we need to ensure $Q^* \in \Fcal(\cgap)$, which requires that $\cgap \le \gap(Q^*)$. For clarity, in \pref{sec:find_near_optimal} we will assume that we have the knowledge of $\gap(Q^*)$ and can set $\cgap$ accordingly, while later in \pref{sec:unknown_gap} we show how to handle unknown $\gap(Q^*)$. Moreover, as we will see in \pref{sec:appx_error}, when we allow misspecification errors in the analysis, $\gap(Q^*)$ and $\gapmin$ become disentangled, which leads to some interesting implications.

\section{Main Guarantees}  \label{sec:main}
In this section, we present the main sample complexity results of our algorithms. We start with a weak version of guarantee by showing that our algorithm can identify $v^*$, the optimal expected return at the initial state, with polynomial samples under realizability and single-policy coverage assumptions, even without any gap assumption (\pref{sec:find_v_star}). Such a result will also be useful when we handle the unknown gap setting later in \pref{sec:unknown_gap}. Then, \pref{sec:counter} provides an algorithm-specific counterexample to show that our algorithm fails to find a near-optimal policy under these assumptions, motivating the necessity of the gap assumption. Finally, \pref{sec:find_near_optimal} provides the main result of this paper under the additional gap assumption.

\subsection{Estimating the Optimal Expected Return}
\label{sec:find_v_star}
We first show how to identify $v^*$, the optimal expected return of the problem, \textit{without} needing the gap assumption. In this case, we will run \pref{alg:pess_alg} with $\gapmin=0$, that is, without the prescreening step. To our knowledge, there is no prior work that can achieve this goal under our weak assumptions.\footnote{We note that under \citet{zhan2022offline}'s assumptions, their algorithm, with regularization removed, can also identify $v^*$.} Despite not producing a near-optimal policy, this procedure and guarantee allows us to check whether any given policy is close to optimal, assuming we can evaluate the policy's performance by off-policy evaluation or a small amount of online interactions. This capability can be very useful especially in certain model selection scenarios \citep[see e.g.,][Section 5]{modi2020sample}. Indeed, we will reuse this result later in \pref{sec:unknown_gap} to handle the unknown gap setting.

We start by introducing the assumptions. The first two are the standard realizability assumptions.

\begin{assum}[Realizability of $\Fcal$]
\label{assum:realizablity_q}
We assume $Q^*=(Q^*_0,\ldots,Q^*_{H-1})\in\Fcal$.
\end{assum}

\begin{assum}[Realizability of $\Wcal$]
	\label{assum:realizablity_w}
    We assume $w^*=(w^*_0,\ldots,w^*_{H-1})\in\Wcal$.
\end{assum}
We make these assumptions exact for now to allow for a clean presentation of the main results and core proof ideas, and defer the handling of misspecification errors to \pref{sec:appx_error}. Also, following the arguments in \citet{uehara2020minimax, xie2020q}, \pref{assum:realizablity_w} can be further relaxed such that $w^*$ only needs to lie in the convex hull of $\Wcal$.  

Next, we introduce the standard boundedness assumptions. 

\begin{assum}[Boundness of $\Fcal$]
	\label{assum:bound_q}
For any $f\in\Fcal$, we assume $f_h\in (\Xcal_h\times\Acal\rightarrow[0,H-h]),\forall h\in[H]$.
\end{assum}

\begin{assum}[Boundness of $\Wcal$]
	\label{assum:bound_w}
	For any $w\in\Wcal$, we assume $\|w_h\|_\infty \le C, \forall h\in[H]$. 
\end{assum}

\pref{assum:realizablity_w} and \pref{assum:bound_w} together immediately imply that our data covers $\pi^*$: 
\[\frac{d^*_h(x_h,a_h)}{d^D_h(x_h,a_h)}\le C,\forall h\in[H], x_h\in\Xcal_h,a_h\in\Acal.
\]
This version of coverage is often called $\pi^*$-concentrability \citep{scherrer2014approximate, xie2021policy, rashidinejad2021bridging, zhan2022offline}. As we will see when we consider misspecification errors in \pref{sec:appx_error}, we do not really need our data to satisfy $\pi^*$-concentrability, and the definition of coverage can be relaxed using the structure and generalization effects of $\Fcal$ similarly to \citet{jin2021pessimism, xie2021policy}.

With all the above assumptions, we are ready to state our first result formally below. The proof is deferred to \pref{app:proof_find_v_star}. 

\begin{theorem}[Sample complexity of estimating $v^*$]
\label{thm:find_v_star}
Suppose Assumptions \ref{assum:realizablity_q}, \ref{assum:realizablity_w}, \ref{assum:bound_q}, \ref{assum:bound_w} hold and the total samples $nH$ satisfies 
$$
nH\ge \frac{8C^2H^5\log(2|\Fcal||\Wcal|H/\delta)}{\varepsilon^2}.
$$
Then with probability at least $\ge 1-\delta$, running \pref{alg:pess_alg} with $\gapmin=0$ and $\alpha=\varepsilon/(2H)$ guarantees \[|V_{\hat f}(x_0)-v^*|\le \varepsilon.\]
\end{theorem}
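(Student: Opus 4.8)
The plan is to establish the two-sided bound by proving $V_{\hat f}(x_0)\le v^*$ and $V_{\hat f}(x_0)\ge v^*-\varepsilon$ separately, on a single high-probability event that controls the deviation between the empirical loss $\Lcal_\Dcal(f,w,h)$ and its population counterpart $\Lcal(f,w,h):=\EE_{(x_h,a_h)\sim d^D_h}[w_h(x_h,a_h)(f_h-\Tcal_h f_{h+1})(x_h,a_h)]$. First I would check that $\Lcal_\Dcal(f,w,h)$ is an unbiased estimate of $\Lcal(f,w,h)$ (using $r_h=R_h(x_h,a_h)$ and $\EE_{x_{h+1}}[f_{h+1}(x_{h+1},\pi_f(x_{h+1}))]=\EE_{x_{h+1}}[V_f(x_{h+1})]$), and that each summand lies in $[-CH,CH]$ since $|w_h|\le C$ (\pref{assum:bound_w}) and $|f_h-r_h-V_f(x_{h+1})|\le H$ (\pref{assum:bound_q}). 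A Hoeffding bound for each fixed triple $(f,w,h)$ followed by a union bound over the finite classes and the $H$ timesteps then yields, with probability at least $1-\delta$, that $|\Lcal_\Dcal(f,w,h)-\Lcal(f,w,h)|\le\alpha$ for all $f\in\Fcal$, $w\in\Wcal$, $h\in[H]$, precisely when $n\ge 2C^2H^2\log(2|\Fcal||\Wcal|H/\delta)/\alpha^2$; substituting $\alpha=\varepsilon/(2H)$ recovers the stated requirement on $nH$. I condition on this event for the rest of the argument.

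\textbf{Upper bound.} Since $\gapmin=0$ makes the prescreening vacuous ($\Fcal(0)=\Fcal$ because $\gap(\cdot)\ge 0$ always), and since $Q^*$ satisfies $(Q^*_h-\Tcal_h Q^*_{h+1})\equiv 0$ so that $\Lcal(Q^*,w,h)=0$ for every $w,h$, the concentration event gives $|\Lcal_\Dcal(Q^*,w,h)|\le\alpha$ for all $w,h$. Hence $Q^*$, which lies in $\Fcal$ by \pref{assum:realizablity_q}, is feasible for the program in \pref{eq:constraint}. Because $\hat f$ minimizes $f_0(x_0,\pi_f(x_0))=V_f(x_0)$ over the feasible set, $V_{\hat f}(x_0)\le V_{Q^*}(x_0)=\max_a Q^*_0(x_0,a)=v^*$.

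\textbf{Lower bound.} The key step is a telescoping decomposition along $\pi^*$: for any $f$ with $f_H=\zero$,
$$V_f(x_0)\ge v^*+\sum_{h=0}^{H-1}\EE_{(x_h,a_h)\sim d^*_h}\!\big[(f_h-\Tcal_h f_{h+1})(x_h,a_h)\big].$$
I would prove this from the pointwise inequality $V_f(x_h)\ge f_h(x_h,\pi^*(x_h))=R_h+\EE_{x_{h+1}}[V_f(x_{h+1})]+(f_h-\Tcal_h f_{h+1})(x_h,\pi^*(x_h))$, taking $\EE_{\pi^*}$ and summing over $h$ so the value terms telescope (using $V_f(x_H)=0$) while the reward terms collect into $v^*=v^{\pi^*}$. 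Crucially, the error terms are evaluated under $d^*_h$, and since $d^*_h=w^*_h\cdot d^D_h$ we have $\EE_{d^*_h}[(f_h-\Tcal_h f_{h+1})]=\Lcal(f,w^*,h)$ with $w^*\in\Wcal$ by \pref{assum:realizablity_w}. For the feasible $\hat f$ the constraint gives $|\Lcal_\Dcal(\hat f,w^*,h)|\le\alpha$, so on the concentration event $|\Lcal(\hat f,w^*,h)|\le 2\alpha$, and summing over $h$ yields $V_{\hat f}(x_0)\ge v^*-2H\alpha=v^*-\varepsilon$. Combining the two bounds gives $|V_{\hat f}(x_0)-v^*|\le\varepsilon$.

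The main obstacle is the lower-bound decomposition, and specifically arranging it so that only the single density ratio $w^*$ appears. A naive decomposition along the greedy policy $\pi_{\hat f}$ would introduce $w^{\pi_{\hat f}}$, which is not assumed realizable; the trick is to lower-bound the greedy value by the value of $\pi^*$'s action at every state, which replaces $\pi_{\hat f}$ by $\pi^*$ at the cost of a one-sided inequality and funnels the error through $d^*_h$ alone. This is exactly where $\pi^*$-coverage and the realizability of $w^*$ (rather than an all-policy assumption) suffice. The concentration and the upper bound are routine by comparison.
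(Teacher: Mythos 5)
Your proposal is correct and follows essentially the same route as the paper's proof: establish the concentration event via Hoeffding plus a union bound, show $Q^*$ is feasible so pessimism gives $V_{\hat f}(x_0)\le v^*$, and telescope along $\pi^*$ (lower-bounding $\max_a f_h(x_h,a)$ by $f_h(x_h,\pi^*(x_h))$ at each step) so that the per-step error is exactly $\EE_{d^*_h}[f_h-\Tcal_h f_{h+1}]=\EE[\Lcal_\Dcal(f,w^*,h)]$, controlled by the constraint with $w^*\in\Wcal$. The only cosmetic difference is that you phrase the per-step error directly via the operator $\Tcal_h$ under $d^*_h$, whereas the paper routes it through the Q-type average Bellman error $\Ecal(f,\pi^*,h)$ in its Lemma 2 — these are the same quantity.
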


\subsection{Algorithm-Specific Counterexample} \label{sec:counter}
Despite being able to identify $v^*$, we show that \pref{alg:pess_alg} cannot be guaranteed to learn a near-optimal policy without further assumptions, even with infinite data. As we will see, a key aspect of the construction is a tie between the values of different actions, so such counterexamples can be effectively excluded by assuming a unique optimal action. 

The counterexample is given in \pref{fig:counter}. 
Circles denote states and arrows denote actions with deterministic transitions, and states without arrows have a default $\mathrm{null}$ action. There is only a $+1$ reward at state $x_C$, while the rewards are $0$ everywhere else. Taking $\textrm{L}_1$ at $x_0$ deterministically transits to $x_A$ and we omit the remaining specifications as they are clearly indicated in the figure.

\begin{figure}
	\centering
	\includegraphics[scale=1]{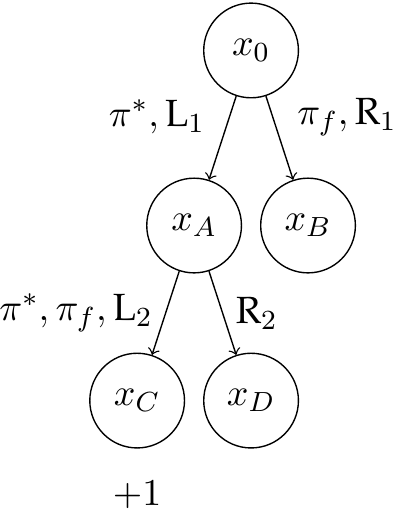}
	\caption{Algorithm-specific counterexample without the gap assumption.}
	\label{fig:counter}
\end{figure}

It is easy to see that the optimal policy $\pi^*$ takes action $\textrm{L}_1$ at state $x_0$ and action $\textrm{L}_2$ at state $x_A$. We construct a bad function $f$, which only differs from $Q^*$ at $(x_0,\textrm{R}_1)$ and $(x_B,\mathrm{null})$ by setting $f_0(x_0,\textrm{R}_1)=1$ and $f_1(x_B,\mathrm{null})=1$. By adversarial tie breaking, we assume $\pi_f(x_0)=\textrm{R}_1$. We immediately have a realizable class $\Fcal=\{Q^*,f\}$. It is easy to see that $\pi_f$ is not $\varepsilon$-optimal for any $\varepsilon<1$ because it deviates from the optimal branch at $x_0$. In addition, we let data $d^D$ covers $(x_0,\textrm{L}_1)$, $(x_A,\textrm{L}_2)$, $(x_C,\mathrm{null})$. For the weight function, we define an \textit{invalid} weight function $w_{\textrm{bad}}$ that puts all weight on $(x_0,\textrm{R}_1),(x_A,\textrm{L}_2),(x_C,\mathrm{null})$ in each level respectively. Then we also have a realizable class $\Wcal=\{w^*,w_{\textrm{bad}}\}$. 

Since both $f$ and $Q^*$ have zero population average Bellman error under $\Wcal$, and $f_0(x_0, \pi_f(x_0)) = Q^*_0(x_0, \pi_{Q^*}(x_0)) = 1$, either of them can be the $\hat f$ learned in \pref{alg:pess_alg}, but returning $\pi_f$ leads to failure of learning. 
We note that the reason for failure is that no data covers the state $\pi_f$ visits.

\paragraph{Additional Consistency Constraints} In our setting, there are additional constraints that one can add to ensure some form of consistency. 
For example, for any $f\in\Fcal$, we can additionally require that there exists $w\in\Wcal$ that is consistent with $\pi_{f}$, since $w^* \in \Wcal$ should only give non-zero weight to actions chosen by $\pi_f$ (i.e., $\forall h\in[H], w_h(x_h,a_h)=0 \text{ if } a_h\neq \pi_f(x_h)$). In addition, as we can estimate $v^*$ with the assumptions in \pref{thm:find_v_star} and we know $\EE_{d^D}[w^* \cdot R] = v^*$, we can eliminate any $w \in \Wcal$ that violates this condition. While these constraints are reasonable (or at least harmless) and may be of independent interest, we can verify that they do not help with this counterexample, which implies our algorithm fails even under these additional consistency checks.

\subsection{Learning a Near-Optimal Policy}
\label{sec:find_near_optimal}
As mentioned above, a key aspect of the counterexample is a tie between the values of actions. In this section, we show that a positive gap assumption not only excludes the counterexample, but enables a general guarantee for learning near-optimal policies with our \pref{alg:pess_alg}. 
\begin{assum}[Gap of $Q^*$]
\label{assum:gap_plus}
The gap of $Q^*$ satisfies
\[\gap(Q^*)>0.\]
\end{assum}
Here the implicit assumption is that we want $\gap(Q^*)$ to be sufficiently large, as our later sample complexity guarantees will scale inversely with $\gap(Q^*)$. Note that
\pref{assum:gap_plus} is stronger than the standard gap assumption in the literature \citep{simchowitz2019non, yang2021q, hu2021fast}. Compared with their definition, we additionally assume the optimal action is unique at each state. On the other hand, these papers require additional strong assumptions (e.g., linear MDPs, Bellman-completeness, or pointwise convergence) or focus on the tabular setting, whereas we handle general function approximation in offline RL under weak realizability-type assumptions. Plus, the technical mechanisms under which the gap plays a role in the analyses are very different, so the assumptions are not really comparable.

For now, we assume $\gapq$ is known and will later handle the case of an unknown gap \pref{sec:unknown_gap}. Plus, as a side effect of handling misspecification errors,   \pref{sec:appx_error} will lift the stringent gap assumption in a novel and interesting manner. 

We now state the guarantee of learning a near-optimal policy under the gap assumption. A sketch of proof is provided after the theorem statement, while the complete proof is deferred to \pref{app:proof_main}.

\begin{theorem}[Sample complexity of learning a near-optimal policy]
\label{thm:main}
	Suppose Assumptions \ref{assum:realizablity_q}, \ref{assum:realizablity_w}, \ref{assum:bound_q}, \ref{assum:bound_w}, \ref{assum:gap_plus} hold and the total number of samples $nH$ satisfies 
	\[nH\ge \frac{8C^2H^7\log(2|\Fcal||\Wcal|H/\delta)}{\varepsilon^2 \gapq^2}.
	\]
	Then with probability at least $\ge 1-\delta$, running \pref{alg:pess_alg} with $\alpha=\varepsilon\gapq/(2H^2)$ and $\gapmin=\gapq$ guarantees \[v^{\pi_{\hat f}} \ge v^*-\varepsilon.\]
\end{theorem}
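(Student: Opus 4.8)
The plan is to combine a uniform concentration bound, the feasibility of $Q^*$, initial-state pessimism, and a gap-driven argument that converts control of the average Bellman error under $d^*$ into control of the disagreement between $\pi_{\hat f}$ and $\pi^*$. First I would establish the uniform deviation bound. Each summand of $\Lcal_\Dcal(f,w,h)$ is bounded in magnitude by $CH$ (using Assumptions~\ref{assum:bound_q} and~\ref{assum:bound_w}), so Hoeffding's inequality with a union bound over $\Fcal\times\Wcal\times[H]$ yields, with probability $\ge 1-\delta$, the simultaneous bound $|\Lcal_\Dcal(f,w,h)-\Lcal(f,w,h)|\le\estat$ for all $(f,w,h)$, where $\Lcal(f,w,h):=\EE_{d^D_h}[w_h(f_h-\Tcal_h f_{h+1})]$ is the population average Bellman error; the stated sample-size condition forces $\estat\le\alpha$. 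Since $Q^*\in\Fcal$ has $\gap(Q^*)\ge\cgap$ it survives prescreening, and because $\Lcal(Q^*,w,h)=0$ for all $w,h$ we get $|\Lcal_\Dcal(Q^*,w,h)|\le\estat\le\alpha$, so $Q^*$ is feasible. Optimality of $\hat f$ for the pessimistic objective then gives $V_{\hat f}(x_0)=\hat f_0(x_0,\pi_{\hat f}(x_0))\le Q^*_0(x_0,\pi^*(x_0))=v^*$.

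The core step uses the telescoping identity, valid for any $f$ and deterministic $\pi$,
\[
V_f(x_0)-v^\pi=\sum_{h=0}^{H-1}\EE_{d^\pi_h}\!\big[(f_h-\Tcal_h f_{h+1})(x_h,a_h)\big]+\sum_{h=0}^{H-1}\EE_{d^\pi_h}\!\big[V_f(x_h)-f_h(x_h,\pi(x_h))\big],
\]
which I would apply with $f=\hat f$ and $\pi=\pi^*$. Feeding $w^*\in\Wcal$ into the constraint and using $w^*_h d^D_h=d^*_h$ shows the first sum equals $\sum_h\EE_{d^*_h}[(\hat f_h-\Tcal_h\hat f_{h+1})(x_h,\pi^*(x_h))]$, whose magnitude is at most $2\alpha H$ (the constraint bound $\alpha$ plus $\estat\le\alpha$ at each $h$). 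The second sum, call it $B$, is nonnegative since $V_{\hat f}(x_h)=\max_a\hat f_h(x_h,a)\ge\hat f_h(x_h,\pi^*(x_h))$. Rearranging the identity and invoking pessimism ($0\le v^*-V_{\hat f}(x_0)$) yields $B\le 2\alpha H$.

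The gap now converts $B$ into a disagreement budget. For any state $x_h$ with $\pi_{\hat f}(x_h)\ne\pi^*(x_h)$, the action $\pi^*(x_h)$ is non-greedy for $\hat f$, so $\gap(\hat f)\ge\cgap$ forces $V_{\hat f}(x_h)-\hat f_h(x_h,\pi^*(x_h))\ge\cgap$; when they agree the integrand vanishes. Hence $B\ge\cgap\sum_h p_h$ with $p_h:=\Pr_{x_h\sim d^*_h}[\pi_{\hat f}(x_h)\ne\pi^*(x_h)]$, giving $\sum_h p_h\le 2\alpha H/\cgap$. A coupling argument finishes: run $\pi^*$ and $\pi_{\hat f}$ on shared transition randomness, so they coincide until the first disagreement, whose probability is at most $\sum_h p_h$ by a union bound along the $\pi^*$-trajectory, and after which the return gap is at most $H$. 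Thus $v^*-v^{\pi_{\hat f}}\le H\sum_h p_h\le 2\alpha H^2/\cgap$, which equals $\varepsilon$ for $\alpha=\varepsilon\,\gapq/(2H^2)$ and $\cgap=\gapq$.

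I expect the main obstacle to be the conceptual bridge in the last two steps: recognizing that pessimism together with the $d^*$-Bellman-error constraint bounds precisely the greedy-advantage term $B$, that the uniqueness-based gap lower-bounds each of its nonzero contributions by $\cgap$ (exactly where a mere tie would break the argument, matching the counterexample of \pref{sec:counter}), and that the resulting disagreement probability must be translated into suboptimality through a coupling rather than through a standard $d^{\pi_{\hat f}}$-based performance-difference expansion, which we have no way of controlling since $\Wcal$ only realizes $w^*$.
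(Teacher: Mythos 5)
Your proof is correct and follows essentially the same route as the paper's: uniform concentration, feasibility of $Q^*$, initial-state pessimism, a telescoping bound in which the gap of the prescreened $\hat f$ charges each disagreement with $\pi^*$ under $d^*$ at least $\gapq$, and an imitation-learning reduction from that disagreement probability to value suboptimality. The only cosmetic differences are that you isolate the greedy-advantage term $B$ in a separate identity rather than interleaving the gap indicators into the telescoping chain, and you phrase the final reduction as a coupling rather than the paper's decomposition of trajectory space into agreement/disagreement sets---these are equivalent.
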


\paragraph{Proof sketch of \pref{thm:main}} 
As standard, all our results depend on a high-probability concentration event, that
$\abr{\Lcal_{\Dcal}(f,w,h)-\EE[\Lcal_{\Dcal}(f,w,h)]}\le \estat$ holds for all $f\in\Fcal, w\in\Wcal, h\in[H]$ with high probability; 
the detailed expression of $\estat$ is given in \pref{lem:conc}. From \pref{lem:conc}, for any $f\in\Fcal$ that satisfies all constraints in \pref{alg:pess_alg}, we can guarantee the population loss to be small, that is, 
$\abr{\EE\sbr{\Lcal_{\Dcal}(f,w,h)}}\le \estat+\alpha.$ 

The central step of our proof is to use a telescoping argument and the gap assumption to establish the following inequality:
\begin{align*}
&V_0^*(x_0)\ge V_{\hat f}(x_0)\ge V_0^*(x_0)-H(\estat+\alpha)
\\
&\quad+\gapq \EE\left[\sum_{h=0}^{H-1}\one\{\pi_{\hat f}(x_h)\neq \pi^*(x_h)\}\mid \pi^*\right].
\end{align*}
This implies the policy deviation can be bounded as
\[
\EE\left[\sum_{h=0}^{H-1}\one\{\pi_{\hat f}(x_h)\neq \pi^*(x_h)\}\mid\pi^*\right]\le \frac{H(\estat+\alpha)}{\gapq}.
\]
The LHS of this inequality is the probability that the learned policy $\pi_{\hat f}$ disagrees with the optimal policy $\pi^*$, along the distribution induced by $\pi^*$. From here, we can apply the RL-to-supervised-learning (SL) reduction in imitation learning (e.g., Theorem 2.1 in \citet{ross2010efficient}) to translate it to the final performance difference bound between $\pi_{\hat f}$ and $\pi^*$. We also provide a different proof in \pref{app:proof_main}, which itself may be of independent interest.

\section{Robustness to Misspecification} 
\label{sec:appx_error}
We now consider the case when $Q^*$ and $w^*$ may not exactly belong to $\Fcal$ and $\Wcal$, but can be reasonably approximated up to small errors. More often than not, such robustness results in RL theory are nothing but routine exercises where the proofs are largely straightforward extensions of those for the exact case. In our case, however, the misspecification analyses reveal an interesting phenomenon of disentangling the true gap of $Q^*$ and that of $\Fcal$, and how our gap and coverage assumptions can be relaxed in nontrivial ways. 

We start with defining the approximation errors of our function classes. Inspired by \citet{xie2020q}, we define the approximation error of $\Wcal$ as 
\begin{align}
\label{eq:appx_w}
\varepsilon_{\Wcal}=~\min_{w\in\Wcal}\max_{f\in\Fcal}\max_{h\in[H]}&|\EE_{d^D_h}[w_h\cdot (f_h-\Tcal_h f_{h+1})]\notag
\\
&-\EE_{d^*_h}[f_h-\Tcal_h f_{h+1}]|
\end{align}
and use $\tilde w^*$ to denote the best approximator in $\Wcal$ that obtains the minimum. 
The expression inside the min-max-max measures the difference between $d_h^D \cdot w_h$ and $d_h^*$, using $f_h - \Tcal_h f_{h+1}$ for $f\in\Fcal$ as discriminators. When $w^* \in \Wcal$ (\pref{assum:realizablity_w}), $d_h^D \cdot w_h^* = d_h^*$ (because $w_h^*$ is defined as $d_h^*/d_h^D$), so we have $\varepsilon_{\Wcal} = 0$. However, the opposite direction is \textit{not} always true: 
it is entirely possible to achieve $\varepsilon_{\Wcal} = 0$ when $w^* \notin \Wcal$ (or even when $d_h^*(x_h,a_h)/d_h^D(x_h,a_h) = \infty$ for some $(x_h,a_h)$ and $w^*$ does not exist), as long as $d_h^D \cdot \tilde w^*_h$ and $d^*_h$ can not be distinguished by $f_h - \Tcal_h f_{h+1}$ for $f\in\Fcal$ as \emph{discriminators}.\footnote{The idea of using discriminators has also been explored in \citet{farahmand2017value,sun2019model,modi2020sample,modi2021model}, but the application is different here.} We also provide an example in \pref{app:conc_example}. Note that since our data coverage assumption is implicitly made through the realizability and boundedness of $\Wcal$ (see the discussion below \pref{assum:bound_w}), this means that our data coverage assumption is also relaxed using the information of $\Fcal$, which is a common characteristics of recent results in offline RL (e.g., \citet{xie2021bellman} also use the Bellman error class induced by the value function class as discriminators, which is similar to our definition at a high level), but not enjoyed by the concurrent work of \citet{zhan2022offline}.

For function class $\Fcal$, we define the approximation error in a way that uses $\Wcal$ as discriminators, plus a term that measures the difference under the initial state $x_0$:
\begin{align}
\label{eq:appx_f}
\varepsilon_{\Fcal}=&\min_{f\in\Fcal}\max_{w\in\Wcal}\max_{h\in[H]}(|\EE_{d^D_h}[w_h\cdot (f_h -\Tcal_h f_{h+1})]|
\notag \\
&+\abr{f_0(x_0,\pi_f(x_0))-Q^*_0(x_0,\pi^*(x_0))})
\end{align}
and use $\tilde Q^*_\Fcal$ to denote the best approximator that achieves the minimum value.
Under mild regularity assumptions on $\Wcal$,\footnote{Namely, $w \ge 0$ and $\EE_{d^D}[w] = 1$. The former is trivial and the latter can be easily verified approximately on data.} it is straightforward to show that $\varepsilon_{\Fcal}$ is weaker than $\ell_\infty$ error up to multiplicative constants:
\begin{align}\label{eq:compare_epsF}
\varepsilon_{\Fcal}\le 3\min_{f\in\Fcal}\max_{h\in[H]}\|f_h-Q_h^*\|_{\infty}, 
\end{align}
and a more detailed discussion can be found in \pref{lem:appx_f_vs_infty}. 
Similarly, we can define the function class $\Fcal(\gapmin)$ related approximation error $\varepsilon_{\Fcal(\gapmin)}$ and its best approximator $\tilde Q^*_{\Fcal(\gapmin)}$ by replacing $\Fcal$ with $\Fcal(\gapmin)$ in \Cref{eq:appx_f}.

\subsection{Estimating the Optimal Expected Return}
With the approximation error defined above, we now extend \pref{thm:find_v_star} to the approximate case. Assuming the approximation error $\varepsilon_\Fcal$ (or a reasonably tight upper bound of it) is known, we can relax the constraint to ensure that $\tilde Q^*_{\Fcal(\gapmin)}$ is not eliminated and obtain the sample complexity guarantee as in \pref{thm:find_v_star_appx}. The full proof is deferred to \pref{app:proof_find_v_star_approx}. As before, we do not need the gap assumption to identify $v^*$ approximately and can run the algorithm with $\gapmin = 0$. 

\begin{theorem}[Robust version of \pref{thm:find_v_star}]
\label{thm:find_v_star_appx}
Suppose Assumptions \ref{assum:bound_q}, \ref{assum:bound_w} hold and the total number of samples $nH$ satisfies 
\[
nH\ge \frac{8C^2H^5\log(2|\Fcal||\Wcal|H/\delta)}{\varepsilon ^2}.
\]
Then with probability $1-\delta$, running \pref{alg:pess_alg} with $\alpha=\varepsilon /(2H)+\varepsilon_{\Fcal}$ and $\gapmin=0$ guarantees \[|V_{\hat f}(x_0)-v^*| \le  \varepsilon +H\varepsilon_{\Fcal}+H\varepsilon_\Wcal.\]
\end{theorem}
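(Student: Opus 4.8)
The plan is to prove the two-sided bound $|V_{\hat f}(x_0) - v^*| \le \varepsilon + H\varepsilon_{\Fcal} + H\varepsilon_{\Wcal}$ by combining an upper bound driven by pessimism with a lower bound driven by the feasibility of the learned $\hat f$, everything conditioned on the concentration event of \pref{lem:conc}. First I would invoke \pref{lem:conc} so that, with probability at least $1-\delta$, $|\Lcal_{\Dcal}(f,w,h) - \EE[\Lcal_{\Dcal}(f,w,h)]| \le \estat$ holds simultaneously for all $f\in\Fcal$, $w\in\Wcal$, $h\in[H]$, and verify that the stated sample size forces $\estat \le \varepsilon/(2H)$. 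The key observation tying the constraints to the approximation errors is that $\EE[\Lcal_{\Dcal}(f,w,h)] = \EE_{d^D_h}[w_h(f_h - \Tcal_h f_{h+1})]$ is exactly the population average Bellman error weighted by $w_h\cdot d^D_h$, which is the object both $\varepsilon_{\Fcal}$ and $\varepsilon_{\Wcal}$ are built from. Since $\gapmin = 0$ means no prescreening ($\Fcal(\gapmin) = \Fcal$), the relevant approximator is simply $\tilde Q^*_\Fcal$.

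For the upper bound I would first certify that $\tilde Q^*_\Fcal$ is feasible. By the definition of $\varepsilon_{\Fcal}$ in \pref{eq:appx_f}, its weighted Bellman error obeys $\max_{w,h}|\EE_{d^D_h}[w_h((\tilde Q^*_\Fcal)_h - \Tcal_h (\tilde Q^*_\Fcal)_{h+1})]| \le \varepsilon_{\Fcal}$, so on the concentration event $|\Lcal_{\Dcal}(\tilde Q^*_\Fcal, w, h)| \le \varepsilon_{\Fcal} + \estat \le \varepsilon_{\Fcal} + \varepsilon/(2H) = \alpha$. Feasibility together with the pessimistic objective then gives $V_{\hat f}(x_0) = \hat f_0(x_0, \pi_{\hat f}(x_0)) \le (\tilde Q^*_\Fcal)_0(x_0, \pi_{\tilde Q^*_\Fcal}(x_0))$, and the initial-state term in \pref{eq:appx_f} bounds the right-hand side by $v^* + \varepsilon_{\Fcal}$.

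For the lower bound I would use that $\hat f$ is itself feasible and plug in the specific witness $w = \tilde w^*$ from the definition of $\varepsilon_{\Wcal}$. Feasibility and concentration give $|\EE_{d^D_h}[\tilde w^*_h(\hat f_h - \Tcal_h \hat f_{h+1})]| \le \alpha + \estat \le \varepsilon/H + \varepsilon_{\Fcal}$, and the defining inequality of $\varepsilon_{\Wcal}$ in \pref{eq:appx_w}, applied with the discriminator $\hat f \in \Fcal$, converts this to a bound on the true $\pi^*$-occupancy error, $|\EE_{d^*_h}[\hat f_h - \Tcal_h \hat f_{h+1}]| \le \varepsilon/H + \varepsilon_{\Fcal} + \varepsilon_{\Wcal}$ for each $h$. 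Summing over $h$ yields $|\sum_h \EE_{d^*_h}[\hat f_h - \Tcal_h \hat f_{h+1}]| \le \varepsilon + H\varepsilon_{\Fcal} + H\varepsilon_{\Wcal}$. The heart of the argument is a telescoping identity along the occupancy of $\pi^*$: expanding $\Tcal_h \hat f_{h+1} = R_h + \EE_{P_h}[V_{\hat f}]$ and cancelling consecutive value terms gives $\sum_h \EE_{d^*_h}[\hat f_h - \Tcal_h \hat f_{h+1}] = \hat f_0(x_0, \pi^*(x_0)) - v^* + \sum_{h=1}^{H-1}\EE_{d^*_h}[\hat f_h(x_h, \pi^*(x_h)) - \hat f_h(x_h, \pi_{\hat f}(x_h))]$, where every summand in the last term is nonpositive because $\pi_{\hat f}$ is greedy for $\hat f$. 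Hence $\hat f_0(x_0, \pi^*(x_0)) - v^* \ge -(\varepsilon + H\varepsilon_{\Fcal} + H\varepsilon_{\Wcal})$, and since $V_{\hat f}(x_0) = \max_a \hat f_0(x_0, a) \ge \hat f_0(x_0, \pi^*(x_0))$, the lower bound follows; combining it with the upper bound closes the proof.

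I expect the main obstacle to be the bookkeeping in the telescoping identity: ensuring the greedy-versus-$\pi^*$ correction terms enter with the correct (nonpositive) sign and that the boundary cases ($f_H = \zero$ and the fixed initial state $x_0$) are handled cleanly. This is exactly where the greedy structure of $\pi_{\hat f}$ is essential, and where the asymmetry between the tight upper bound ($v^* + \varepsilon_{\Fcal}$) and the weaker lower bound ($v^* - \varepsilon - H\varepsilon_{\Fcal} - H\varepsilon_{\Wcal}$) originates. The only other delicate point is confirming that the definition of $\varepsilon_{\Wcal}$ legitimately applies with $\hat f$ as discriminator, which holds because $\hat f \in \Fcal$ and the inner maximum in \pref{eq:appx_w} ranges over all of $\Fcal$; this is precisely the mechanism that lets us replace the possibly-nonexistent $w^*$ by the discriminator-based error $\varepsilon_{\Wcal}$.
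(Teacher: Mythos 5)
Your proposal is correct and follows essentially the same route as the paper's proof: certify feasibility of $\tilde Q^*_{\Fcal}$ to get the upper bound via pessimism, then use feasibility of $\hat f$, the witness $\tilde w^*$ from the definition of $\varepsilon_\Wcal$ (the paper packages this as \pref{lem:appx_w}), and a telescoping argument along $d^*$ to get the lower bound. The only difference is cosmetic — you write the telescoping as a single exact identity with explicitly nonpositive greedy-correction terms, while the paper unrolls it one step at a time as a chain of inequalities — and your final bound matches the paper's $\max\{H\varepsilon',\varepsilon_\Fcal\}$ accounting.
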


While we need the knowledge of $\varepsilon_\Fcal$ to set $\alpha$, we do not need to know $\varepsilon_\Wcal$, which shows a difference between the behaviors of $\Fcal$ and $\Wcal$. This is also the case in the next subsection where we try to learn a near-optimal policy.

\subsection{Learning a Near-Optimal Policy}
Similarly, we can also extend \pref{thm:main} to the misspecified case. Our guarantee is established with a user-specified $\gapmin$ parameter and the approximation error related to prescreened class $\Fcal(\gapmin)$. We provide the sample complexity guarantee in \pref{thm:main_appx} and the complete proof in \pref{app:proof_main_approx}.
\begin{theorem}[Robust version of \pref{thm:main}]
\label{thm:main_appx}
Suppose Assumptions \ref{assum:bound_q}, \ref{assum:bound_w} hold and the total number of samples $nH$ satisfies 
\[nH\ge \frac{8C^2H^7\log(2|\Fcal||\Wcal|H/\delta)}{\varepsilon^2 \gapmin^2}.\]
Then with probability $1-\delta$, running \pref{alg:pess_alg} with a user-specified $\gapmin$ and $\alpha=\varepsilon \gapmin/(2H^2)+\varepsilon_{\Fcal(\gapmin)}$ guarantees \[v^{\pi_{\hat f}} \ge v^*-\varepsilon  - \frac{(H^2+H)\varepsilon_{\Fcal(\gapmin)}+H^2\varepsilon_\Wcal}{\gapmin}.\]
\end{theorem}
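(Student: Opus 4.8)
The plan is to follow the same skeleton as the exact-case proof of \pref{thm:main}, but to carry along the two misspecification errors $\varepsilon_{\Fcal(\cgap)}$ and $\varepsilon_\Wcal$ and check that they land in the claimed places. As in \pref{lem:conc}, I would first condition on the high-probability event that $\abr{\Lcal_{\Dcal}(f,w,h)-\EE[\Lcal_{\Dcal}(f,w,h)]}\le\estat$ holds uniformly over $f\in\Fcal,w\in\Wcal,h\in[H]$; the stated sample size is exactly what forces $\estat\le\varepsilon\cgap/(2H^2)$, matching the $\estat$ bound used in the exact case.

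The first substantive step is feasibility of the surrogate optimizer $\tilde Q^*_{\Fcal(\cgap)}$. By the definition of $\varepsilon_{\Fcal(\cgap)}$ (\Cref{eq:appx_f} with $\Fcal$ replaced by $\Fcal(\cgap)$), its population average Bellman error satisfies $\abr{\EE_{d^D_h}[w_h((\tilde Q^*_{\Fcal(\cgap)})_h-\Tcal_h(\tilde Q^*_{\Fcal(\cgap)})_{h+1})]}\le\varepsilon_{\Fcal(\cgap)}$ for every $w,h$, and its initial value lies within $\varepsilon_{\Fcal(\cgap)}$ of $v^*$. Adding concentration, its empirical loss is at most $\varepsilon_{\Fcal(\cgap)}+\estat\le\alpha$ by the choice $\alpha=\varepsilon\cgap/(2H^2)+\varepsilon_{\Fcal(\cgap)}$, so $\tilde Q^*_{\Fcal(\cgap)}$ is feasible. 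Initial-state pessimism (minimizing $f_0(x_0,\pi_f(x_0))$ over feasible $f$) then yields the upper bound $V_{\hat f}(x_0)\le v^*+\varepsilon_{\Fcal(\cgap)}$.

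The central step is the telescoping lower bound on $V_{\hat f}(x_0)$. Evaluating the average Bellman error of $\hat f$ along $d^*$ via $\tilde w^*$ together with the discriminator-form definition of $\varepsilon_\Wcal$ (\Cref{eq:appx_w}) — applied only to the single function $\hat f$ — converts the data constraint $\abr{\Lcal_{\Dcal}(\hat f,\tilde w^*,h)}\le\alpha$ into $\abr{\EE_{d^*_h}[\hat f_h-\Tcal_h\hat f_{h+1}]}\le\alpha+\estat+\varepsilon_\Wcal$. Telescoping these residuals over $h$ (with $\hat f_H=\zero$ and $V_{\hat f}(x_0)$ surfacing at $h=0$) collapses the reward terms into $v^*$ and leaves $\sum_h\EE_{d^*_h}[V_{\hat f}(x_h)-\hat f_h(x_h,\pi^*(x_h))]$. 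Here the gap enters: because $\hat f\in\Fcal(\cgap)$ has $\gap(\hat f)\ge\cgap$, whenever $\pi_{\hat f}(x_h)\neq\pi^*(x_h)$ the per-state slack is at least $\cgap$, giving $V_{\hat f}(x_0)\ge v^*-H(\alpha+\estat+\varepsilon_\Wcal)+\cgap\sum_h\EE_{d^*_h}[\one\{\pi_{\hat f}(x_h)\neq\pi^*(x_h)\}]$. Combining with the pessimistic upper bound controls the expected disagreement $\sum_h\EE_{d^*_h}[\one\{\pi_{\hat f}(x_h)\neq\pi^*(x_h)\}]\le(\varepsilon_{\Fcal(\cgap)}+H(\alpha+\estat+\varepsilon_\Wcal))/\cgap$.

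Finally I would apply the RL-to-SL / imitation reduction (Theorem 2.1 of \citet{ross2010efficient}, or equivalently the performance difference lemma, since a single disagreement costs at most $H$ in cost-to-go), which inflates the disagreement probability by a factor $H$; substituting $\alpha$ and the sample-size-controlled $\estat\le\varepsilon\cgap/(2H^2)$ produces exactly $v^{\pi_{\hat f}}\ge v^*-\varepsilon-((H^2+H)\varepsilon_{\Fcal(\cgap)}+H^2\varepsilon_\Wcal)/\cgap$. The main conceptual obstacle — and the point this misspecification analysis is meant to surface — is that the gap is invoked on the \emph{learned} $\hat f$, whose gap is $\ge\cgap$ purely by prescreening, rather than on $Q^*$; this disentangles $\cgap$ from $\gapq$ and is what lets the bound hold with a user-specified $\cgap$ and with $Q^*$ only approximately realized. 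The routine-but-careful part is the bookkeeping of the two distinct errors: $\varepsilon_{\Fcal(\cgap)}$ enters both through feasibility (via $\alpha$, contributing the factor $H$) and through the initial-value term (contributing the extra $+1$, hence the $H^2+H$), while $\varepsilon_\Wcal$ enters only through the discriminator bound on $\hat f$ (hence $H^2$), and I would verify the sample complexity indeed delivers $\estat\le\varepsilon\cgap/(2H^2)$.
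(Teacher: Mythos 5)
Your proposal is correct and follows essentially the same route as the paper's proof: feasibility of $\tilde Q^*_{\Fcal(\cgap)}$ under the inflated threshold $\alpha$, the discriminator-based Lemma~\ref{lem:appx_w} to pass from the data constraint to the average Bellman error under $d^*$, the gap-augmented telescoping on the prescreened $\hat f$, and the factor-$H$ imitation-learning reduction. Your bookkeeping of where each error enters (the $H^2+H$ coefficient on $\varepsilon_{\Fcal(\cgap)}$ from the constraint and the initial-value slack, and $H^2$ on $\varepsilon_\Wcal$ from the discriminator step) matches the paper's derivation exactly.
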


\pref{thm:main_appx} gives us a convenient way to set the gap parameter $\gapmin$. We also provide a sample complexity guarantee (\pref{corr:main_appx}) in \pref{sec:corr_main_appx} for the case that $\gapq$ and the $\ell_\infty$ approximation error of $\Fcal$ are known.

\paragraph{Unknown Approximation Errors} Notice that in the robustness results  (\pref{thm:find_v_star_appx} and \pref{thm:main_appx}) we require the knowledge of approximation errors $\varepsilon_\Fcal$ or $\varepsilon_{\Fcal(\gapmin)}$ to set the threshold $\alpha$ in PABC (\pref{alg:pess_alg}). In \pref{app:lang} we show a variant of PABC based on Lagrangians (PABC-L; \pref{alg:pess_lang_alg}) does not require such knowledge, and still enjoys the same sample complexity guarantees. In PABC-L, the original constraints in \Cref{eq:constraint} are moved to the objective, thus the threshold $\alpha$ is no longer needed as the input. We refer the reader to \pref{app:lang} for the formal description of PABC-L and its results and proofs. 

\paragraph{Relaxed Gap Assumption} An outstanding characteristic of \pref{thm:main_appx} is that it no longer depends on $\gap(Q^*)$ explicitly, and only depends on $\gapmin$, a parameter of our choice. Therefore, it may seem to have lifted \pref{assum:gap_plus} that $\gap(Q^*) > 0$, as we can choose $\gapmin$ to be sufficiently large. However, below we show that this issue is more complicated than it may seem, and while our result does relax \pref{assum:gap_plus} in significant ways, it does so in a very nuanced manner. 

First of all, in the worst-case scenario, \pref{assum:gap_plus} is still needed to provide non-vacuous guarantees. This is because, if $Q^*$ has no gap, yet we artificially create a large $\gapmin$ in our prescreened function class $\Fcal(\gapmin)$, we could eliminate all the good approximations of $Q^*$. Among the remaining functions, the best $\ell_\infty$ approximation of $Q^*$ must have an $\ell_\infty$ error no less than $\gapmin$, and if we plug that into the $\varepsilon_{\Fcal(\gapmin)}$ term in the approximation guarantee, the $\gapmin$ on the numerator and the denominator will cancel out, leaving a constant suboptimality gap which makes the guarantee vacuous.

Having said that, the nuance here is that we do \textit{not} use the most stringent $\ell_\infty$ norm to define $\varepsilon_{\Fcal(\gapmin)}$, but rather use an average notion of error (\Cref{eq:appx_f}), which is possibly much smaller than the $\ell_\infty$ error (\Cref{eq:compare_epsF}). Therefore, there are still cases where $\gap(Q^*) = 0$ yet our result yields nontrivial guarantees. As a concrete example, imagine a $Q^*$ that has large gaps in most states, but the gap is $0$ in a few ``bad'' states. In this case, $\gap(Q^*)$ is $0$. However, there can still exist $\tilde Q_{\Fcal(\gapmin)}^*$ that approximates $Q^*$ well everywhere except on those bad states, and as long as no $w\in\Wcal$ puts significant probabilities on the bad states, we  have $\varepsilon_{\Fcal(\gapmin)} \ll \gapmin$ and hence \pref{thm:main_appx} will provide meaningful guarantees.

\section{Handling the Unknown Gap Parameter with Online Access} \label{sec:unknown_gap}
In this section, we extend the main algorithm and analyses in \pref{sec:main} in a different direction than \pref{sec:appx_error}. In particular, we are concerned about the fact that \pref{thm:main} assumes the knowledge of $\gap(Q^*)$. While it is common for offline RL algorithms to have hyperparameters that need to be tuned separately (and this is particularly the case for version-space-based algorithms \citep{jiang2017contextual, xie2021bellman}), here we show that we can address the unknown $\gap(Q^*)$ issue by a small amount of additional online interactions for Monte-Carlo policy evaluation. This is particularly interesting as our result provides an example of how one can use a small amount of online interactions to mitigate limitations in purely offline learning, a practically relevant problem that is also of great interest to the RL theory community \citep{xie2021policy}.

\begin{algorithm}[htb]
	\caption{PABC-OA (PABC with Online Access)	\label{alg:unknown_gap}}
	\begin{algorithmic}[1]
	    \STATE \textbf{Input}: function class $\Fcal$, weight function class $\Wcal$, and dataset $\Dcal$ (with size $|\Dcal_h|=n,\forall h\in[H]$).
		\FOR{$t=0,1,\ldots$} 
		\STATE Set $\mathrm{gap}^{\mathrm{guess}}_t=H/2^t$.
		\STATE Use $n$ and $\mathrm{gap}^{\mathrm{guess}}_t$ to calculate $\varepsilon_t=\sqrt{8C^2H^6\iota(t)/(n(\mathrm{gap}^{\mathrm{guess}}_t)^2)}$, where $\iota(t)=\log(24|\Fcal||\Wcal|H\cdot 2^t/\delta)$.
		\STATE Run \pref{alg:pess_alg} with $\alpha=\varepsilon_t/(2H)$ and get scalar estimation $\hat v^*_t$. \label{line:find_v}
		\STATE Run \pref{alg:pess_alg} with $\alpha=\varepsilon_t\mathrm{gap}^{\mathrm{guess}}_t/(2H^2)$ and $\gapmin=\mathrm{gap}^{\mathrm{guess}}_t$, and get policy $\hat \pi_t$. \label{line:main}
		\STATE Estimate $v^{\hat \pi_t}$ by running Monte Carlo algorithm with $\tilde O(H^3\log(1/\delta)/\varepsilon_t^2)$ online samples and denote the estimate as $\hat v^{\hat \pi_t}$. \label{line:mc}
		\IF {$\hat v^{\hat \pi_t}\ge \hat v^*_t- 3\varepsilon_t$} \label{line:cond}
		\STATE Output $\hat \pi_t$ and terminate. \label{line:check}
		\ENDIF
		\ENDFOR
	\end{algorithmic}
\end{algorithm}
As shown in \pref{alg:unknown_gap}, the algorithm PABC-OA (PABC with Online Access) proceeds iteration by iteration. We start with the maximum possible value of the unknown $\gapq$. For simplicity, we choose $H$ here, and alternatively we can also use $\max_{f\in\Fcal} \gap(f)$ which is tighter. In iteration $t$, we use $\mathrm{gap}^\mathrm{guess}_t=H/2^t$ as the guess of $\gapq$ and calculate the desired $\alpha$ according to \pref{thm:find_v_star} to estimate $v^*$ (\pref{line:find_v}), or calculate the desired $\alpha$ and $\gapmin$ according to \pref{thm:main} to find a near-optimal policy (\pref{line:main}). Finally we conduct Monte-Carlo policy evaluation with online samples (\pref{line:mc}). If the stopping condition (\pref{line:cond}) is satisfied, we are guaranteed to learn a near-optimal policy and can terminate (\pref{line:check}). Otherwise, we proceed to the next iteration, shrink our guessed value of $\mathrm{gap}^\mathrm{guess}_t$, and continue the routine. We can observe an interesting connection between \pref{thm:find_v_star} and \pref{thm:main}, and identifying $v^*$ is indeed useful.

It can be shown that \pref{alg:unknown_gap} will terminate once the guessed value $\mathrm{gap}^{\mathrm{guess}}_t=H/2^t$ drops below the true value $\gapq$, which leads to the sample complexity result in \pref{thm:main_unknown}. The formal proof can be found in \pref{app:proof_main_unknown_gap}.

\begin{theorem}[Sample complexity of learning a near-optimal policy with unknown $\gapq$]
\label{thm:main_unknown}
Suppose Assumptions \ref{assum:realizablity_q}, \ref{assum:realizablity_w}, \ref{assum:bound_q}, \ref{assum:bound_w}, \ref{assum:gap_plus} hold but $\gapq$ is unknown. Assume we have a dataset $\Dcal$ with size $n$ for each $\Dcal_h$ and additional online access to collect \[\tilde O\rbr{\frac{n\log(1/\delta)}{C^2H}}\] samples. 
Then with probability at least $1-\delta$, the output policy $\hat \pi$ from \pref{alg:unknown_gap} satisfies 
\begin{align}
\label{eq:unknonw_q_accu}
v^{\hat \pi}\ge v^* - 5\sqrt{\frac{32C^2H^6\iota(\log(2H/\gapq))}{n\gapq^2}},    
\end{align}
where $\iota(t)=\log(24|\Fcal||\Wcal|H\cdot 2^t/\delta)$.
\end{theorem}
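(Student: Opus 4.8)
The plan is to argue that PABC-OA is both \emph{sound} (whenever it halts, the returned policy is near-optimal) and that it \emph{terminates quickly} (it must halt once the gap guess drops below $\gapq$), and then to glue the two together with a single union bound over all iterations. First I would construct the good event. At iteration $t$ there are three random quantities to control: the offline estimate $\hat v^*_t$ from \pref{line:find_v}, the offline policy $\hat\pi_t$ from \pref{line:main}, and the Monte-Carlo estimate $\hat v^{\hat\pi_t}$ from \pref{line:mc}. By the choice $\varepsilon_t=\sqrt{8C^2H^6\iota(t)/(n(\mathrm{gap}^{\mathrm{guess}}_t)^2)}$ and $\mathrm{gap}^{\mathrm{guess}}_t\le H$, the sample-size hypothesis of \pref{thm:find_v_star} is met with accuracy $\varepsilon_t$, while that of \pref{thm:main} (with gap parameter $\mathrm{gap}^{\mathrm{guess}}_t$) is met whenever $\mathrm{gap}^{\mathrm{guess}}_t\le\gapq$. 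The extra $24\cdot 2^t$ inside $\iota(t)=\log(24|\Fcal||\Wcal|H\cdot 2^t/\delta)$ is exactly what lets me assign failure probability $\delta/(c\,2^t)$ to each of the three events at iteration $t$; since $\sum_t 2^{-t}=2$, a union bound over all iterations and all three events keeps the total failure probability below $\delta$. I condition on this good event henceforth.

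Next I would establish the two halves. For \textbf{soundness}, note that \pref{thm:find_v_star} gives $|\hat v^*_t-v^*|\le\varepsilon_t$ and Monte-Carlo gives $|\hat v^{\hat\pi_t}-v^{\hat\pi_t}|\le\varepsilon_t$ at \emph{every} iteration, independently of the gap guess. Hence if the stopping test $\hat v^{\hat\pi_t}\ge\hat v^*_t-3\varepsilon_t$ fires at some iteration $t'$, chaining these inequalities yields $v^{\hat\pi_{t'}}\ge\hat v^{\hat\pi_{t'}}-\varepsilon_{t'}\ge\hat v^*_{t'}-4\varepsilon_{t'}\ge v^*-5\varepsilon_{t'}$, so the output is $5\varepsilon_{t'}$-optimal. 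For \textbf{termination}, I use that once $\mathrm{gap}^{\mathrm{guess}}_t\le\gapq$ the prescreened class retains $Q^*$ and \pref{thm:main} applies, giving $v^{\hat\pi_t}\ge v^*-\varepsilon_t$; combining with the two concentration bounds shows $\hat v^{\hat\pi_t}\ge v^*-2\varepsilon_t\ge\hat v^*_t-3\varepsilon_t$, i.e., the stopping test necessarily fires. Since $\mathrm{gap}^{\mathrm{guess}}_t=H/2^t$, this happens no later than $t^\star=\lceil\log_2(H/\gapq)\rceil$.

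It remains to convert the termination index into the stated bounds. Because $\varepsilon_t^2=8C^2H^4\,4^t\,\iota(t)/n$ is increasing in $t$, terminating at any $t'\le t^\star$ yields suboptimality $5\varepsilon_{t'}\le 5\varepsilon_{t^\star}$. At $t^\star$ one has $\gapq/2<\mathrm{gap}^{\mathrm{guess}}_{t^\star}\le\gapq$ and $t^\star\le\log_2(2H/\gapq)$, so substituting $(\mathrm{gap}^{\mathrm{guess}}_{t^\star})^2>\gapq^2/4$ and $\iota(t^\star)\le\iota(\log(2H/\gapq))$ into $\varepsilon_{t^\star}$ gives precisely the target bound $5\sqrt{32C^2H^6\iota(\log(2H/\gapq))/(n\gapq^2)}$ of \Cref{eq:unknonw_q_accu}. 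For the online cost, the Monte-Carlo budget at iteration $t$ is $\tilde O(H^3\log(1/\delta)/\varepsilon_t^2)=\tilde O(n\log(1/\delta)/(C^2H\,4^t\iota(t)))$; summing the geometric series over $t=0,\dots,t^\star$ is dominated by the $t=0$ term and yields the claimed $\tilde O(n\log(1/\delta)/(C^2H))$ total.

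The step I expect to be most delicate is the union bound: the algorithm runs for a data-dependent, a priori unbounded number of iterations, so the failure probabilities of the three per-iteration events must be summed over \emph{all} potential iterations, which is only affordable because of the geometric $2^{-t}$ budgeting built into $\iota(t)$. The accompanying subtlety—easy to get wrong—is tracking which guarantees hold unconditionally (the $v^*$-estimate and the Monte-Carlo bound, which drive soundness) versus only when $\mathrm{gap}^{\mathrm{guess}}_t\le\gapq$ (the near-optimal-policy guarantee of \pref{thm:main}, which drives termination); keeping these two roles separate is exactly what makes the argument go through without ever needing to know $\gapq$.
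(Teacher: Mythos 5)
Your proposal is correct and follows essentially the same route as the paper's proof: the same per-iteration failure budget $\delta/(c\cdot 2^t)$ union-bounded via $\sum_t 2^{-t}$, the same soundness chain $v^{\hat\pi_{t'}}\ge \hat v^{\hat\pi_{t'}}-\varepsilon_{t'}\ge \hat v^*_{t'}-4\varepsilon_{t'}\ge v^*-5\varepsilon_{t'}$, the same termination argument once $\mathrm{gap}^{\mathrm{guess}}_t\le\gapq$, and the same substitution at $t^\star\le\log_2(2H/\gapq)$. The only (harmless) divergence is in the online-sample accounting, where you sum the geometrically decreasing per-iteration Monte-Carlo costs and note domination by the $t=0$ term, whereas the paper crudely bounds each iteration's cost and multiplies by the number of iterations; both yield the claimed $\tilde O(n\log(1/\delta)/(C^2H))$.
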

The suboptimality in \Cref{eq:unknonw_q_accu} has the same order (up to polylog terms) as that of running \pref{alg:pess_alg} with known $\gapq$ in \pref{thm:main}. 
If we set this value to be $\varepsilon'$, i.e., $\varepsilon':=5\sqrt{\frac{32C^2H^6\iota(\log(2H/\gapq))}{n\gapq^2}}$, then the number of required online samples is $\tilde O\rbr{\frac{H^5\log(1/\delta)}{(\varepsilon'\gapq)^2}}$, which does not depend on the complexity of the function classes $\Fcal$ and $\Wcal$.

\section{Discussion and Conclusion} \label{sec:discuss}
We conclude the paper with a detailed discussion of how our work compares to the closely related concurrent work of \citet{zhan2022offline}, which also provides a good summary of our contributions and promising future directions.

The very recent work of \citet{zhan2022offline} aims at solving the same problem:\footnote{Their results are in the discounted setting whereas ours in the finite horizon setting, but this is a superficial difference and translating each of the results into the other setting is not difficult.} offline RL under only single-policy coverage and realizability assumptions. Similar to our counterexample in \pref{sec:counter}, they also realize the difficulties in the setting where the optimal weight and value functions are realizable in a straightforward manner. Instead of making a gap assumption like we do, they attack the problem from a different angle by introducing regularization into the Lagrangian of the linear program for MDPs. 

Despite that the two approaches have some fundamental differences (which we will elaborate further below), it is still worth comparing the nature of the two results. To this end, our approach has several advantages:
\begin{enumerate}[leftmargin=*]
\item Regularization changes the definition of the value function in \citet{zhan2022offline}. In fact, the function they need to realize does not obey any form of Bellman equations, and probably should not be called value functions anymore. This makes their realizability assumption somewhat difficult to interpret and connect to the existing literature. In contrast, we work with the most standard notion of $Q^*$. 
\item Due to regularization, the policy learned by \citet{zhan2022offline} is generally suboptimal even with infinite data, so the strength of regularization needs to be carefully controlled for the bias-variance trade-off. As a result, when competing with $\pi^*$, their sample complexity rate is $O(1/\varepsilon^6)$, which is much slower than our $O(1/\varepsilon^2)$. 
\item Our coverage assumption can be significantly relaxed using the structure of $\Fcal$; see discussion in \pref{sec:appx_error}.  
While this is standard in recent offline RL works based on Bellman-completeness assumptions \citep{jin2021pessimism, xie2021bellman}, \citet{zhan2022offline}'s guarantee relies on the boundedness of the raw density ratios and does not enjoy such a relaxation.  
\end{enumerate}
That said, \citet{zhan2022offline}'s result is also attractive in several aspects:
\begin{enumerate}[leftmargin=*]
\item They do not require gap assumptions. While similar gap assumptions are standard in RL theory literature, it is unclear how prevalent it is in real problems and how algorithms that depend on gap assumptions perform in problems when the assumption is violated.
\item Our guarantees only hold if the data covers $\pi^*$ (though the notion of coverage can be relaxed using a structure of $\Fcal$, as mentioned above). In comparison, \citet{zhan2022offline} can still provide meaningful guarantees even when $\pi^*$ is not covered by data, in which case they compete with the best policy under data coverage. 
\item Regarding computation, their algorithm is a convex-concave minimax optimization problem when the function classes are convex. In comparison, the computational characteristics of our method are less clear, though we note that a Lagrangian form of our main step (\pref{line:pess_select}) (see \pref{app:lang} for details) is similar to the kind of minimax optimization commonly found in the MIS literature \citep{nachum2019dualdice,uehara2020minimax, yang2020off, jiang2020minimax}. 
\end{enumerate}
 
We reiterate that these comparisons are made only on the results themselves. The two works take fundamentally different approaches and are of independent interests. For example, despite that both works use density-ratio functions, \citet{zhan2022offline}'s method is based on the linear programming (LP)-formulation of MDPs where the optimal state-value function $V^*$ is modeled, whereas we model the optimal Q-function $Q^*$. This difference is more significant than it may seem, as the LP formulation and the Bellman optimality equations for $Q^*$ are very different foundations for designing learning algorithms, and the gap assumption only makes sense for Q-functions and cannot be used in state-value functions. That said, it will be interesting to investigate if the two works can borrow each other's ideas to address their own weaknesses, which we leave to future investigation.

\begin{acknowledgements} 
NJ acknowledges funding support from ARL Cooperative Agreement W911NF-17-2-0196, NSF IIS-2112471, NSF CAREER award, and Adobe Data Science Research Award. 
\end{acknowledgements}


\bibliography{refs}

\onecolumn
\appendix

\section{Proof of Main Results}
In this section, we provide the complete proofs of our main results in \pref{sec:main}. We start with some helper lemmas in \pref{app:helper_lemma_main}. Then we show the proof of \pref{thm:find_v_star} in \pref{app:proof_find_v_star}. Finally, we provide the proof of \pref{thm:main} in \pref{app:proof_main}.

\subsection{Helper Lemmas}
\label{app:helper_lemma_main}
\begin{lemma}[Concentration]
\label{lem:conc}
With probability at least $1-\delta$, for any $f\in\Fcal,w\in\Wcal,h\in[H]$ we have,
\[
\abr{\Lcal_{\Dcal}(f,w,h)-\EE[\Lcal_{\Dcal}(f,w,h)]}\le 2CH\sqrt{\frac{\log(2|\Fcal||\Wcal|H/\delta)}{2n}}=:\estat.
\]
\end{lemma}
\paragraph{Remark}
Here we apply Hoeffding's inequality to show the concentration result. Similar as \citet{xie2020q}, we can also apply Bernstein's inequality, but the dominating rate would be the same.
\begin{proof}
Firstly, we fix
$f\in\Fcal,w\in\Wcal,h\in[H]$. From the boundedness assumptions (\pref{assum:bound_q} and \pref{assum:bound_w}), for any sample $(x_h^{(i)},a_h^{(i)},r_h^{(i)},x_{h+1}^{(i)})$ in the dataset, we have
\begin{align*}
\abr{w_h(x_h^{(i)},a_h^{(i)})(f_h(x_h^{(i)},a_h^{(i)})-r_h^{(i)}- f_h(x_{h+1}^{(i)},\pi_f(x_{h+1}^{(i)})))}\le CH.
\end{align*}
Then since our dataset is i.i.d., applying Hoeffding's inequality yields that with probability at least $1-\delta/(|\Fcal||\Wcal|H)$,
\begin{align*}
\abr{\Lcal_{\Dcal}(f,w,h)-\EE[\Lcal_{\Dcal}(f,w,h)]}\le 2CH\sqrt{\frac{\log(2|\Fcal||\Wcal|H/\delta)}{2n}}.
\end{align*}
Finally, union bounding over $f\in\Fcal,w\in\Wcal,h\in[H]$ gives us that with probability at least $1-\delta$, for any $f\in\Fcal,w\in\Wcal,h\in[H]$,
\[
\abr{\Lcal_{\Dcal}(f,w,h)-\EE[\Lcal_{\Dcal}(f,w,h)]}\le 2CH\sqrt{\frac{\log(2|\Fcal||\Wcal|H/\delta)}{2n}}:=\estat.
\]
This completes the proof.
\end{proof}

\begin{lemma}[Population loss and average Bellman error]
	\label{lem:trans}
For any $f\in\Fcal,w\in\Wcal,h\in[H]$, we have
\[
\EE[\Lcal_{\Dcal}(f,w,h)]=\EE_{(x_h,a_h)\sim d^D_h}[w_h(x_h,a_h)(f_h(x_h,a_h)- (\Tcal_h f_{h+1})(x_h,a_h)))]
	\]
and
\[
\EE[\Lcal_{\Dcal}(f,w^*,h)]=\Ecal(f,\pi^*,h)=\EE[f_h(x_h,a_h)-R_h(x_h,a_h)-f_{h+1}(x_{h+1},a_{h+1})\mid a_{0:h}\sim\pi^*, a_{h+1}\sim \pi_f],
	\]
where $\Ecal(\cdot)$ is the Q-type average Bellman error \citep{jin2021bellman,du2021bilinear} \[\Ecal(f,\pi,h)=\EE[f_h(x_h,a_h)-R_h(x_h,a_h)-f_{h+1}(x_{h+1},a_{h+1})\mid a_{0:h}\sim\pi, a_{h+1}\sim \pi_f]. 	\]
\end{lemma}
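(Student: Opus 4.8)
The plan is to establish both identities by taking the expectation of the empirical average $\Lcal_{\Dcal}(f,w,h)$ over the i.i.d.\ sampling process and then simplifying by conditioning on $(x_h,a_h)$. Since the $n$ tuples in $\Dcal_h$ are i.i.d., linearity of expectation immediately gives $\EE[\Lcal_{\Dcal}(f,w,h)] = \EE[w_h(x_h,a_h)(f_h(x_h,a_h)-r_h-f_{h+1}(x_{h+1},\pi_f(x_{h+1})))]$, where now $(x_h,a_h)\sim d^D_h$, $r_h = R_h(x_h,a_h)$, and $x_{h+1}\sim P_h(\cdot\mid x_h,a_h)$. Everything after this reduces to rewriting the inner conditional expectation.

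For the first identity, I would condition on $(x_h,a_h)$ and push the remaining randomness onto $x_{h+1}$ only, since the weight $w_h(x_h,a_h)$ and the reward $r_h=R_h(x_h,a_h)$ are both deterministic given $(x_h,a_h)$. The key observation is that $f_{h+1}(x_{h+1},\pi_f(x_{h+1})) = \max_{a_{h+1}} f_{h+1}(x_{h+1},a_{h+1})$ by the definition of the greedy policy $\pi_f$, so $\EE_{x_{h+1}\sim P_h(\cdot\mid x_h,a_h)}[f_{h+1}(x_{h+1},\pi_f(x_{h+1}))]$ is exactly the transition part of $(\Tcal_h f_{h+1})(x_h,a_h)$. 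Combining this with $r_h = R_h(x_h,a_h)$ recovers $f_h(x_h,a_h)-(\Tcal_h f_{h+1})(x_h,a_h)$ inside the outer expectation over $d^D_h$, which is the first claim.

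For the second identity, I would specialize $w=w^*$ and use the change-of-measure relation $w^*_h = d^*_h/d^D_h$ to convert the expectation under $d^D_h$ into one under $d^*_h$; concretely, $\EE_{(x_h,a_h)\sim d^D_h}[w^*_h(x_h,a_h)\,g(x_h,a_h)] = \EE_{(x_h,a_h)\sim d^*_h}[g(x_h,a_h)]$ for any $g$, where validity on the support of $d^*_h$ is guaranteed by the $\pi^*$-concentrability implied by \pref{assum:realizablity_w} and \pref{assum:bound_w}. Applying this to $g = f_h - \Tcal_h f_{h+1}$ and recalling that $d^*_h$ is the occupancy of $\pi^*$ (i.e.\ $a_{0:h}\sim\pi^*$) turns the first identity into an expectation over a trajectory rolled out by $\pi^*$ through step $h$. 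Finally, re-expanding $\Tcal_h f_{h+1}$ and using the greedy identity in reverse—writing $\max_{a_{h+1}} f_{h+1}(x_{h+1},a_{h+1}) = f_{h+1}(x_{h+1},a_{h+1})$ with $a_{h+1}\sim\pi_f$—yields exactly $\Ecal(f,\pi^*,h)$.

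I expect no substantial obstacle here, as the argument is essentially a bookkeeping exercise; the only points requiring care are (i) the equivalence between the greedy evaluation $f_{h+1}(x_{h+1},\pi_f(x_{h+1}))$ and the $\max$ inside $\Tcal_h$, which is precisely what lets a single transition sample stand in for the Bellman backup and thereby sidesteps the double-sampling difficulty, and (ii) confirming the change of measure for $w^*$ is legitimate on the support of $d^*_h$, so that the two representations agree even at state-action pairs where $d^*_h=0$.
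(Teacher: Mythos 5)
Your proposal is correct and follows essentially the same route as the paper's proof: linearity of expectation over the i.i.d.\ tuples, conditioning on $(x_h,a_h)$ and using the greedy identity $f_{h+1}(x_{h+1},\pi_f(x_{h+1}))=\max_{a_{h+1}}f_{h+1}(x_{h+1},a_{h+1})$ to recover $\Tcal_h f_{h+1}$, and then the change of measure $d^D_h\cdot w^*_h=d^*_h$ for the second identity. Your extra remark about verifying the change of measure on the support of $d^*_h$ is a reasonable point of care that the paper's proof takes for granted.
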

\begin{proof}
These equations can be simply shown from the data generating process and the definition of population loss and empirical loss. For any $f\in\Fcal,w\in\Wcal,h\in[H]$, we have
\begin{align*}
	&~\EE[\Lcal_{\Dcal}(f,w,h)]\\
	=&~\EE\sbr{\frac{1}{n}\sum_{i=1}^n[w_h(x_h^{(i)},a_h^{(i)})(f_h(x_h^{(i)},a_h^{(i)})-r_h^{(i)}-f_{h+1}(x_{h+1}^{(i)},\pi_f(x_{h+1}^{(i)})))]}
	\\
	=&~\EE_{(x_h,a_h)\sim d^D_h,x_{h+1}\sim P_h(\cdot\mid x_h,a_h)}[w_h(x_h,a_h)(f_h(x_h,a_h)-r_h-f_{h+1}(x_{h+1},\pi_f(x_{h+1})))]
	\\
	=&~\EE_{(x_h,a_h)\sim d^D_h}[w_h(x_h,a_h)(f_h(x_h,a_h)-R_h(x_h,a_h)-\EE_{x_{h+1}\sim P_h(\cdot\mid x_h,a_h)}[f_{h+1}(x_{h+1},\pi_f(x_{h+1}))])]
	\\
	=&~\EE_{(x_h,a_h)\sim d^D_h}[w_h(x_h,a_h)(f_h(x_h,a_h)- (\Tcal_h f_{h+1})(x_h,a_h)))].
\end{align*}

For any $f\in\Fcal,h\in[H]$, we similarly have
\begin{align*}
	\EE[\Lcal_{\Dcal}(f,w^*,h)]&=~\EE\sbr{\frac{1}{n}\sum_{i=1}^n[w_h^*(x_h^{(i)},a_h^{(i)})(f_h(x_h^{(i)},a_h^{(i)})-r_h^{(i)}-f_{h+1}(x_{h+1}^{(i)},\pi_f(x_{h+1}^{(i)})))]}
	\\
	&=~\EE_{(x_h,a_h)\sim d^D_h,x_{h+1}\sim P_h(\cdot\mid x_h,a_h)}[w_h^*(x_h,a_h)(f_h(x_h,a_h)-r_h-f_{h+1}(x_{h+1},\pi_f(x_{h+1})))]
	\\
	&=~\EE_{(x_h,a_h)\sim d^*_h,x_{h+1}\sim P_h(\cdot\mid x_h,a_h)}[f_h(x_h,a_h)-r_h-f_{h+1}(x_{h+1},\pi_f(x_{h+1}))]
	\\
	&=~\EE[f_h(x_h,a_h)-R_h(x_h,a_h)-f_{h+1}(x_{h+1},a_{h+1})\mid a_{0:h}\sim\pi^*, a_{h+1}\sim \pi_f].
\end{align*}
This completes the proof.
\end{proof}

\subsection{Proof of Theorem~\ref{thm:find_v_star}}
\label{app:proof_find_v_star}
\begin{theorem*}[Sample complexity of identifying $v^*$, restatement of \pref{thm:find_v_star}]
Suppose  \pref{assum:realizablity_q}, \pref{assum:realizablity_w}, \pref{assum:bound_q}, \pref{assum:bound_w} hold and the total number of samples $nH$ satisfies \[nH\ge \frac{8C^2H^5\log(2|\Fcal||\Wcal|H/\delta)}{\varepsilon^2}.
\]
Then with probability at least $1-\delta$, running \pref{alg:pess_alg} with $\gapmin=0$ and $\alpha=\varepsilon/(2H)$ guarantees 
\[|V_{\hat f}(x_0)-v^*|\le \varepsilon.
\]
\end{theorem*}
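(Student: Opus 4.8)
The plan is to work on the high-probability event of \pref{lem:conc} (which holds with probability at least $1-\delta$) and then sandwich the returned value $V_{\hat f}(x_0) = \hat f_0(x_0,\pi_{\hat f}(x_0))$ between $v^*-\varepsilon$ and $v^*$. Because we run with $\gapmin=0$ and $\gap(f)\ge 0$ for every $f$ (each per-state gap in \pref{def:gap} is nonnegative), the prescreening step is vacuous, so $\Fcal(\gapmin)=\Fcal$ and the optimization in \pref{line:pess_select} ranges over the full class.

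\emph{Feasibility of $Q^*$ and the upper bound.} First I would use the first identity of \pref{lem:trans} to write $\EE[\Lcal_\Dcal(Q^*,w,h)] = \EE_{d^D_h}[w_h(Q^*_h-\Tcal_h Q^*_{h+1})]$, which vanishes for all $w,h$ by the Bellman optimality equations $Q^*_h=\Tcal_h Q^*_{h+1}$. Combined with \pref{lem:conc} this gives $\abr{\Lcal_\Dcal(Q^*,w,h)}\le\estat$ for all $w,h$, and substituting the sample-size assumption into the definition of $\estat$ shows $\estat\le\varepsilon/(2H)=\alpha$, so $Q^*$ is feasible. Since $\hat f$ minimizes $f_0(x_0,\pi_f(x_0))$ over feasible $f$ and $Q^*_0(x_0,\pi^*(x_0))=v^*$, I immediately obtain $V_{\hat f}(x_0)\le v^*$.

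\emph{The lower bound (main obstacle).} For the reverse inequality I would exploit feasibility of $\hat f$ evaluated at the realizable weight $w^*\in\Wcal$ (\pref{assum:realizablity_w}): the constraint gives $\abr{\Lcal_\Dcal(\hat f,w^*,h)}\le\alpha$, and \pref{lem:conc} together with the second identity of \pref{lem:trans} converts this into a bound on the Q-type average Bellman error along $\pi^*$, namely $\abr{\Ecal(\hat f,\pi^*,h)}\le\alpha+\estat$ for every $h$. The crux is a telescoping argument that aggregates these per-step errors into a global value comparison. Writing $\Ecal(\hat f,\pi^*,h)=\EE_{a_{0:h}\sim\pi^*}[\hat f_h(x_h,a_h)-R_h(x_h,a_h)-V_{\hat f}(x_{h+1})]$ and summing over $h$ (using $\hat f_H=\zero$, hence $V_{\hat f}(x_H)=0$), the reward terms collapse to $v^*$ and the $\hat f$-terms regroup as $\EE_{\pi^*}[\hat f_0(x_0,a_0)]+\sum_{h=1}^{H-1}\EE_{\pi^*}[\hat f_h(x_h,a_h)-V_{\hat f}(x_h)]$. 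The delicate point is that on the $\pi^*$-trajectory $a_h$ is drawn from $\pi^*$ rather than $\pi_{\hat f}$, so $\hat f_h(x_h,a_h)\le V_{\hat f}(x_h)=\max_a\hat f_h(x_h,a)$; dropping these nonpositive terms and bounding $\EE_{\pi^*}[\hat f_0(x_0,a_0)]\le V_{\hat f}(x_0)$ yields $v^*\le V_{\hat f}(x_0)+\sum_h\abr{\Ecal(\hat f,\pi^*,h)}\le V_{\hat f}(x_0)+H(\alpha+\estat)$, i.e.\ $V_{\hat f}(x_0)\ge v^*-H(\alpha+\estat)$.

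\emph{Combining.} Finally, since $\estat\le\alpha=\varepsilon/(2H)$ we have $H(\alpha+\estat)\le\varepsilon$, so $v^*-\varepsilon\le V_{\hat f}(x_0)\le v^*$ and therefore $\abr{V_{\hat f}(x_0)-v^*}\le\varepsilon$. The only genuinely non-routine ingredient is orienting the telescope so that the greedy maximum in $V_{\hat f}$ works in our favor in both directions---through optimality of the objective for the upper bound and through the monotonicity $\hat f_h\le V_{\hat f}$ for the lower bound---while everything else reduces to plugging in the concentration rate and the stated sample size.
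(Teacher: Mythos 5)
Your proposal is correct and follows essentially the same route as the paper's proof: feasibility of $Q^*$ via \pref{lem:conc} and \pref{lem:trans} plus pessimism gives the upper bound, and feasibility of $\hat f$ at $w^*$ combined with a telescoping of the average Bellman errors along $\pi^*$ (using the greedy-max monotonicity $\hat f_h(x_h,\pi^*(x_h))\le V_{\hat f}(x_h)$) gives the lower bound. The only difference is cosmetic: you sum the per-step errors and regroup once, whereas the paper unrolls the telescope level by level.
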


\begin{proof}
From our choice of $n$ and \pref{lem:conc}, with probability at least $1-\delta$, for any $f\in\Fcal,w\in\Wcal,h\in[H]$, we have
\[\abr{\Lcal_{\Dcal}(f,w,h)-\EE[\Lcal_{\Dcal}(f,w,h)]}\le\estat\le\varepsilon/(2H).\]
Throughout the proof, we condition on this high probability event. 

From \pref{lem:trans}, for any $w\in\Wcal,h\in[H]$, we have
\begin{align*}
\EE[\Lcal_{\Dcal}(Q^*,w,h)]&=~\EE_{(x_h,a_h)\sim d^D_h}[w_h(x_h,a_h)(Q^*_h(x_h,a_h)-\Tcal_h Q^*_{h+1}(x_h,a_h)]
\\
&=~\EE_{(x_h,a_h)\sim d^D_h}[w_h(x_h,a_h)\cdot 0]
\\
&=~0.
\end{align*}
Therefore, we further have
\[
\Lcal_\Dcal(Q^*,w,h)\le \EE[\Lcal_\Dcal(Q^*,w,h)]+\estat\le\varepsilon/(2H)=\alpha,
\]
which means $Q^*$ satisfies all the constraints.

Then we show that any value function satisfying all constraints (though it may have large average Bellman errors under some distributions) can not be much more pessimistic than $Q^*$. 

From \pref{lem:conc} and \pref{lem:trans}, we know that for any $f\in\Fcal,h\in[H]$,
\begin{align*}
&~\abr{\Ecal(f,\pi^*,h)}
\\
=&~|\EE[f_h(x_h,a_h)-R_h(x_h,a_h)-f_{h+1}(x_{h+1},a_{h+1})\mid a_{0:h}\sim\pi^*, a_{h+1}\sim \pi_f]|
\\
=&~|\EE[\Lcal_{\Dcal}(f,w^*,h)]|
\\
\le&~ \Lcal_{\Dcal}(f,w^*,h)+\estat
\\
\le&~ \alpha+\estat\le \varepsilon/H.
\end{align*}
Therefore, we have
\begin{align*}
V_f(x_0)&=~f_0(x_0,\pi_f(x_0))
\\
&\ge~f_0(x_0,\pi^*(x_0))
\\
&\ge~\EE[R_0(x_0,a_0)+f_1(x_1,a_1)\mid a_{0}\sim\pi^*,a_{1}\sim\pi_f]- \varepsilon/H \tag{$|\Ecal(f,\pi^*,0)|\le \varepsilon/H$}
\\
&\ge~\EE[R_0(x_0,a_0)\mid a_{0}\sim\pi^*]+\EE[f_1(x_1,a_1)\mid a_{0:1}\sim\pi^*] - \varepsilon/H
\\
&\ge~\EE[R_0(x_0,a_0)\mid a_{0}\sim\pi^*]+\EE[R_1(x_1,a_1)+f_2(x_2,a_2)\mid a_{0:1}\sim\pi^*,a_2\sim \pi_f]-2 \varepsilon/H \tag{$|\Ecal(f,\pi^*,1)|\le \varepsilon/H$}
\\
&\ge~\ldots\\
&\ge~\EE\left[\sum_{h=0}^{H-1}R_h(x_h,a_h)\mid a_{0:H-1}\sim\pi^*\right]-H\times  \varepsilon/H=V^*_0(x_0)- \varepsilon.
\end{align*}
Combining the two arguments above, we know that the pessimistic value function $\hat f$ found by the algorithm satisfies \[v^*-\varepsilon=V^*_0(x_0)-\varepsilon  \le V_{\hat f}(x_0) \le V^*_0(x_0)=v^*,\]
where the second inequality is due to pessimism. This completes the proof.
\end{proof}

\subsection{Proof of Theorem~\ref{thm:main}}
\label{app:proof_main}
\begin{theorem*}[Sample complexity of learning a near-optimal policy, restatement of \pref{thm:main}]
	Suppose  \pref{assum:realizablity_q}, \pref{assum:realizablity_w}, \pref{assum:bound_q}, \pref{assum:bound_w}, \pref{assum:gap_plus} hold and the total number of samples $nH$ satisfies 
	\[nH\ge \frac{8C^2H^7\log(2|\Fcal||\Wcal|H/\delta)}{\varepsilon^2 \gapq^2}.
	\]
	Then with probability at least $1-\delta$, running \pref{alg:pess_alg} with $\alpha=\varepsilon\gapq/(2H^2)$ and $\gapmin=\gapq$ guarantees 
	\[
	v^{\pi_{\hat f}} \ge v^*-\varepsilon.
	\]
\end{theorem*}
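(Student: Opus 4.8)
The plan is to build on the machinery already set up for \pref{thm:find_v_star}, since the statistical concentration (\pref{lem:conc}) and the translation of the empirical loss into the average Bellman error (\pref{lem:trans}) carry over verbatim. First I would condition on the high-probability event from \pref{lem:conc} with the new sample size, which (given $\alpha=\varepsilon\gapq/(2H^2)$ and $\gapmin=\gapq$) guarantees $\estat \le \varepsilon\gapq/(2H^2)$ and hence $\estat+\alpha \le \varepsilon\gapq/H^2$. Because $Q^*$ has zero average Bellman error under every $w\in\Wcal$ and $\gap(Q^*)\ge\gapmin$ by \pref{assum:gap_plus}, $Q^*$ survives both the prescreening step and the constraint, so $Q^* \in \Fcal(\gapmin)$ is feasible. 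By pessimism (the objective minimizes $f_0(x_0,\pi_f(x_0))$), the selected $\hat f$ therefore satisfies $V_{\hat f}(x_0) \le V^*_0(x_0)$, which will anchor one side of the telescoping bound.

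The central step is the telescoping argument producing the displayed two-sided inequality. Evaluating the average Bellman error of $\hat f$ along $\pi^*$ via the $w^*$ constraint gives $|\Ecal(\hat f,\pi^*,h)| \le \estat+\alpha$ for each $h$, exactly as in the proof of \pref{thm:find_v_star}. The new ingredient is to extract a \emph{gain} term whenever $\pi_{\hat f}$ disagrees with $\pi^*$. At each layer I would decompose $\EE[\hat f_h(x_h,\pi_{\hat f}(x_h)) - \hat f_h(x_h,\pi^*(x_h)) \mid a_{0:h-1}\sim\pi^*]$; since $\pi_{\hat f}$ is greedy for $\hat f$ and $\hat f\in\Fcal(\gapmin)$ has gap at least $\gapq$, this quantity is at least $\gapq\,\Pr[\pi_{\hat f}(x_h)\neq\pi^*(x_h)\mid\pi^*]$. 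Telescoping the identity $V_{\hat f}(x_0)=\hat f_0(x_0,\pi_{\hat f}(x_0))\ge \hat f_0(x_0,\pi^*(x_0))$ down the trajectory, accumulating the per-layer Bellman-error slack $\estat+\alpha$ and the per-layer gap gain, yields
\begin{align*}
V_{\hat f}(x_0)\ge V^*_0(x_0)-H(\estat+\alpha)+\gapq\,\EE\!\left[\sum_{h=0}^{H-1}\one\{\pi_{\hat f}(x_h)\neq\pi^*(x_h)\}\mid\pi^*\right].
\end{align*}
Combining with $V_{\hat f}(x_0)\le V^*_0(x_0)$ and rearranging bounds the expected disagreement by $H(\estat+\alpha)/\gapq$.

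The final step converts this state-distribution disagreement bound into a value suboptimality bound via the imitation-learning RL-to-SL reduction (Theorem 2.1 of \citet{ross2010efficient}): the performance gap $v^*-v^{\pi_{\hat f}}$ is at most $H$ times the expected per-step disagreement along $\pi^*$, giving $v^*-v^{\pi_{\hat f}}\le H\cdot H(\estat+\alpha)/\gapq = H^2(\estat+\alpha)/\gapq$. Plugging in $\estat+\alpha\le \varepsilon\gapq/H^2$ yields $v^*-v^{\pi_{\hat f}}\le\varepsilon$, as desired.

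I expect the main obstacle to be the telescoping step, specifically the clean extraction of the $\gapq\,\one\{\pi_{\hat f}\neq\pi^*\}$ gain at each layer. The subtlety is that the gap lower-bounds the value \emph{advantage of $\hat f$'s own greedy action over any alternative under $\hat f$}, so one must carefully argue that switching from $\pi^*(x_h)$ to $\pi_{\hat f}(x_h)$ within $\hat f$ contributes at least $\gapq$ precisely on the disagreement event, while the Bellman-error slack controls the mismatch between $\hat f$'s one-step values and the true environment dynamics; keeping the roles of $\pi^*$ (for rolling in the state distribution) and $\pi_{\hat f}$ (for greedy action selection) separate throughout the recursion is where the bookkeeping is delicate. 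The alternative proof promised in \pref{app:proof_main} presumably repackages this same accounting without invoking \citet{ross2010efficient}.
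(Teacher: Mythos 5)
Your proposal is correct and follows essentially the same route as the paper's proof: the same feasibility-of-$Q^*$ and pessimism anchor, the same per-layer extraction of a $\gapq\,\one\{\pi_{\hat f}(x_h)\neq\pi^*(x_h)\}$ gain inside the telescoping of $\Ecal(\hat f,\pi^*,h)$, and the same disagreement bound $\varepsilon/H$. The only difference is cosmetic: for the last step you invoke Theorem 2.1 of \citet{ross2010efficient} directly (as the paper's proof sketch also suggests), whereas the appendix proof substitutes a self-contained trajectory decomposition into the sets $\Ccal_1,\Ccal_2,\Ccal_3$ that yields the identical $H\cdot(\varepsilon/H)$ conversion.
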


\begin{proof}	
From our choice of $n$ and \pref{lem:conc}, we know that with probability at least $1-\delta$, for any $f\in\Fcal,w\in\Wcal,h\in[H]$, we have
\[\abr{\Lcal_{\Dcal}(f,w,h)-\EE[\Lcal_{\Dcal}(f,w,h)]}\le\estat\le\varepsilon\gapq/(2H^2).\]
Throughout the proof, we condition on this high probability event. 

From the definition of $\gapq$, we know that prescreening will not eliminate $Q^*$, i.e., $Q^*\in\Fcal(\gapq)$. Then similar as the proof of \pref{thm:find_v_star}, we have 
\[
\Lcal_\Dcal(Q^*,w,h)\le \EE[\Lcal_\Dcal(Q^*,w,h)]+\estat=\estat\le\varepsilon\gapq/(2H^2)=\alpha,
\]
which means that $Q^*$ satisfies all the constraints.

For any $f\in\Fcal(\gapq)$ that satisfies all the constraints and any $h\in[H]$, we have
\begin{align*}
&~\Ecal(f,\pi^*,h)
\\
=&~|\EE[f_h(x_h,a_h)-R_h(x_h,a_h)-f_{h+1}(x_{h+1},a_{h+1})\mid a_{0:h}\sim\pi^*, a_{h+1}\sim \pi_f]|
\\
=&~|\EE[\Lcal_{\Dcal}(f,w^*,h)]|
\\
\le&~ \Lcal_{\Dcal}(f,w^*,h)+\estat
\\
\le&~\alpha+\estat
\\
\le&~\varepsilon\gapq/H^2.
\end{align*}
Now we have the following stronger result compared with the proof of \pref{thm:find_v_star}
\begin{align*}
	&~V_f(x_0)\\
	=&~f_0(x_0,\pi_f(x_0))
	\\
	\ge&~f_0(x_0,\pi^*(x_0)) + \gapq \one\{\pi_f(x_0)\neq \pi^*(x_0)\}
	\\
	\ge&~\EE[R_0(x_0,a_0)+f_1(x_1,a_1)\mid a_{0}\sim\pi^*,a_{1}\sim\pi_f]\\
	&\quad + \gapq \one\{\pi_f(x_0)\neq \pi^*(x_0)\}-\varepsilon \gapq/H^2 \tag{$|\Ecal(f,\pi^*,0)|\le \varepsilon \gapq/H^2$}
	\\
	\ge&~\EE[R_0(x_0,a_0)\mid a_{0}\sim\pi^*]+\EE[f_1(x_1,\pi^*(x_1))+\gapq \one\{\pi_f(x_1)\neq \pi^*(x_1)\}\mid a_{0}\sim\pi^*] \\
	&\quad + \gapq \one\{\pi_f(x_0)\neq \pi^*(x_0)\}-\varepsilon \gapq/H^2
	\\
	=&~\EE[R_0(x_0,a_0)\mid a_{0}\sim\pi^*]+\EE[f_1(x_1,a_1)\mid a_{0:1}\sim\pi^*]+\gapq\EE[\one\{\pi_f(x_1)\neq \pi^*(x_1)\}\mid a_{0}\sim\pi^*] \\
	&\quad + \gapq \one\{\pi_f(x_0)\neq \pi^*(x_0)\}-\varepsilon \gapq/H^2
	\\
	\ge&~\EE[R_0(x_0,a_0)\mid a_{0}\sim\pi^*]+\EE[R_1(x_1,a_1)+f_2(x_2,a_2)\mid a_{0:1}\sim\pi^*,a_2\sim \pi_f]\\
	& \quad  + \gapq [\one\{\pi_f(x_0)\neq \pi^*(x_0)\}+\EE[\one\{\pi_f(x_1)\neq \pi^*(x_1)\}\mid a_{0}\sim \pi^*\}]]
	\\
	& \quad-2\varepsilon \gapq/H^2 \tag{$|\Ecal(f,\pi^*,1)|\le \varepsilon \gapq/H^2$}
	\\
	\ge&~\ldots\\
	\ge&~\EE\left[\sum_{h=0}^{H-1}R_h(x_h,a_h)\mid a_{0:H-1}\sim\pi^*\right]+\gapq \EE\left[\sum_{h=0}^{H-1}\one\{\pi_f(x_h)\neq \pi^*(x_h)\}\mid a_{0:H-1}\sim\pi^*\right]\\
	&\quad-H\times \varepsilon \gapq/H^2\\
	=&~V^*_0(x_0)+\gapq \EE\left[\sum_{h=0}^{H-1}\one\{\pi_f(x_h)\neq \pi^*(x_h)\}\mid a_{0:H-1}\sim\pi^*\right]- \varepsilon \gapq/H.
\end{align*}
This implies the pessimistic value function $\hat f$ found by the \pref{alg:pess_alg} satisfies
\[
	V^*_0(x_0) \ge V_{\hat f}(x_0)\ge V^*_0(x_0)+\gapq \EE\left[\sum_{h=0}^{H-1}\one\{\pi_{\hat f}(x_h)\neq \pi^*(x_h)\}\mid a_{0:H-1}\sim\pi^*\right]- \varepsilon \gapq/H
\]
and thus
\begin{equation}
\label{eq:sl_error}
\EE\left[\sum_{h=0}^{H-1}\one\{\pi_{\hat f}(x_h)\neq \pi^*(x_h)\}\mid a_{0:H-1}\sim\pi^*\right]\le \varepsilon/H.
\end{equation}

On the other hand, define each trajectory $\tau$ as $(x_0,a_0,r_0,\ldots,x_{H-1},a_{H-1},r_{H-1},x_H)$, the return of $\tau$ as $\mathrm{Return}(\tau)=r_0+\ldots+r_{H-1}$, and the probability of $\tau$ under policy $\pi$ (i.e., $a_h=\pi(x_h),\forall h\in[H]$) as $\Pr\nolimits_\pi(\tau)$. For any  $f\in\Fcal$, we can decompose the entire trajectory space into three disjoint sets $\Ccal_1=\{\tau=(x_0,a_0,r_0,\ldots,x_{H-1},a_{H-1},r_{H-1},x_H):\forall h\in[H],a_h=\pi^*(x_h)=\pi_f(x_h)\}$,  $\Ccal_2=\{\tau=(x_0,a_0,r_0,\ldots,x_{H-1},a_{H-1},r_{H-1},x_H):\forall h\in[H],a_h=\pi^*(x_h),\exists h\in[H],\pi_f(x_h)\neq\pi^*(x_h)\}$, $\Ccal_3=(\Ccal_1\bigcup\Ccal_2)^\complement$.

Then we calculate $V^{\pi^*}$ and $V^{\pi_f}$ with the definition of these three sets 
\begin{align*}
	V^{\pi^*}_0(x_0)&=~\sum_{\tau\in\Ccal_1\bigcup\Ccal_2\bigcup\Ccal_3}\Pr\nolimits_{\pi^*}(\tau)\text{Return}(\tau)
	\\
	&=~\sum_{\tau\in\Ccal_1}\Pr\nolimits_{\pi^*}(\tau)\text{Return}(\tau)+\sum_{\tau\in\Ccal_2}\Pr\nolimits_{\pi^*}(\tau)\text{Return}(\tau) \tag{Because  $\pi^*$ is greedy policy, any trajectory $\tau\in\Ccal_3$ has 0 probability}
	\\
	&=~\sum_{\tau\in\Ccal_1}\Pr\nolimits_{\pi_f}(\tau)\text{Return}(\tau)+\sum_{\tau\in\Ccal_2}\Pr\nolimits_{\pi^*}(\tau)\text{Return}(\tau) \tag{Definition of $\Ccal_1$}
	\\
	&\le~\sum_{\tau\in\Ccal_1}\Pr\nolimits_{\pi_f}(\tau)\text{Return}(\tau) +\sum_{\tau\in\Ccal_2}\Pr\nolimits_{\pi^*}(\tau)H \tag{$\text{Return}(\tau)\le H$}
	\\
	&\le~\sum_{\tau\in\Ccal_1\bigcup\Ccal_2\bigcup\Ccal_3}\Pr\nolimits_{\pi_f}(\tau)\text{Return}(\tau) +\sum_{\tau\in\Ccal_2}\Pr\nolimits_{\pi^*}(\tau)H \tag{$\text{Return}(\tau)\ge 0$}
	\\
	&=~V^{\pi_f}_0(x_0) +\sum_{\tau\in\Ccal_2}\Pr\nolimits_{\pi^*}(\tau)H.
\end{align*}
It remains to show that $\Pr\nolimits_{\pi^*}(\Ccal_2)=\sum_{\tau\in\Ccal_2}\Pr\nolimits_{\pi^*}(\tau)$ is small. From the definition, any trajectory $\tau=(x_0,a_0,r_0,\ldots,x_{H-1},a_{H-1},r_{H-1},x_H)\in \Ccal_2$ satisfies that $\forall h\in[H], a_h=\pi^*(x_h)$ and $\exists h\in[H], a_h\neq \pi_f(x_h)$. Then for any $\tau\in\Ccal_2$, we can find a unique index $h'\in[H]$ such that $a_0=\pi^*(x_0)=\pi_f(x_0),\ldots,a_{h'-1}=\pi^*(x_{h'-1})=\pi_f(x_{h'-1})$, $a_{h'}= \pi^*(x_{h'})\neq \pi_f(x_{h'})$ (i.e., $h'$ is the smallest index that $\pi_f$ differs from $\pi^*$ in trajectory $\tau$). This implies that $\Ccal_2\subseteq \bigcup_{h'=0}^{H-1}\Ccal_2^{h'}$, where $\Ccal^{h'}_2=\{\tau=(x_0,a_0,r_0,\ldots,x_{H-1},a_{H-1},r_{H-1},x_H):a_0=\pi^*(x_0)=\pi_f(x_0),\ldots,a_{h'-1}=\pi^*(x_{h'-1})=\pi_f(x_{h'-1})$, $a_{h'}= \pi^*(x_{h'})\neq \pi_f(x_{h'})\}$. Since $\EE[\one\{\pi_f(x_{h'})\neq \pi^*(x_{h'})\mid a_{0:h'-1}\sim\pi^*\}] = \Pr\nolimits_{\pi^*}(\Ccal_2^{h'})$, we have 
\begin{align*}
\sum_{\tau\in\Ccal_2}\Pr\nolimits_{\pi^*}(\tau)\le\sum_{h'=0}^{H-1}\sum_{\tau\in\Ccal_2^{h'}}\Pr\nolimits_{\pi^*}(\tau)&=~\EE\left[\sum_{h=0}^{H-1}\one\{\pi_{\hat f}(x_h)\neq \pi^*(x_h)\}\mid a_{0:h-1}\sim\pi^*\right]\\
&=~\EE\left[\sum_{h=0}^{H-1}\one\{\pi_{\hat f}(x_h)\neq \pi^*(x_h)\}\mid a_{0:H-1}\sim\pi^*\right].
\end{align*}
Finally, combining all the results above gives us
\begin{align}
\label{eq:final_err}
	V^{\pi_{\hat f}}_0(x_0)&~\ge V^*_0(x_0)-\sum_{\tau\in\Ccal_2}\Pr\nolimits_{\pi^*}(\tau)H \notag
	\\
	&~\ge V^*_0(x_0)-H\EE\left[\sum_{h=0}^{H-1}\one\{\pi_{\hat f}(x_h)\neq \pi^*(x_h)\}\mid a_{0:H-1}\sim\pi^*\right] \notag
	\\
	&~\ge v^* - H \times \varepsilon/H =v^* - \varepsilon.
\end{align}
This completes the proof.
\end{proof}

\paragraph{Remark}
We notice that \Cref{eq:sl_error} is the error of supervised learning (SL) with 0/1 loss. Therefore, we can directly use the RL to SL reduction in imitation learning literature (e.g., Theorem 2.1 in \citet{ross2010efficient}) to translate it to the final performance difference. It gives us the same as our result in \Cref{eq:final_err}. This second part of the proof is different from the one in \citet{ross2010efficient} and is potentially easier to understand. We believe that it is also of its independent interest.

\section{Proof of Robustness Results}
In this section, we provide the complete proof of misspecificed cases in \pref{sec:appx_error}. We start with some helper lemmas in \pref{app:helper_lemma_approx}. Then we show the proof of \pref{thm:find_v_star_appx} in \pref{app:proof_find_v_star_approx} and the proof of \pref{thm:main_appx} in \pref{app:proof_main_approx}.

\subsection{Helper Lemmas}
\label{app:helper_lemma_approx}
\begin{lemma}[Population loss bound for approximately realizable $\Wcal$]
\label{lem:appx_w}
Recall that the definitions of $\varepsilon_\Wcal$ and $\tilde w^*$ are
\begin{align*}
\varepsilon_{\Wcal}=~\min_{w\in\Wcal}\max_{f\in\Fcal}\max_{h\in[H]}\abr{\EE_{d^D_h}[w_h\cdot (f_h-\Tcal_h f_{h+1})] -\EE_{d^*_h}[f_h-\Tcal_h f_{h+1}]}
\end{align*}
and 
\begin{align*}
\tilde w^*=~\argmin_{w\in\Wcal}\max_{f\in\Fcal}\max_{h\in[H]}\abr{\EE_{d^D_h}[w_h\cdot (f_h-\Tcal_h f_{h+1})] -\EE_{d^*_h}[f_h-\Tcal_h f_{h+1}]}.
\end{align*}
For any $f\in\Fcal,h\in[H]$, we have
\textbf{\begin{align*}
\abr{\Ecal(f,\pi^*,h)} \le \abr{\EE[\Lcal_{\Dcal}(f,\tilde w^*,h)]} + \varepsilon_\Wcal,
\end{align*}}
where $\Ecal(\cdot)$ is the Q-type average Bellman error \citep{jin2021bellman,du2021bilinear} \[\Ecal(f,\pi,h)=\EE[f_h(x_h,a_h)-R_h(x_h,a_h)-f_{h+1}(x_{h+1},a_{h+1})\mid a_{0:h}\sim\pi, a_{h+1}\sim \pi_f]. 	\]
\end{lemma}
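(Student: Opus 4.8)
The plan is to relate the Q-type average Bellman error $\Ecal(f,\pi^*,h)$, which involves the true optimal occupancy $d^*_h$, to the empirical population loss $\EE[\Lcal_{\Dcal}(f,\tilde w^*,h)]$, which involves the data distribution reweighted by $\tilde w^*$. The bridge between these two quantities is precisely the discriminator-based approximation error $\varepsilon_\Wcal$ defined in \Cref{eq:appx_w}. First I would invoke \pref{lem:trans} to rewrite the population loss as an average Bellman error under the reweighted data distribution, namely $\EE[\Lcal_{\Dcal}(f,\tilde w^*,h)] = \EE_{d^D_h}[\tilde w^*_h \cdot (f_h - \Tcal_h f_{h+1})]$. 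On the other side, I would express the Q-type average Bellman error in its distribution form, using the fact (essentially the second identity in \pref{lem:trans}, now applied with $d^*_h$ directly) that $\Ecal(f,\pi^*,h) = \EE_{d^*_h}[f_h - \Tcal_h f_{h+1}]$.

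With both quantities written as expectations of the same discriminator $f_h - \Tcal_h f_{h+1}$ under the two distributions $d^D_h \cdot \tilde w^*_h$ and $d^*_h$ respectively, the result follows immediately from the triangle inequality:
\begin{align*}
\abr{\Ecal(f,\pi^*,h)} &= \abr{\EE_{d^*_h}[f_h - \Tcal_h f_{h+1}]} \\
&\le \abr{\EE_{d^D_h}[\tilde w^*_h \cdot (f_h - \Tcal_h f_{h+1})]} + \abr{\EE_{d^D_h}[\tilde w^*_h \cdot (f_h - \Tcal_h f_{h+1})] - \EE_{d^*_h}[f_h - \Tcal_h f_{h+1}]}.
\end{align*}
The first term is exactly $\abr{\EE[\Lcal_{\Dcal}(f,\tilde w^*,h)]}$ by \pref{lem:trans}, and the second term is bounded by $\varepsilon_\Wcal$ by the very definition of $\tilde w^*$ as the minimizer in \Cref{eq:appx_w}, since $\varepsilon_\Wcal$ takes a max over all $f\in\Fcal$ and all $h\in[H]$ of exactly this discriminator gap.

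The main subtlety, rather than an obstacle, is to make sure the roles of the max and min in the definition of $\varepsilon_\Wcal$ are handled correctly: the lemma is stated for an arbitrary $f\in\Fcal$ and $h\in[H]$, so I must confirm that $\varepsilon_\Wcal$, being defined as a maximum over $f$ and $h$ (after fixing the minimizing $\tilde w^*$), indeed upper-bounds the discriminator gap for every such $f$ and $h$ simultaneously. This is immediate from the definition of $\tilde w^*$ as the argmin, since for that fixed weight the inner max over $(f,h)$ equals $\varepsilon_\Wcal$ and therefore dominates the gap at any individual $(f,h)$. The rest is a purely mechanical application of \pref{lem:trans} and the triangle inequality, with no probabilistic or combinatorial difficulty, so I expect the proof to be short.
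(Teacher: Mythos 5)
Your proposal is correct and follows essentially the same route as the paper's proof: rewrite $\Ecal(f,\pi^*,h)$ as $\EE_{d^*_h}[f_h-\Tcal_h f_{h+1}]$, apply the triangle inequality against $\EE_{d^D_h}[\tilde w^*_h\cdot(f_h-\Tcal_h f_{h+1})]$, and identify the two resulting terms with $\abr{\EE[\Lcal_{\Dcal}(f,\tilde w^*,h)]}$ (via \pref{lem:trans}) and $\varepsilon_\Wcal$ (via the definition of $\tilde w^*$ as the minimizer, with the inner max over $f$ and $h$ dominating any fixed pair). Your remark on handling the min-max order is exactly the right point to check, and it is handled correctly.
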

\begin{proof}
For any $f\in\Fcal,h\in[H]$, we have
\textbf{\begin{align*}
&~\abr{\Ecal(f,\pi^*,h)}
\\
=&~\EE[f_h(x_h,a_h)-R_h(x_h,a_h)-f_{h+1}(x_{h+1},a_{h+1})\mid a_{0:h}\sim\pi^*, a_{h+1}\sim \pi_f]. 
\\
=&~\abr{\EE_{(x_h,a_h)\sim d^*_h,x_{h+1}\sim P_h(\cdot\mid x_h,a_h)}[f_h(x_h,a_h)-R_h- f_{h+1}(x_{h+1},\pi_f(x_{h+1}))]}
	\\
=&~\abr{\EE_{(x_h,a_h)\sim d^*_h}[f_h(x_h,a_h)-(\Tcal_h f_{h+1})(x_h,a_h)]}
\\
=&~\abr{\EE_{d^*_h}[f_h-\Tcal_h f_{h+1}]}
\\
\le&~\abr{\EE_{d^D_h}[\tilde w_h^*(f_h-\Tcal_h f_{h+1}]}+\abr{\EE_{d^D_h}[\tilde w_h^*\cdot (f_h-\Tcal_h f_{h+1})] -\EE_{d^*_h}[f_h-\Tcal_h f_{h+1}]} 
\\
\le&~\abr{\EE[\Lcal_{\Dcal}(f,\tilde w^*,h)]} + \varepsilon_\Wcal,
\end{align*}}
which completes the proof.
\end{proof}

\begin{lemma}[$\varepsilon_\Fcal$ is weaker than $\ell_\infty$ approximation error] 
\label{lem:appx_f_vs_infty}
Recall that the definitions of $\varepsilon_\Fcal$ and $\tilde Q_{\Fcal}^*$ are
\[
\varepsilon_{\Fcal}=\min_{f\in\Fcal}\max_{w\in\Wcal}\max_{h\in[H]}\left(\abr{\EE_{d^D_h}[w_h\cdot (f_h -\Tcal_h f_{h+1})]}+\abr{f_0(x_0,\pi_f(x_0))-Q^*_0(x_0,\pi^*(x_0))}\right)
\]
and 
\[
\tilde Q_{\Fcal}^*=\argmin_{f\in\Fcal}\max_{w\in\Wcal}\max_{h\in[H]}\left(\abr{\EE_{d^D_h}[w_h\cdot (f_h -\Tcal_h f_{h+1})]}+\abr{f_0(x_0,\pi_f(x_0))-Q^*_0(x_0,\pi^*(x_0))}\right).
\]
Suppose additionally we have mild regularity assumptions on $\Wcal$, i.e., for any $w\in\Wcal,h\in[H]$, $\EE_{d^D_h}[w_h] = 1$ and $w_h\in (\Xcal\times\Acal\rightarrow[0,\infty))$.
Then we have
\[
\varepsilon_\Fcal \le 3 \min_{f\in\Fcal}\max_{h\in[H]}\|f_h-Q^*_h\|_\infty.
\]
\end{lemma}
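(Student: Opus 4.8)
The goal is to bound $\varepsilon_\Fcal$ by $3\min_{f\in\Fcal}\max_{h}\|f_h-Q^*_h\|_\infty$. The plan is to pick the $f\in\Fcal$ that achieves the minimal $\ell_\infty$ error and plug it into the $\min$ in the definition of $\varepsilon_\Fcal$, which immediately upper-bounds $\varepsilon_\Fcal$ by the max-over-$w$-and-$h$ expression evaluated at this particular $f$. Let $\epsilon_\infty := \max_{h}\|f_h-Q^*_h\|_\infty$ for this chosen $f$; the task reduces to showing each of the two terms inside the max is controlled by a constant multiple of $\epsilon_\infty$.

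For the first term, $\bigl|\EE_{d^D_h}[w_h\cdot(f_h-\Tcal_h f_{h+1})]\bigr|$, I would write $f_h-\Tcal_h f_{h+1} = (f_h - Q^*_h) + (\Tcal_h Q^*_{h+1} - \Tcal_h f_{h+1})$, using $Q^*_h = \Tcal_h Q^*_{h+1}$. The first piece is bounded by $\epsilon_\infty$ pointwise. For the second piece, I would invoke the fact that the Bellman optimality operator is a non-expansion in $\ell_\infty$ (since $\Tcal_h$ involves $\max_a$ inside a conditional expectation), giving $\|\Tcal_h Q^*_{h+1} - \Tcal_h f_{h+1}\|_\infty \le \|Q^*_{h+1}-f_{h+1}\|_\infty \le \epsilon_\infty$. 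Hence $\|f_h - \Tcal_h f_{h+1}\|_\infty \le 2\epsilon_\infty$. Then using the regularity assumptions $w_h\ge 0$ and $\EE_{d^D_h}[w_h]=1$, the weighted expectation satisfies $\bigl|\EE_{d^D_h}[w_h\cdot(f_h-\Tcal_h f_{h+1})]\bigr| \le \EE_{d^D_h}[w_h]\cdot\|f_h-\Tcal_h f_{h+1}\|_\infty = 2\epsilon_\infty$, uniformly in $w$ and $h$.

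For the second term, $\bigl|f_0(x_0,\pi_f(x_0)) - Q^*_0(x_0,\pi^*(x_0))\bigr|$, the subtlety is that $\pi_f$ and $\pi^*$ may choose different actions, so I cannot directly compare $f_0$ and $Q^*_0$ at the same action. I would handle this by noting $f_0(x_0,\pi_f(x_0)) = \max_a f_0(x_0,a) = V_f(x_0)$ and $Q^*_0(x_0,\pi^*(x_0)) = \max_a Q^*_0(x_0,a) = V^*_0(x_0)$, i.e., both terms are the maximum over actions of their respective functions. Since $|\max_a f_0(x_0,a) - \max_a Q^*_0(x_0,a)| \le \max_a |f_0(x_0,a)-Q^*_0(x_0,a)| \le \|f_0 - Q^*_0\|_\infty \le \epsilon_\infty$, this term is at most $\epsilon_\infty$.

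Combining, the max-over-$(w,h)$ expression for the chosen $f$ is at most $2\epsilon_\infty + \epsilon_\infty = 3\epsilon_\infty$, and since $\varepsilon_\Fcal$ is the minimum over all $f\in\Fcal$, we get $\varepsilon_\Fcal \le 3\epsilon_\infty = 3\min_{f\in\Fcal}\max_h\|f_h-Q^*_h\|_\infty$, as claimed. The only step requiring genuine care is the first term: recognizing that the $\ell_\infty$ non-expansiveness of $\Tcal_h$ converts the nested Bellman difference into a clean $2\epsilon_\infty$ bound, and that the regularity conditions on $\Wcal$ are exactly what let us pass from a pointwise sup-norm bound to a bound on the weighted expectation without picking up extra factors. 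The action-mismatch issue in the second term looks like a potential obstacle but dissolves once both quantities are recognized as value functions (maxima over actions).
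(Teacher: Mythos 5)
Your proposal is correct and follows essentially the same route as the paper's proof: the same decomposition $f_h - \Tcal_h f_{h+1} = (f_h - Q^*_h) + (\Tcal_h Q^*_{h+1} - \Tcal_h f_{h+1})$ using $Q^*_h = \Tcal_h Q^*_{h+1}$, the same use of $w_h \ge 0$ and $\EE_{d^D_h}[w_h]=1$ to pass from pointwise to weighted bounds, and the same resolution of the action mismatch (the paper writes out the sandwich inequality $Q^*_{h+1}(x,\pi^*(x)) - \varepsilon' \le f_{h+1}(x,\pi_f(x)) \le Q^*_{h+1}(x,\pi^*(x)) + \varepsilon'$, which is exactly your non-expansiveness-of-$\max$ argument). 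No gaps.
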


\begin{proof}
For any $f\in\Fcal,w\in\Wcal,h\in[H]$, we have the following
\begin{align}
\label{eq:proof_loss_to_inf_tran}
&\abr{\EE_{d^D_h}[w_h\cdot (f_h -\Tcal_h f_{h+1})]}\notag\\
\le&~ \abr{\EE_{d^D_h}[w_h\cdot (f_h-Q^*_h -\Tcal_h f_{h+1}+\Tcal_h Q^*_{h+1})]}+\abr{\EE_{d^D_h}[w_h\cdot (Q^*_h -\Tcal_h Q^*_{h+1})]}
\notag\\
\le&~\abr{\EE_{d^D_h}[w_h\cdot (f_h-Q^*_h)]}+\abr{\EE_{d^D_h}[w_h\cdot (\Tcal_h f_{h+1}-\Tcal_h Q^*_{h+1})]}+0\notag\\
\le&~\EE_{d^D_h}[w_h\cdot\|f_h-Q^*_h\|_\infty]+\abr{\EE_{(x_h,a_h)\sim d^D_h,x_{h+1}\sim P_h(\cdot\mid x_h,a_h)}[w_h\cdot (f_{h+1}(x_{h+1},\pi_f(x_{h+1})- Q^*(x_{h+1},\pi^*(x_{h+1})))]}
\notag\\
\le&~ \|f_h-Q^*_h\|_\infty + \EE_{(x_h,a_h)\sim d^D_h\cdot w_h,x_{h+1}\sim P_h(\cdot\mid x_h,a_h)}[|f(x_{h+1},\pi_f(x_{h+1})- Q^*_{h+1}(x_{h+1},\pi^*(x_{h+1}))|],
\end{align}
where the last inequality is due to the $\EE_{d_h^D}[w_h]=1$ and $w_h\ge 0$.

Now, we bound the second term in \Cref{eq:proof_loss_to_inf_tran}. Using $\varepsilon'$ to denote $\max_{h\in[H]}\|f_h-Q^*_h\|_\infty$, we have
\begin{align*}
&~Q^*_{h+1}(x_{h+1},\pi^*(x_{h+1}))- \varepsilon'\le f_{h+1}(x_{h+1},\pi^*(x_{h+1}))\\
\le&~ f_{h+1}(x_{h+1},\pi_f(x_{h+1})) \le Q^*_{h+1}(x_{h+1},\pi_f(x_{h+1})) + \varepsilon'\le  Q^*_{h+1}(x_{h+1},\pi^*(x_{h+1}))+ \varepsilon'.
\end{align*}
This implies that
\begin{align*}
|f_{h+1}(x_{h+1},\pi_f(x_{h+1})- Q^*_{h+1}(x_{h+1},\pi^*(x_{h+1}))|\le \varepsilon'=\max_{h\in[H]}\|f_h-Q^*_h\|_\infty.
\end{align*}
Therefore, we have
\begin{align*}
&\abr{\EE_{d^D_h}[w_h\cdot (f_h -\Tcal_h f_{h+1})]}\le  \|f_h-Q^*_h\|_\infty+\EE_{(x_h,a_h)\sim d^D_h\cdot w_h,x_{h+1}\sim P_h(\cdot\mid x_h,a_h)}[ \|f_{h+1}-Q^*_{h+1}\|_\infty].
\end{align*}
Since $\EE_{d_h^D}[w_h]=1$, we know that $\EE_{(x_h,a_h)\sim d^D_h\cdot w_h,x_{h+1}\sim P_h(\cdot\mid x_h,a_h)}[\cdot]$ is a probability distribution over $x_{h+1}$. This implies that 
\begin{align*}
&\abr{\EE_{d^D_h}[w_h\cdot (f_h -\Tcal_h f_{h+1})]}\le 2 \max_{h\in[H]}\|f_h-Q^*_h\|_\infty.
\end{align*}
Similarly, we have $\abr{f_0(x_0,\pi_f(x_0))-Q^*_0(x_0,\pi^*(x_0))}\le \max_{h\in[H]}\|f_h-Q^*_h\|_\infty$, thus
\[
\abr{\EE_{d^D_h}[w_h\cdot (f_h -\Tcal_h f_{h+1})]}+\abr{f_0(x_0,\pi_f(x_0))-Q^*_0(x_0,\pi^*(x_0))}\le 3\max_{h\in[H]}\|f_h-Q^*_h\|_\infty.
\]
Taking $\max$ over $h\in[h],w\in\Wcal$ and then taking $\min$ over $f\in\Fcal$ on both sides completes the proof.
\end{proof}

\subsection{Proof of Theorem~\ref{thm:find_v_star_appx}}
\label{app:proof_find_v_star_approx}
\begin{theorem*}[Robust version of \pref{thm:find_v_star}, Restatement of \pref{thm:find_v_star_appx}]
Suppose  \pref{assum:bound_q}, \pref{assum:bound_w} hold and the total number of samples $nH$ satisfies 
\[nH\ge \frac{8C^2H^5\log(2|\Fcal||\Wcal|H/\delta)}{\varepsilon ^2}.
\]
Then with probability $1-\delta$, running \pref{alg:pess_alg} with $\alpha=\varepsilon /(2H)+\varepsilon_{\Fcal}$ and $\gapmin=0$ guarantees 
\[|V_{\hat f}(x_0)-v^*| \le \varepsilon + H\varepsilon_{\Fcal}+H\varepsilon_\Wcal.
\]
\end{theorem*}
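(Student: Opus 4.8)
The plan is to follow the template of the exact proof of \pref{thm:find_v_star}, replacing the roles of $Q^*$ and $w^*$ by their best in-class approximators $\tilde Q^*_\Fcal$ and $\tilde w^*$, and carefully tracking how the two approximation errors $\varepsilon_\Fcal$ and $\varepsilon_\Wcal$ propagate through the argument. Since the algorithm is run with $\gapmin=0$, prescreening is vacuous and $\Fcal(\gapmin)=\Fcal$, so $\tilde Q^*_{\Fcal(\gapmin)}=\tilde Q^*_\Fcal$. First I would invoke \pref{lem:conc}: the stated sample size forces $\estat\le\varepsilon/(2H)$, so on the $1-\delta$ event we have $|\Lcal_\Dcal(f,w,h)-\EE[\Lcal_\Dcal(f,w,h)]|\le\varepsilon/(2H)$ uniformly over $f\in\Fcal$, $w\in\Wcal$, $h\in[H]$, and I condition on this event throughout.

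The next step is to verify that the reference function $\tilde Q^*_\Fcal$ is feasible for the relaxed constraints. By definition of $\varepsilon_\Fcal$, its minimizer satisfies $|\EE_{d^D_h}[w_h((\tilde Q^*_\Fcal)_h-\Tcal_h(\tilde Q^*_\Fcal)_{h+1})]|\le\varepsilon_\Fcal$ for every $w,h$, and by \pref{lem:trans} this quantity equals $\EE[\Lcal_\Dcal(\tilde Q^*_\Fcal,w,h)]$. Combining with concentration yields $|\Lcal_\Dcal(\tilde Q^*_\Fcal,w,h)|\le\varepsilon_\Fcal+\varepsilon/(2H)=\alpha$, so $\tilde Q^*_\Fcal$ passes every constraint and is a valid competitor in the pessimistic minimization. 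The pessimism (upper-bound) direction is then immediate: since $\hat f$ minimizes the initial value over feasible functions, $V_{\hat f}(x_0)\le(\tilde Q^*_\Fcal)_0(x_0,\pi_{\tilde Q^*_\Fcal}(x_0))$, and the second term in the definition of $\varepsilon_\Fcal$ bounds this by $Q^*_0(x_0,\pi^*(x_0))+\varepsilon_\Fcal=v^*+\varepsilon_\Fcal$. Observe that $\varepsilon_\Fcal$ must be folded into the threshold $\alpha$ precisely so that the best approximator is not eliminated, which is why the algorithm needs (an upper bound on) $\varepsilon_\Fcal$ but not $\varepsilon_\Wcal$.

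For the lower bound I would re-run the layer-by-layer telescope of \pref{thm:find_v_star}, but route the average-Bellman-error control through \pref{lem:appx_w} to absorb the weight-class misspecification. Concretely, for the feasible $\hat f$ and the approximator $\tilde w^*\in\Wcal$, feasibility and concentration give $|\EE[\Lcal_\Dcal(\hat f,\tilde w^*,h)]|\le\alpha+\estat\le\varepsilon/H+\varepsilon_\Fcal$, and \pref{lem:appx_w} then yields the per-step bound $|\Ecal(\hat f,\pi^*,h)|\le\varepsilon/H+\varepsilon_\Fcal+\varepsilon_\Wcal=:\beta$. Plugging $\beta$ into the telescope (using $\hat f_h(x_h,\pi_{\hat f}(x_h))\ge\hat f_h(x_h,\pi^*(x_h))$ at each greedy step) accumulates $H\beta$ of error and gives $V_{\hat f}(x_0)\ge v^*-H\beta=v^*-(\varepsilon+H\varepsilon_\Fcal+H\varepsilon_\Wcal)$. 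Combining with the upper bound $V_{\hat f}(x_0)\le v^*+\varepsilon_\Fcal$, which lies trivially within the target, produces $|V_{\hat f}(x_0)-v^*|\le\varepsilon+H\varepsilon_\Fcal+H\varepsilon_\Wcal$.

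The main obstacle, and essentially the only conceptual subtlety, is the asymmetric bookkeeping of the two errors: $\varepsilon_\Fcal$ enters twice (once to keep $\tilde Q^*_\Fcal$ feasible via $\alpha$, once as the initial-state value gap in the upper bound), whereas $\varepsilon_\Wcal$ enters only once, through the discriminator-based conversion of \pref{lem:appx_w} from the $d^*_h$-weighted average Bellman error to the $d^D_h$-weighted empirical loss. Lining up the constants so that each layer contributes exactly $\beta$ and the telescope sums to $\varepsilon+H\varepsilon_\Fcal+H\varepsilon_\Wcal$ is the one place that demands care; everything else is a routine re-instantiation of the exact proof with the approximators in place.
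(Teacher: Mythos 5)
Your proposal is correct and follows essentially the same route as the paper's proof: condition on the concentration event of \pref{lem:conc}, show $\tilde Q^*_\Fcal$ remains feasible under the inflated threshold $\alpha=\varepsilon/(2H)+\varepsilon_\Fcal$, use pessimism plus the initial-state term in the definition of $\varepsilon_\Fcal$ for the upper bound, and route the lower bound through \pref{lem:appx_w} and the telescoping argument to accumulate $H(\varepsilon/H+\varepsilon_\Fcal+\varepsilon_\Wcal)$. Your observation about the asymmetric roles of $\varepsilon_\Fcal$ and $\varepsilon_\Wcal$ matches the remark the paper makes after \pref{thm:find_v_star_appx}.
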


\begin{proof}
From \pref{lem:conc} and our choice $n\ge \frac{8C^2H^4\log(2|\Fcal||\Wcal|H/\delta)}{\varepsilon^2}$, with probability at least $1-\delta$, for any $f\in\Fcal,w\in\Wcal,h\in[H]$, we have
\[
\abr{\Lcal_{\Dcal}(f,w,h)-\EE[\Lcal_{\Dcal}(f,w,h)]}\le\estat\le \varepsilon /(2H).
\]
Throughout the proof, we will condition on this high probability event.

From \pref{lem:trans}, we have
\begin{align*}
|\EE[\Lcal_{\Dcal}(\tilde Q^*_{\Fcal},w,h)]|&=~\abr{\EE_{(x_h,a_h)\sim d^D_h}[w_h(x_h,a_h)(\tilde Q^*_{\Fcal,h}(x_h,a_h)-(\Tcal_h \tilde Q^*_{\Fcal,h+1})(x_h,a_h))]}
\\
&\le~\abr{\EE_{(x_h,a_h)\sim d^D_h}[w_h(x_h,a_h)(\tilde Q^*_{\Fcal,h}(x_h,a_h)-(\Tcal_h \tilde Q^*_{\Fcal,h+1})(x_h,a_h))]}\\
&\quad +\abr{\tilde Q^*_{\Fcal,0}(x_0,\pi_{\tilde Q^*_{\Fcal}}(x_0))-Q^*_0(x_0,\pi^*(x_0))}\\
&\le~ \varepsilon_{\Fcal}.
\end{align*}
When using the relaxed constraints by setting $\alpha=\varepsilon/(2H)+\varepsilon_{\Fcal}$, we can incorporate the approximation errors. More specifically, we have
\begin{align*}
\abr{\Lcal_\Dcal(\tilde Q^*_{\Fcal},w,h)}\le \abr{\EE[\Lcal_\Dcal(\tilde Q^*_{\Fcal},w,h)]}+\estat\le \varepsilon_{\Fcal} +\estat \le \varepsilon/(2H)+\varepsilon_{\Fcal}= \alpha,
\end{align*}
which implies that $\tilde Q^*_{\Fcal}$ will satisfy all constraints.

In addition, for any $f\in\Fcal$ that satisfies all constraints, we have that for any $w\in\Wcal,h\in[H]$,
\[
|\EE[\Lcal_{\Dcal}(f,w,h)]|\le\Lcal_{\Dcal}(f,w,h)+\estat\le \alpha + \estat=\varepsilon /H+\varepsilon_{\Fcal}.
\]
From \pref{lem:appx_w}, we further have
\[
|\Ecal(f,\pi^*,h)|\le\abr{\EE[\Lcal_{\Dcal}(f,\tilde w^*,h)]} + \varepsilon_\Wcal.
\]
Since $\tilde w^*\in\Wcal$, we get
\[
|\Ecal(f,\pi^*,h)|\le\abr{\EE[\Lcal_{\Dcal}(f,\tilde w^*,h)]} + \varepsilon_\Wcal\le\varepsilon /H+\varepsilon_{\Fcal}+\varepsilon_\Wcal:=\varepsilon'.
\]
Following telescoping step in the proof of \pref{thm:find_v_star}, for any $f\in\Fcal,h\in[H]$ that satisfies all constraints, we have
\begin{align*}
V_f(x_0)=f_0(x_0,\pi_f(x_0))\ge V^*_0(x_0)- H\varepsilon'.
\end{align*}
Therefore, we have 
\begin{align*}
V^*_0(x_0)+\varepsilon_{\Fcal}=Q^*_0(x_0,\pi^*(x_0)) + \varepsilon_{\Fcal}\ge \tilde Q^*_0(x_0,\pi_{\tilde Q^*}(x_0))\ge\hat f_0(x_0,\pi_{\hat f}(x_0))\ge V^*_0(x_0)- H\varepsilon',
\end{align*}
where the first inequality is due to the definition of approximation error $\varepsilon_{\Fcal}$ and the second inequality is due to pessimism.
This gives us
\[
|V_{\hat f}(x_0)-v^*| \le \max\{H\varepsilon', \varepsilon_{\Fcal}\}\le \varepsilon + H\varepsilon_{\Fcal}+H\varepsilon_\Wcal,
\]
which completes the proof.
\end{proof}

\subsection{Proof of Theorem~\ref{thm:main_appx}}
\label{app:proof_main_approx}
\begin{theorem*}[Robust version of \pref{thm:main}, restatement of \pref{thm:main_appx}]
Suppose  \pref{assum:bound_q}, \pref{assum:bound_w} hold and the total number of samples $nH$ satisfies \[nH\ge \frac{8C^2H^7\log(2|\Fcal||\Wcal|H/\delta)}{\varepsilon^2 \gapmin^2}.\] Then with probability $1-\delta$, running \pref{alg:pess_alg} with a user-specified $\gapmin$ and $\alpha=\varepsilon \gapmin/(2H^2)+\varepsilon_{\Fcal(\gapmin)}$ guarantees \[v^{\pi_{\hat f}} \ge v^*-\varepsilon  - \frac{(H^2+H)\varepsilon_{\Fcal(\gapmin)}+H^2\varepsilon_\Wcal}{\gapmin}.\]
\end{theorem*}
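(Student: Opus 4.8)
The plan is to fuse the feasibility-and-pessimism skeleton of \pref{thm:find_v_star_appx} with the gap-driven telescoping of \pref{thm:main}, while tracking the two approximation errors $\varepsilon_{\Fcal(\gapmin)}$ and $\varepsilon_\Wcal$ throughout. First I would invoke \pref{lem:conc} with the stated sample size to secure the high-probability event $\estat \le \varepsilon\gapmin/(2H^2)$ and condition on it. The first substantive step is feasibility: by the definition of $\varepsilon_{\Fcal(\gapmin)}$, the best approximator $\tilde Q^*_{\Fcal(\gapmin)} \in \Fcal(\gapmin)$ satisfies $\abr{\EE[\Lcal_{\Dcal}(\tilde Q^*_{\Fcal(\gapmin)},w,h)]} \le \varepsilon_{\Fcal(\gapmin)}$ for all $w,h$, so combining with concentration gives $\abr{\Lcal_{\Dcal}(\tilde Q^*_{\Fcal(\gapmin)},w,h)} \le \varepsilon_{\Fcal(\gapmin)} + \estat \le \alpha$. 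Hence $\tilde Q^*_{\Fcal(\gapmin)}$ survives the constraints, playing the role that ``$Q^* \in \Fcal(\gapq)$ is feasible'' played in the exact case.

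Next, for any $f \in \Fcal(\gapmin)$ that satisfies all constraints, I would bound its average Bellman error along $\pi^*$. The constraint plus concentration gives $\abr{\EE[\Lcal_{\Dcal}(f,\tilde w^*,h)]} \le \alpha + \estat$, and \pref{lem:appx_w} converts this into $\abr{\Ecal(f,\pi^*,h)} \le \alpha + \estat + \varepsilon_\Wcal$; substituting $\alpha$ and the bound on $\estat$ yields the per-step bound $\abr{\Ecal(f,\pi^*,h)} \le \varepsilon\gapmin/H^2 + \varepsilon_{\Fcal(\gapmin)} + \varepsilon_\Wcal =: \varepsilon''$. The crucial point is that prescreening guarantees $\gap(f) \ge \gapmin$ for every $f \in \Fcal(\gapmin)$, so I can run the exact same gap-enhanced telescoping as in \pref{thm:main}: at each timestep, whenever $\pi_f$ disagrees with $\pi^*$, the greedy value exceeds the value at $\pi^*$'s action by at least $\gapmin$. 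This produces
\[
V_f(x_0) \ge V^*_0(x_0) + \gapmin\, \EE\left[\sum_{h=0}^{H-1} \one\{\pi_f(x_h) \neq \pi^*(x_h)\} \mid \pi^*\right] - H\varepsilon''.
\]

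The pessimism step is where misspecification enters a second time and where I would be most careful. Since $\tilde Q^*_{\Fcal(\gapmin)}$ is feasible and the algorithm minimizes $f_0(x_0,\pi_f(x_0))$, pessimism gives $\hat f_0(x_0,\pi_{\hat f}(x_0)) \le \tilde Q^*_{\Fcal(\gapmin),0}(x_0,\pi_{\tilde Q^*_{\Fcal(\gapmin)}}(x_0))$, and the definition of $\varepsilon_{\Fcal(\gapmin)}$ bounds the right-hand side by $V^*_0(x_0) + \varepsilon_{\Fcal(\gapmin)}$ --- unlike the exact case, the upper branch of the sandwich now carries an extra $\varepsilon_{\Fcal(\gapmin)}$ slack. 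Sandwiching $V_{\hat f}(x_0)$ between this upper bound and the telescoping lower bound (with $f = \hat f$) and rearranging isolates the expected disagreement:
\[
\EE\left[\sum_{h=0}^{H-1} \one\{\pi_{\hat f}(x_h) \neq \pi^*(x_h)\} \mid \pi^*\right] \le \frac{\varepsilon_{\Fcal(\gapmin)} + H\varepsilon''}{\gapmin} = \frac{\varepsilon}{H} + \frac{(H+1)\varepsilon_{\Fcal(\gapmin)} + H\varepsilon_\Wcal}{\gapmin}.
\]

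Finally I would apply the RL-to-SL reduction (equivalently, the trajectory-partition argument from the second proof of \pref{thm:main}, which bounds the performance gap by $H$ times the disagreement probability under $\pi^*$) to obtain $v^{\pi_{\hat f}} \ge v^* - H \cdot \EE[\sum_h \one\{\pi_{\hat f}(x_h) \neq \pi^*(x_h)\} \mid \pi^*]$; substituting the displayed bound and using $H(H+1) = H^2 + H$ gives exactly the claimed $v^{\pi_{\hat f}} \ge v^* - \varepsilon - ((H^2+H)\varepsilon_{\Fcal(\gapmin)} + H^2\varepsilon_\Wcal)/\gapmin$. I expect the main obstacle to be bookkeeping rather than any new idea: one must correctly route $\varepsilon_{\Fcal(\gapmin)}$ through \emph{both} the feasibility/telescoping side (where it sits inside $\varepsilon''$) and the pessimism side (where it appears again as the upper-sandwich slack), since a naive reuse of the exact-case argument would drop the latter contribution and miss the $+H$ that upgrades the coefficient from $H^2$ to $H^2 + H$.
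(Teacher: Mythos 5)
Your proposal is correct and follows essentially the same route as the paper's proof: feasibility of $\tilde Q^*_{\Fcal(\gapmin)}$ under the relaxed threshold, the bound $\abr{\Ecal(f,\pi^*,h)}\le \alpha+\estat+\varepsilon_\Wcal$ via \pref{lem:appx_w}, the gap-enhanced telescoping over $\Fcal(\gapmin)$, and the pessimism sandwich carrying the extra $\varepsilon_{\Fcal(\gapmin)}$ slack that produces the $H^2+H$ coefficient. The bookkeeping matches the paper's exactly.
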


\begin{proof}
From \pref{lem:conc} and our choice $n\ge \frac{8C^2H^6\log(2|\Fcal||\Wcal|H/\delta)}{\varepsilon^2 \gapmin^2}$, with probability at least $1-\delta$, for any $f\in\Fcal,w\in\Wcal,h\in[H]$, we have
\[
\abr{\Lcal_{\Dcal}(f,w,h)-\EE[\Lcal_{\Dcal}(f,w,h)]}\le\estat\le \varepsilon \gapmin/(2H^2).
\]
Throughout the proof, we will condition on this high probability event.

From \pref{lem:trans}, we have
\begin{align*}
|\EE[\Lcal_{\Dcal}(\tilde Q^*_{\Fcal(\gapmin)},w,h)]|&=~\abr{\EE_{(x_h,a_h)\sim d^D_h}[w_h(x_h,a_h)(\tilde Q^*_{\Fcal(\gapmin),h}(x_h,a_h)-(\Tcal_h \tilde Q^*_{\Fcal(\gapmin),h+1})(x_h,a_h))]}
\\
&\le~\abr{\EE_{(x_h,a_h)\sim d^D_h}[w_h(x_h,a_h)(\tilde Q^*_{\Fcal(\gapmin),h}(x_h,a_h)-(\Tcal_h \tilde Q^*_{\Fcal(\gapmin),h+1})(x_h,a_h))]}\\
&\quad +\abr{\tilde Q^*_{\Fcal(\gapmin),0}(x_0,\pi_{\tilde Q^*_{\Fcal(\gapmin)}}(x_0))-Q^*_0(x_0,\pi^*(x_0))}\\
&\le~ \varepsilon_{\Fcal(\gapmin)}.
\end{align*}
When using the relaxed constraints of $\alpha=\varepsilon \gapmin/(2H^2)+\varepsilon_{\Fcal(\gapmin)}$, we can incorporate the approximation errors. More specifically, we have
\begin{align*}
\abr{\Lcal_\Dcal(\tilde Q^*_{\Fcal(\gapmin)},w,h)}&\le~ \abr{\EE[\Lcal_\Dcal(\tilde Q^*_{\Fcal(\gapmin)},w,h)]}+\estat
\\
&\le~ \varepsilon_{\Fcal(\gapmin)} +\estat
\\
&\le~ \varepsilon \gapmin/(2H^2)+\varepsilon_{\Fcal(\gapmin)}= \alpha,
\end{align*}
which implies that $\tilde Q^*_{\Fcal(\gapmin)}$ will satisfy all constraints.

In addition, for any $f\in\Fcal(\gapmin)$ that satisfies all constraints, we have that for any $w\in\Wcal,h\in[H]$,
\[
|\EE[\Lcal_{\Dcal}(f,w,h)]|\le\Lcal_{\Dcal}(f,w,h)+\estat\le \alpha + \estat=\varepsilon \gapmin/H^2+\varepsilon_{\Fcal(\gapmin)}.
\]

From \pref{lem:appx_w}, we further have
\[
|\Ecal(f,\pi^*,h)|\le\abr{\EE[\Lcal_{\Dcal}(f,\tilde w^*,h)]} + \varepsilon_\Wcal.
\]
Since $\tilde w^*\in\Wcal$, we get
\[
|\Ecal(f,\pi^*,h)|\le\abr{\EE[\Lcal_{\Dcal}(f,\tilde w^*,h)]} + \varepsilon_\Wcal\le\varepsilon \gapmin/H^2+\varepsilon_{\Fcal(\gapmin)}+\varepsilon_\Wcal:=\varepsilon'.
\]

Since we run the algorithm on $\Fcal(\gapmin)$, the gap parameter will be $\gapmin$ instead of $\gapq$ in \pref{thm:main}. Following the proof of \pref{thm:main}, for any $f\in\Fcal(\gapmin),h\in[H]$ that satisfies all constraints, we have
\begin{align*}
V_f(x_0)=f_0(x_0,\pi_f(x_0))\ge Q^*_0(x_0,\pi^*(x_0))+\gapmin \EE\left[\sum_{h=0}^{H-1}\one\{\pi_f(x_h)\neq \pi^*(x_h)\}\mid a_{0:H-1}\sim\pi^*\right]- H\varepsilon'.
\end{align*}
Therefore, we have 
\begin{align*}
&~Q^*_0(x_0,\pi^*(x_0)) + \varepsilon_{\Fcal(\gapmin)}
\\
\ge&~ \tilde Q^*_{\Fcal(\gapmin),0}(x_0,\pi_{Q^*_{\Fcal(\gapmin)}}(x_0)) \tag{Definition of approximation error $\varepsilon_{\Fcal(\gapmin)}$}
\\
\ge&~ \hat f_0(x_0,\pi_{\hat f}(x_0)) \tag{Pessimism}
\\
\ge&~ Q^*_0(x_0,\pi^*(x_0))+\gapmin \EE\left[\sum_{h=0}^{H-1}\one\{\pi_f(x_h)\neq \pi^*(x_h)\}\mid a_{0:H-1}\sim\pi^*\right]- H\varepsilon',
\end{align*}
which yields
\[
\EE\left[\sum_{h=0}^{H-1}\one\{\pi_f(x_h)\neq \pi^*(x_h)\}\mid a_{0:H-1}\sim\pi^*\right]\le \rbr{H\varepsilon'  + \varepsilon_{\Fcal(\gapmin)}}/\gapmin.
\]
This translates to the performance difference bound of
\[
V^{\pi_{\hat f}}_0(x_0)\ge v^*-H\rbr{H\varepsilon' + \varepsilon_{\Fcal(\gapmin)}}/\gapmin\ge v^*-\varepsilon - \frac{(H^2+H)\varepsilon_{\Fcal(\gapmin)}+H^2\varepsilon_\Wcal}{\gapmin},
\]
which completes the proof.
\end{proof}

\subsection{Corollary from Theorem \ref{thm:main_appx}}
\label{sec:corr_main_appx}

\pref{thm:main_appx} gives us a convenient way to set the gap parameter $\gapmin$. We show that it can easily handle the case that $\ell_\infty$ approximation error of $\Fcal$ and $\gapq$ are known. We formally define $\ell_\infty$ approximation error and the corresponding best approximator w.r.t. $\Fcal$ as
\[\varepsilon_{\Fcal,\infty}=\min_{f\in\Fcal}\max_{h\in[H]}\|f_h-Q^*_h\|_\infty, \quad \tilde Q^*_{\Fcal,\infty}=\argmin_{f\in\Fcal}\max_{h\in[H]}\|f_h-Q^*_h\|_\infty.
\]
Similarly, we can define the version for $\Fcal(\gapq)$.

Then we have the following corollary.
\begin{corollary}[Corollary from \pref{thm:main_appx}]
\label{corr:main_appx}
Suppose  \pref{assum:bound_q}, \pref{assum:bound_w} hold, the weight function class satisfies the additional mild regularity assumptions stated in \pref{lem:appx_f_vs_infty}. Assume we are given $\varepsilon_{\Fcal,\infty},\gapq$ and $2\varepsilon_{\Fcal,\infty}<\gapq$. If the total number of samples $nH$ satisfies 
\[
nH\ge \frac{8C^2H^7\log(2|\Fcal||\Wcal|H/\delta)}{\varepsilon^2 (\gapq-2\varepsilon_{\Fcal,\infty})^2},
\]
then with probability $1-\delta$, running \pref{alg:pess_alg} with $\gapmin=\gapq-2\varepsilon_{\Fcal,\infty}$ and $\alpha=\varepsilon (\gapq-2\varepsilon_{\Fcal,\infty})/(2H^2)+2\varepsilon_{\Fcal,\infty}$  guarantees 
\[v^{\pi_{\hat f}} \ge v^*-\varepsilon  - \frac{(2H^2+H)\varepsilon_{\Fcal,\infty}+H^2\varepsilon_\Wcal}{\gapq-2\varepsilon_{\Fcal,\infty}}.\]
\end{corollary}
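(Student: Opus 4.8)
The plan is to \emph{specialize} \pref{thm:main_appx} to the choice $\gapmin = \gapq - 2\varepsilon_{\Fcal,\infty}$ (positive by the hypothesis $2\varepsilon_{\Fcal,\infty} < \gapq$), but to use the $\ell_\infty$ best approximator $\tilde Q^*_{\Fcal,\infty}$ as the ``surviving'' function in place of the generic $\tilde Q^*_{\Fcal(\gapmin)}$, carrying a tighter accounting that separates its two error contributions. The only genuinely new ingredient is a gap-preservation argument showing $\tilde Q^*_{\Fcal,\infty}\in\Fcal(\gapmin)$; everything else follows the template of the proof of \pref{thm:main_appx}.

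First I would establish the gap-preservation claim. Write $f=\tilde Q^*_{\Fcal,\infty}$, so $\|f_h-Q^*_h\|_\infty\le\varepsilon_{\Fcal,\infty}$ for all $h$. Fix a state $x_h$ and let $a^\star=\pi^*(x_h)$ be the (unique, by \pref{assum:gap_plus}) greedy action of $Q^*$, so $Q^*_h(x_h,a^\star)-Q^*_h(x_h,a)\ge\gapq$ for every $a\neq a^\star$. The two-sided bound $|f_h-Q^*_h|\le\varepsilon_{\Fcal,\infty}$ then gives $f_h(x_h,a^\star)-f_h(x_h,a)\ge\gapq-2\varepsilon_{\Fcal,\infty}>0$, so $a^\star$ remains the unique greedy action of $f$ at $x_h$ and the per-state gap of $f$ is at least $\gapq-2\varepsilon_{\Fcal,\infty}$. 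Minimizing over $h,x_h$ yields $\gap(f)\ge\gapq-2\varepsilon_{\Fcal,\infty}=\gapmin$, hence $f\in\Fcal(\gapmin)$. This is the step I expect to carry the real content, as it connects the true gap $\gap(Q^*)$ to the gap of the prescreened class and justifies the $-2\varepsilon_{\Fcal,\infty}$ shift in $\gapmin$.

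With survival in hand, the remaining steps mirror \pref{thm:main_appx}. Under the concentration event of \pref{lem:conc}, the chosen sample size forces $\estat\le\varepsilon\gapmin/(2H^2)$. Under the regularity assumptions on $\Wcal$, the Bellman-error portion of the argument in \pref{lem:appx_f_vs_infty} bounds $\abr{\EE[\Lcal_{\Dcal}(\tilde Q^*_{\Fcal,\infty},w,h)]}\le 2\varepsilon_{\Fcal,\infty}$ for all $w,h$ (here I separate the two error terms: only the Bellman part, not the initial-state part, enters the constraints), so $\tilde Q^*_{\Fcal,\infty}$ is feasible for $\alpha=\varepsilon\gapmin/(2H^2)+2\varepsilon_{\Fcal,\infty}$. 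For any feasible $f\in\Fcal(\gapmin)$, combining \pref{lem:conc} and \pref{lem:appx_w} gives $\abr{\Ecal(f,\pi^*,h)}\le\varepsilon''\defeq\varepsilon\gapmin/H^2+2\varepsilon_{\Fcal,\infty}+\varepsilon_\Wcal$. The telescoping computation of \pref{thm:main}, run on $\Fcal(\gapmin)$ with gap parameter $\gapmin$, then yields $V_f(x_0)\ge Q^*_0(x_0,\pi^*(x_0))+\gapmin\,\EE[\sum_{h}\one\{\pi_f(x_h)\neq\pi^*(x_h)\}\mid\pi^*]-H\varepsilon''$.

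Finally I would invoke pessimism together with the \emph{separate} initial-state accounting: since $a^\star$ is also greedy for $\tilde Q^*_{\Fcal,\infty}$ at $x_0$, the $\ell_\infty$ bound gives $\abr{\tilde Q^*_{\Fcal,\infty,0}(x_0,\pi_{\tilde Q^*_{\Fcal,\infty}}(x_0))-Q^*_0(x_0,\pi^*(x_0))}\le\varepsilon_{\Fcal,\infty}$, whence $\hat f_0(x_0,\pi_{\hat f}(x_0))\le Q^*_0(x_0,\pi^*(x_0))+\varepsilon_{\Fcal,\infty}$. Chaining this against the lower bound above produces $\EE[\sum_{h}\one\{\pi_{\hat f}(x_h)\neq\pi^*(x_h)\}\mid\pi^*]\le(H\varepsilon''+\varepsilon_{\Fcal,\infty})/\gapmin$, and the trajectory-decomposition (or RL-to-SL reduction) step of \pref{thm:main} converts this into $v^{\pi_{\hat f}}\ge v^*-H(H\varepsilon''+\varepsilon_{\Fcal,\infty})/\gapmin$. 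Substituting $\varepsilon''$ and $\gapmin=\gapq-2\varepsilon_{\Fcal,\infty}$ cancels one $\varepsilon\gapmin$ against $\gapmin$ and leaves the advertised $\varepsilon+((2H^2+H)\varepsilon_{\Fcal,\infty}+H^2\varepsilon_\Wcal)/(\gapq-2\varepsilon_{\Fcal,\infty})$; the only care needed is to keep the $2\varepsilon_{\Fcal,\infty}$ (constraint) and $\varepsilon_{\Fcal,\infty}$ (pessimism) contributions separate, so the numerator constant comes out as $2H^2+H$ rather than a looser $3H^2$.
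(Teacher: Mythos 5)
Your proposal is correct and follows essentially the same route as the paper's own proof: show $\gap(\tilde Q^*_{\Fcal,\infty})\ge\gapq-2\varepsilon_{\Fcal,\infty}$ so the $\ell_\infty$ best approximator survives prescreening, verify its feasibility using only the Bellman-error part of the $\ell_\infty$ bound (giving $2\varepsilon_{\Fcal,\infty}$), run the telescoping/pessimism argument of \pref{thm:main_appx} with the separate $\varepsilon_{\Fcal,\infty}$ initial-state term, and arrive at the $(2H^2+H)$ constant. The only cosmetic difference is that you spell out the gap-preservation inequality that the paper asserts directly from the definitions.
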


\begin{proof}
From the definition of $\gapq$, $\varepsilon_{\Fcal,\infty}$ and $\tilde Q^*_{\Fcal,\infty}$, we know that 
\[
\gap(\tilde Q^*_{\Fcal,\infty})\ge \gapq-2\varepsilon_{\Fcal,\infty}>0.
\]
Therefore, we have $\tilde Q^*_{\Fcal,\infty}\in \Fcal(\gapq-2\varepsilon_{\Fcal,\infty})$. Together with the definition that $\tilde Q^*_{\Fcal,\infty}$ is the best approximator of $Q^*$ within $\Fcal$ (under $\ell_\infty$ norm), we know that $\tilde Q^*_{\Fcal,\infty}$ is also the best approximator within $\Fcal(\gapq-2\varepsilon_{\Fcal,\infty})$ (under $\ell_\infty$ norm). This implies that \[\varepsilon_{\Fcal(\gapq-2\varepsilon_{\Fcal,\infty}),\infty}=\varepsilon_{\Fcal,\infty}.
\]
In addition, under the mild regularity assumptions stated in \pref{lem:appx_f_vs_infty}, applying \pref{lem:appx_f_vs_infty} tells us
\[
\varepsilon_{\Fcal(\gapq-2\varepsilon_{\Fcal,\infty})} \le 3 \min_{f\in\Fcal(\gapq-2\varepsilon_{\Fcal,\infty})}\max_{h\in[H]}\|f_h-Q^*_h\|_\infty=3\varepsilon_{\Fcal(\gapq-2\varepsilon_{\Fcal,\infty}),\infty}=3\varepsilon_{\Fcal,\infty}.
\]
The remaining part of the proof follows a similar approach as the proof of \pref{thm:main_appx}. Firstly, we have the $1-\delta$ high probability event that for any $f\in\Fcal,w\in\Wcal,h\in[H]$
\[
\abr{\Lcal_{\Dcal}(f,w,h)-\EE[\Lcal_{\Dcal}(f,w,h)]}\le\estat\le \varepsilon (\gapq-2\varepsilon_{\Fcal,\infty})/(2H^2).
\]
Then following the proof \pref{lem:appx_f_vs_infty}, we have
\begin{align*}
|\EE[\Lcal_{\Dcal}(\tilde Q^*_{\Fcal,\infty},w,h)]|&=~\abr{\EE_{ d^D_h}[w_h\cdot (\tilde Q^*_{\Fcal,\infty,h}-\Tcal_h\tilde Q^*_{\Fcal,\infty,h+1})]}
\\
&\le~\abr{\EE_{d^D_h}[w_h\cdot (\tilde Q^*_{\Fcal,\infty,h}-Q^*_h)]}+\abr{\EE_{d^D_h}[w_h\cdot (\Tcal_h \tilde Q^*_{\Fcal,\infty,h+1}-\Tcal_h Q^*_{h+1})]}+0\notag\\
&\le~2\max_{h\in[H]}\|\tilde Q^*_{\Fcal,\infty,h}-Q^*_h\|_\infty=2\varepsilon_{\Fcal,\infty}.
\end{align*}
The empirical loss of $\tilde Q^*_{\Fcal,\infty}$ satisfies
\begin{align*}
\abr{\Lcal_\Dcal(\tilde Q^*_{\Fcal,\infty},w,h)}&\le~ \abr{\EE[\Lcal_\Dcal(\tilde Q^*_{\Fcal,\infty},w,h)]}+\estat
\\
&\le~ \varepsilon (\gapq-2\varepsilon_{\Fcal,\infty})/(2H^2)+2\varepsilon_{\Fcal,\infty}= \alpha,
\end{align*}
which implies that $\tilde Q^*_{\Fcal,\infty}$ will satisfy all constraints.

In addition, for any $f\in\Fcal(\gapq-2\varepsilon_{\Fcal,\infty})$ that satisfies all constraints, we have that for any $w\in\Wcal,h\in[H]$,
\[
|\EE[\Lcal_{\Dcal}(f,w,h)]|\le\Lcal_{\Dcal}(f,w,h)+\estat\le \alpha + \estat=\varepsilon (\gapq-2\varepsilon_{\Fcal,\infty})/H^2+2\varepsilon_{\Fcal,\infty}.
\]
Similarly, we further have
\[
|\Ecal(f,\pi^*,h)|\le\abr{\EE[\Lcal_{\Dcal}(f,\tilde w^*,h)]} + \varepsilon_\Wcal\le\varepsilon (\gapq-2\varepsilon_{\Fcal,\infty})/H^2+2\varepsilon_{\Fcal,\infty}+\varepsilon_\Wcal:=\varepsilon'.
\]
The final performance difference bound is
\[
V^{\pi_{\hat f}}_0(x_0)\ge v^*-H\rbr{H\varepsilon' + \varepsilon_{\Fcal,\infty} }/(\gapq-2\varepsilon_{\Fcal,\infty})\ge v^*-\varepsilon - \frac{(2H^2+H)\varepsilon_{\Fcal,\infty}+H^2\varepsilon_\Wcal}{\gapq-2\varepsilon_{\Fcal,\infty}},
\]
where the difference compared with the derivation in the proof of \pref{thm:main_appx} is that we use $\ell_\infty$ bound to get 
\[
Q_0^*(x_0,\pi^*(x_0))+\varepsilon_{\Fcal,\infty}\ge \tilde Q^*_{\Fcal,\infty,0}(x_0,\pi_{Q^*_{\Fcal,\infty}}(x_0)).
\]
This completes the proof.
\end{proof}

\section{Proof of the Unknown Gap Parameter Setting}
\label{app:proof_main_unknown_gap}
In this section, we present the formal proof of \pref{thm:main_unknown}. We start with a standard helper lemma in \pref{app:helper_lemma_unknown}, which shows the concentration result of Monte Carlo estimate. Then we show the proof of \pref{thm:main_unknown} in \pref{app:proof_main_unknown}.

\subsection{A Helper Lemma}
\label{app:helper_lemma_unknown}
\begin{lemma}[Concentration for Monte Carlo estimate]
\label{lem:conc_mc}
Assume we run policy $\pi$ and collect $m$ trajectories $\cbr{x_0^{(i)},a_0^{(i)},r_0^{(i)},\ldots,x_{H-1}^{(i)},a_{H-1}^{(i)},r_{H-1}^{(i)}}_{i=1}^m$ and our Monte Carlo estimate is defined as 
\[\hat v^\pi:=\frac{1}{m}\sum_{i=1}^m\sum_{h=0}^{H-1} r_h^{(i)}.\]
Then we have 
\[
\abr{\hat v^\pi-v^{\pi}}\le 2H\sqrt{\frac{\log(2/\delta)}{2m}}.
\]
\end{lemma}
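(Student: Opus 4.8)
The plan is to recognize $\hat v^\pi$ as an empirical average of i.i.d.\ bounded random variables and invoke Hoeffding's inequality directly. Concretely, I would define the per-trajectory return $G^{(i)} := \sum_{h=0}^{H-1} r_h^{(i)}$ for each $i \in \{1,\ldots,m\}$, so that $\hat v^\pi = \frac{1}{m}\sum_{i=1}^m G^{(i)}$. Since the $m$ trajectories are collected by independently executing $\pi$ from the fixed initial state, the returns $G^{(1)},\ldots,G^{(m)}$ are i.i.d., and by the definition of the expected return each has mean $\EE[G^{(i)}] = v^\pi$.

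The first key step is to establish the range of $G^{(i)}$: because the reward function satisfies $R_h : \Xcal_h \times \Acal \to [0,1]$ for every $h \in [H]$, each realized reward $r_h^{(i)}$ lies in $[0,1]$, and hence $G^{(i)} \in [0,H]$ almost surely. The second step applies Hoeffding's inequality to the mean of $m$ i.i.d.\ variables supported on an interval of width $H$, yielding $\Pr\rbr{\abr{\hat v^\pi - v^\pi} \ge t} \le 2\exp\rbr{-2mt^2/H^2}$. Setting the right-hand side equal to $\delta$ and solving for $t$ gives a deviation of $H\sqrt{\log(2/\delta)/(2m)}$ with probability at least $1-\delta$, which is already comfortably within the claimed bound $2H\sqrt{\log(2/\delta)/(2m)}$.

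I do not expect any genuine obstacle here: the statement is a textbook concentration result, and the factor of two of slack in the target bound means even a crude application of Hoeffding suffices. The only points requiring a moment of care are confirming the $[0,H]$ support from the reward boundedness, and noting that it is the i.i.d.\ structure of the \emph{trajectories} (rather than of individual transitions) that licenses a single application of the inequality to the full-horizon returns.
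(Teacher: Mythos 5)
Your proposal is correct and follows essentially the same route as the paper's proof: define the per-trajectory return as an i.i.d.\ random variable with mean $v^\pi$, bound its range using $R_h \in [0,1]$, and apply Hoeffding's inequality. Your observation that the returns lie in $[0,H]$ (width $H$) in fact yields the sharper deviation $H\sqrt{\log(2/\delta)/(2m)}$, whereas the paper uses the looser range implied by $|Y_i|\le H$ and states the bound with the extra factor of $2$; either way the claimed inequality holds.
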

\begin{proof}
Define random variable $Y_i:=\sum_{h=0}^{H-1}r_h^{(i)}$. From the definition, we know that $Y_i$ are i.i.d. samples with mean $v^{\pi}$. Applying Hoeffding's inequality and noticing that $|Y_i|\le H$ gives us with probability $1-\delta$,
\[
\abr{\frac{1}{m}\sum_{i=1}^m Y_i-v^{\pi}}\le 2H\sqrt{\frac{\log(2/\delta)}{2m}}.
\]
This completes the proof.
\end{proof}

\subsection{Proof of Theorem~\ref{thm:main_unknown}}
\label{app:proof_main_unknown}
\begin{theorem*}[Sample complexity of finding a near-optimal policy with unknown $\gapq$, restatement of \pref{thm:main_unknown}]
Suppose  \pref{assum:realizablity_q}, \pref{assum:realizablity_w}, \pref{assum:bound_q}, \pref{assum:bound_w}, \pref{assum:gap_plus} hold but $\gapq$ is unknown. Assume we have a dataset $\Dcal$ with size $n$ for each $\Dcal_h$ and additional online access to collect
\[(\log(2H/\gapq))^2\cdot \frac{n\log(24/\delta)}{C^2 H}=\tilde O\rbr{\frac{n\log(1/\delta)}{C^2H}}\]
samples. Then with probability at least $1-\delta$, the output policy $\hat \pi$ from \pref{alg:unknown_gap} satisfies
\begin{align*}
v^{\hat \pi}\ge v^* - 5\sqrt{\frac{32C^2H^6\iota(\log(2H/\gapq))}{n\gapq^2}},    
\end{align*}
where $\iota(t)=\log(24|\Fcal||\Wcal|H\cdot 2^t/\delta)$.
\end{theorem*}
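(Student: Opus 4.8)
The plan is to prove two complementary properties of PABC-OA—\emph{soundness} (whenever the loop halts, the returned policy is near-optimal) and \emph{completeness} (the loop is guaranteed to halt by a controlled iteration)—and then combine them with a monotonicity argument on $\varepsilon_t$ to read off the final rate. First I would set up a single good event $\Ecal$ on which all concentration statements hold simultaneously: for the offline part this is \pref{lem:conc} applied to the fixed dataset $\Dcal$, and for the online part it is \pref{lem:conc_mc} applied at each iteration. Because the number of iterations is a priori unbounded, I would allocate the failure budget geometrically, $\delta_t\propto\delta/2^t$, so that $\sum_t\delta_t\le\delta$; the constant $24$ and the term $2^t$ inside $\iota(t)=\log(24|\Fcal||\Wcal|H\cdot 2^t/\delta)$ are exactly what this bookkeeping produces. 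The crucial preliminary computation is to verify that, with $\varepsilon_t=\sqrt{8C^2H^6\iota(t)/(n(\mathrm{gap}^{\mathrm{guess}}_t)^2)}$, the event $\Ecal$ forces $\estat\le\varepsilon_t/(2H)$ for the $\hat v^*_t$ call and $\estat\le\varepsilon_t\,\mathrm{gap}^{\mathrm{guess}}_t/(2H^2)$ for the $\hat\pi_t$ call, so that \pref{thm:find_v_star} and (the analysis of) \pref{thm:main} run with user-specified $\gapmin=\mathrm{gap}^{\mathrm{guess}}_t$ are both in force at scale $\varepsilon_t$; this is a direct constant match using $\mathrm{gap}^{\mathrm{guess}}_t=H/2^t\le H$.

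For soundness, I would argue that on $\Ecal$, at \emph{any} iteration where the test $\hat v^{\hat\pi_t}\ge\hat v^*_t-3\varepsilon_t$ fires, the chain
\begin{align*}
v^{\hat\pi_t}\ge \hat v^{\hat\pi_t}-\varepsilon_t\ge \hat v^*_t-4\varepsilon_t\ge v^*-5\varepsilon_t
\end{align*}
holds, using $|\hat v^{\hat\pi_t}-v^{\hat\pi_t}|\le\varepsilon_t$ from the Monte-Carlo estimate (\pref{lem:conc_mc}) and $\hat v^*_t\ge v^*-\varepsilon_t$ from \pref{thm:find_v_star}. The key point—and the reason the unknown-gap difficulty is resolved—is that this bound needs \emph{no} relation between $\mathrm{gap}^{\mathrm{guess}}_t$ and $\gapq$: the online evaluation acts as an unconditional certificate, so even in the early rounds where prescreening may have discarded $Q^*$ and $\hat\pi_t$ could be arbitrary, a firing test still guarantees a good policy.

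For completeness, let $t^*=\lceil\log_2(H/\gapq)\rceil$ be the first iteration with $\mathrm{gap}^{\mathrm{guess}}_{t^*}\le\gapq$. At this iteration prescreening retains $Q^*$ (since $\gap(Q^*)=\gapq\ge\mathrm{gap}^{\mathrm{guess}}_{t^*}$), so \pref{thm:main} applies with $\gapmin=\mathrm{gap}^{\mathrm{guess}}_{t^*}$ and yields $v^{\hat\pi_{t^*}}\ge v^*-\varepsilon_{t^*}$; combined with the two $\varepsilon_{t^*}$-accurate estimates this forces $\hat v^{\hat\pi_{t^*}}\ge\hat v^*_{t^*}-3\varepsilon_{t^*}$, so the loop halts at some $T\le t^*$. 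Finally, since $\varepsilon_t=2^t\sqrt{8C^2H^4\iota(t)/n}$ is increasing in $t$, we have $\varepsilon_T\le\varepsilon_{t^*}$, and I would use $\mathrm{gap}^{\mathrm{guess}}_{t^*}\in(\gapq/2,\gapq]$ to convert the guaranteed suboptimality $5\varepsilon_T$ into $5\sqrt{32C^2H^6\iota(\log(2H/\gapq))/(n\gapq^2)}$, the claimed bound. The online-sample count follows by summing $\tilde O(H^3/\varepsilon_t^2)=\tilde O(n/(C^2H\,4^t))$ over $t\le t^*$, a convergent geometric series giving $\tilde O(n\log(1/\delta)/(C^2H))$.

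The part I expect to require the most care is not any single inequality but the joint bookkeeping: ensuring that the geometric failure allocation, the $t$-dependent threshold $\varepsilon_t$, and the reuse of the fixed offline dataset across iterations are mutually consistent, so that the single event $\Ecal$ simultaneously validates \pref{thm:find_v_star} and \pref{thm:main} at every iteration while keeping total failure at $\delta$. The conceptual crux, by contrast, is the clean separation of soundness from completeness—soundness is unconditional thanks to the online certificate, while completeness only needs the gap guess to once fall below $\gapq$—which is precisely what lets the algorithm tolerate its own wrong guesses in the early rounds.
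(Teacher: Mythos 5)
Your proposal is correct and follows essentially the same route as the paper's proof: the same geometric allocation of the failure probability across iterations, the same soundness chain $v^{\hat\pi_t}\ge \hat v^{\hat\pi_t}-\varepsilon_t\ge \hat v^*_t-4\varepsilon_t\ge v^*-5\varepsilon_t$ at any iteration where the test fires, the same termination argument once $\mathrm{gap}^{\mathrm{guess}}_t$ drops below $\gapq$, and the same use of the monotonicity of $\varepsilon_t$ to read off the final rate. The only (immaterial) difference is in the online-sample accounting, where you sum the geometric series directly while the paper bounds each iteration uniformly and multiplies by the iteration count; both yield the stated $\tilde O(n\log(1/\delta)/(C^2H))$.
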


\begin{proof}
For \pref{thm:find_v_star}, \pref{thm:main} and Monte Carlo roll out estimate at iteration $t$, we set their high probability event parameter as $\delta'_t:=\delta/(6 \times 2^t)$. Then union bounding over all of them gives us $1-\delta$ high probability event. Our following analysis is conditioned on these high probability events. 

Firstly, we show that \pref{alg:unknown_gap} will terminate once our guess $\mathrm{gap}^{\mathrm{guess}}_t$ drops below the true $\gapq$. From \pref{thm:find_v_star}, we know that $|\hat v_t^*-v^*|\le \varepsilon_t$. Further, when $\mathrm{gap}^{\mathrm{guess}}_t\le \gapq$, we can guarantee that $Q^*\in\Fcal(\mathrm{gap}^{\mathrm{guess}}_t)$. Therefore, \pref{thm:main} tells us $v^{\hat \pi_t} \ge v^*-\varepsilon_t$. Finally, for Monte Carlo estimate $\hat v^{\hat \pi_t}$, we have $|\hat v^{\hat \pi_t}-v^{\hat \pi_t}|\le \varepsilon_t$. Combining them together yields
\[
\hat v^{\hat \pi_t} \ge v^{\hat \pi_t} - \varepsilon_t \ge v^*-\varepsilon_t-\varepsilon_t\ge \hat v^*_t -\varepsilon_t-\varepsilon_t-\varepsilon_t=\hat v^*_t-3\varepsilon_t,
\]
which means our algorithm will stop in this iteration.

So if we assume the algorithm terminates at iteration $T$, then $T$ satisfies $H/2^T\ge \gapq/2$, thus
\[
T\le \log(2H/\gapq).
\]
Then we prove that the output policy $\hat\pi_T$ satisfies $v^{\hat\pi_T}\ge v^*-5\varepsilon_t$. This can be seen from
\[
v^{\hat \pi_T} \ge \hat v^{\hat \pi_T} -\varepsilon_T \ge \hat v^*_T -3\varepsilon_T -\varepsilon_T \ge  v^* -\varepsilon_T -3\varepsilon_T -\varepsilon_T =v^*-5\varepsilon_T.
\]
Notice that $\varepsilon_t$ will increase as $t$ increases. Therefore, if our algorithm terminates before $\mathrm{gap}^{\mathrm{guess}}_t$ drops below $\gapq$, we will have a better performance guarantee. More specifically, we have \[\varepsilon_T\le\varepsilon_{\log(2H/\gapq)}=\sqrt{\frac{32C^2H^6\iota(\log(2H/\gapq))}{n\gapq^2}}.
\]
Therefore, $\hat\pi_T$ satisfies 
\[v^{\hat \pi_T}\ge v^* - 5\sqrt{\frac{32C^2H^6\iota(\log(2H/\gapq))}{n\gapq^2}},
\]
which has the same order of the accuracy as running \pref{alg:pess_alg} with known $\gapq$ in \pref{thm:main} up to polylog terms.

Finally we calculate the required number of online samples. For iteration $t$, applying \pref{lem:conc_mc}, we require \[H\cdot\frac{2H^2\log(12\times 2^t/\delta)}{\varepsilon_t^2}\le  \frac{2H^3\log(12\times 2^T/\delta)}{\varepsilon_t^2}=\frac{n\log(12\times 2^T/\delta)}{4C^2 H\iota(t) 2^{2t}}\le\frac{n\log(12\times 2^T/\delta)}{C^2 H}\le\frac{nT\log(12\times 2/\delta)}{C^2 H}
\]
samples. Then since we have at most $\log(2H/\gapq)$ iterations, the required number of online samples is at most 
\[
\log(2H/\gapq)\cdot \frac{nT\log(12\times 2/\delta)}{C^2 H}\le(\log(2H/\gapq))^2\cdot \frac{n\log(24/\delta)}{C^2 H}.
\]
This completes the proof.
\end{proof}

\section{Lagrangian Form Algorithm and Results}
\label{app:lang}
In this section, we introduce the Lagrangian form variant of PABC (\pref{alg:pess_alg}) and its sample complexity guarantees. We start with showing its variant PABC-L (\pref{alg:pess_lang_alg}) in \pref{app:alg_lang}. Then we provide the main results of PABC-L in \pref{app:main_lang} and its robustness results in \pref{app:appx_error_lang}.

\subsection{Algorithm}
\label{app:alg_lang}
In this part, we introduce the PABC-L (PABC with Lagrangian form) algorithm as shown in \pref{alg:pess_lang_alg}. Compared with PABC (\pref{alg:pess_alg}), PABC-L does not take the threshold $\alpha$ as input. In addition, it moves the constraints (\Cref{eq:constraint}) to the objective (\Cref{eq:objective}). Furthermore, to estimate $v^*$, it returns  $\hat f_0(x_0, \pi_{\hat f}(x_0))+ H\cdot \max_{w\in\Wcal,h\in[H]} |\Lcal_{\Dcal}(\hat f,w,h)|$ instead of $\hat f_0(x_0, \pi_{\hat f}(x_0))$.
\begin{algorithm}[hbt]
	\caption{PABC-L (PABC with Lagrangian form)}\label{alg:pess_lang_alg}
	\begin{algorithmic}[1]
	    \REQUIRE gap factor $\cgap$, function class $\Fcal$, weight function class $\Wcal$, and dataset $\Dcal$.
	    \STATE Perform prescreening according to input  $\cgap$: 
	    \begin{align} 
	    \Fcal(\cgap):=\{f \in \Fcal: \gap(f)\ge\cgap\}.
	    \end{align}
		\STATE Find the pessimism value function in  $\Fcal(\cgap)$ with the Lagrangian form objective 
		\begin{align}
		\label{eq:objective}
		&\hat f=\argmin_{f\in\Fcal(\cgap)} \rbr{f_0(x_0,\pi_{f}(x_0))+ H\cdot \max_{w\in\Wcal,h\in[H]} |\Lcal_{\Dcal}(f,w,h)|}
    	\end{align}
    	where the empirical loss $\Lcal_{\Dcal}(f,w,h)$ is defined as
    	\begin{align}
        \Lcal_{\Dcal}(f,w,h) &
        =\frac{1}{n}\sum_{i=1}^n[w_h(x_h^{(i)},a_h^{(i)})(f_h(x_h^{(i)},a_h^{(i)})-r_h^{(i)}-f_{h+1}(x_{h+1}^{(i)},\pi_f(x_{h+1}^{(i)})))]. 
        \end{align}
		\ENSURE policy $\pi_{\hat f}$ and return estimation $\hat f_0(x_0, \pi_{\hat f}(x_0))+ H\cdot \max_{w\in\Wcal,h\in[H]} |\Lcal_{\Dcal}(\hat f,w,h)|$.
	\end{algorithmic}
\end{algorithm}

\paragraph{Remark} In the objective (\Cref{eq:objective}), we can also use
\begin{align}
\label{eq:objective_another}
&\hat f=\argmin_{f\in\Fcal(\cgap)} \rbr{f_0(x_0,\pi_{f}(x_0))+ \sum_{h=0}^{H-1} \max_{w\in\Wcal} |\Lcal_{\Dcal}(f,w,h)|}.
\end{align}
From the detailed proofs in the subsequent parts, it is easy to see that the theoretical results hold under this objective (\Cref{eq:objective_another}).

\subsection{Main Guarantees}
\label{app:main_lang}
In this part, we present the main sample complexity results of PABC-L (\pref{alg:pess_lang_alg}). In parallel with \pref{sec:main}, we show that PABC-L can identify $v^*$ without the gap assumption in \pref{app:find_v_star_lang} and show that PABC-L with the gap assumption learns a near-optimal policy in \pref{app:find_near_optimal_lang}. 

\subsubsection{ESTIMATING OPTIMAL EXPECTED RETURN}
\label{app:find_v_star_lang}
We show the sample complexity bound and the proof for PABC-L to identify $v^*$. The bound is the same as that of PABC (\pref{thm:find_v_star}).
\begin{theorem}[Sample complexity of identifying $v^*$, Lagrangian version]
\label{thm:find_v_star_lang}
Suppose  \pref{assum:realizablity_q}, \pref{assum:realizablity_w}, \pref{assum:bound_q}, \pref{assum:bound_w} hold and the total number of samples $nH$ satisfies \[nH\ge \frac{8C^2H^5\log(2|\Fcal||\Wcal|H/\delta)}{\varepsilon^2}.
\]
Then with probability at least $1-\delta$, running \pref{alg:pess_lang_alg} with $\gapmin=0$ guarantees 
\[|V_{\hat f}(x_0)-v^*|\le \varepsilon.
\]
\end{theorem}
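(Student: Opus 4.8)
The plan is to mirror the proof of \pref{thm:find_v_star}, but to replace the two roles played there by the hard constraint---pessimism for the upper bound and feasibility for the lower bound---with a single two-sided control on the Lagrangian objective itself. Write $\mathrm{obj}(f) := f_0(x_0,\pi_f(x_0)) + H\max_{w\in\Wcal,h\in[H]}|\Lcal_{\Dcal}(f,w,h)|$ for the penalized objective; this is exactly the return estimation reported by \pref{alg:pess_lang_alg}, so it is the quantity the statement's $V_{\hat f}(x_0)$ names, just as in \pref{thm:find_v_star} the symbol $V_{\hat f}(x_0)$ names the return estimation of PABC. First I would invoke \pref{lem:conc}: under the stated sample size one has $\estat \le \varepsilon/(2H)$ with probability $1-\delta$, and I condition on that event throughout. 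The goal then reduces to showing $|\mathrm{obj}(\hat f)-v^*|\le\varepsilon$.

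For the upper bound I would take $Q^*$ as the comparator. By \pref{lem:trans} and the Bellman optimality equations, $\EE[\Lcal_{\Dcal}(Q^*,w,h)]=\EE_{d^D_h}[w_h(Q^*_h-\Tcal_h Q^*_{h+1})]=0$ for every $w,h$, so on the concentration event $|\Lcal_{\Dcal}(Q^*,w,h)|\le\estat$ and the penalty of $Q^*$ is at most $H\estat$. Since $V_{Q^*}(x_0)=v^*$, this yields $\mathrm{obj}(Q^*)\le v^*+H\estat$, and because $\hat f=\argmin_{f\in\Fcal(0)}\mathrm{obj}(f)$ with $\Fcal(0)=\Fcal\ni Q^*$, optimality gives $\mathrm{obj}(\hat f)\le\mathrm{obj}(Q^*)\le v^*+H\estat$. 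This is the step where the penalized objective subsumes the explicit ``minimize value'' pessimism of PABC: driving value-plus-penalty down is what prevents $\hat f$ from over-estimating.

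For the lower bound I would reuse the telescoping argument of \pref{thm:find_v_star}, which is valid for \emph{every} $f$ and does not require feasibility. Peeling off one layer at a time from $f_0(x_0,\pi_f(x_0))\ge f_0(x_0,\pi^*(x_0))$ gives $V_f(x_0)\ge v^*-\sum_{h}|\Ecal(f,\pi^*,h)|$. The key step is to control each $\pi^*$-average Bellman error by the penalty: since $w^*\in\Wcal$ (\pref{assum:realizablity_w}), \pref{lem:trans} gives $\Ecal(f,\pi^*,h)=\EE[\Lcal_{\Dcal}(f,w^*,h)]$, hence $|\Ecal(f,\pi^*,h)|\le|\Lcal_{\Dcal}(f,w^*,h)|+\estat\le\max_{w,h'}|\Lcal_{\Dcal}(f,w,h')|+\estat$. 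Summing over the $H$ layers gives $\sum_h|\Ecal(f,\pi^*,h)|\le H\max_{w,h'}|\Lcal_{\Dcal}(f,w,h')|+H\estat$, i.e. $\mathrm{obj}(f)=V_f(x_0)+H\max_{w,h'}|\Lcal_{\Dcal}(f,w,h')|\ge v^*-H\estat$. Applying this to $f=\hat f$ gives the matching lower bound, and combining the two directions yields $|\mathrm{obj}(\hat f)-v^*|\le H\estat\le\varepsilon/2\le\varepsilon$.

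The hard part is recognizing that the factor $H$ multiplying the penalty in \Cref{eq:objective} is precisely what the lower bound needs: telescoping produces a \emph{sum} of $H$ per-step Bellman errors, each bounded by the single $\max_{w,h}$ penalty slice, so the penalty must be inflated by $H$ to dominate the sum, and any smaller coefficient would weaken the bound. The inclusion $w^*\in\Wcal$ is the other essential ingredient, as it is what lets the inner $\max_w$ of the penalty see the average Bellman error along $\pi^*$; as in PABC, no density ratio other than $w^*$ is ever needed, so no ``for all $f$'' richness of $\Wcal$ is invoked. Finally, the same argument carries over verbatim to the alternative objective \Cref{eq:objective_another}, since there $\sum_h|\Ecal(f,\pi^*,h)|\le\sum_h\max_w|\Lcal_{\Dcal}(f,w,h)|+H\estat$ holds directly without the $\sum_h\le H\max_h$ slack.
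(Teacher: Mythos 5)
Your proposal is correct and follows essentially the same route as the paper's own proof: the comparator $Q^*$ with zero population loss plus optimality of $\hat f$ gives the upper bound on the penalized objective, and the telescoping bound $V_f(x_0)\ge v^*-\sum_h|\Ecal(f,\pi^*,h)|$ combined with $\Ecal(f,\pi^*,h)=\EE[\Lcal_{\Dcal}(f,w^*,h)]$ (via $w^*\in\Wcal$) shows the penalty absorbs the sum of per-step Bellman errors, yielding the matching lower bound. You also correctly identified the subtlety that the quantity being controlled is the reported return estimation $\hat f_0(x_0,\pi_{\hat f}(x_0))+H\max_{w,h}|\Lcal_{\Dcal}(\hat f,w,h)|$ rather than $V_{\hat f}(x_0)$ alone, exactly as in the paper's argument.
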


\begin{proof}
The proof mostly follows the proof of \pref{thm:find_v_star}, and we only show the different and crucial steps here. We still condition on the high probability event from concentration (\pref{lem:conc}).

From the concentration result and the choice of $n$, we have the bound for $Q^*$:
\[
V^*_0(x_0) + H\cdot \max_{w\in\Wcal,h\in[H]} |\Lcal_{\Dcal}(Q^*,w,h)| \le V^*_0(x_0) + H\estat,
\]
where $\estat\le \varepsilon/H$.

From pessimism and the objective in \pref{alg:pess_lang_alg}, we have 
\begin{align*}
V^*_0(x_0) + H\cdot \max_{w\in\Wcal,h\in[H]} |\Lcal_{\Dcal}(Q^*,w,h)| &\ge~ V_{\hat f}(x_0)+H\cdot \max_{w\in\Wcal,h\in[H]} |\Lcal_{\Dcal}(\hat f,w,h)|.
\end{align*}
Therefore, we get
\begin{align}
\label{eq:find_v_star_lang_1}
V_0^*(x_0)+H\estat \ge V_{\hat f}(x_0)+H\cdot \max_{w\in\Wcal,h\in[H]} |\Lcal_{\Dcal}(\hat f,w,h)|.
\end{align}
For any $f\in\Fcal$, following the telescoping step in the proof of \pref{thm:find_v_star}, we know that
\begin{align*}
V_f(x_0)&=~f_0(x_0,\pi_f(x_0))
\\
&\ge~f_0(x_0,\pi^*(x_0))
\\
&=~\EE[R_0(x_0,a_0)+f_1(x_1,a_1)\mid a_{0}\sim\pi^*,a_{1}\sim\pi_f]+ \Ecal(f,\pi^*,0)
\\
&\ge~\EE[R_0(x_0,a_0)\mid a_{0}\sim\pi^*]+\EE[f_1(x_1,a_1)\mid a_{0:1}\sim\pi^*] + \Ecal(f,\pi^*,0)
\\
&\ge~\EE[R_0(x_0,a_0)\mid a_{0}\sim\pi^*]+\EE[R_1(x_1,a_1)+f_2(x_2,a_2)\mid a_{0:1}\sim\pi^*,a_2\sim \pi_f]+ \Ecal(f,\pi^*,1)+ \Ecal(f,\pi^*,0)
\\
&\ge~\ldots
\\
&\ge~\EE\left[\sum_{h=0}^{H-1}R_h(x_h,a_h)\mid a_{0:H-1}\sim\pi^*\right]+\sum_{h=0}^{H-1}\Ecal(f,\pi^*,h)
\\
&\ge~ V^*_0(x_0)-  \sum_{h=0}^{H-1}|\Ecal(f,\pi^*,h)|.
\end{align*}
Therefore, we get
\begin{align}
\label{eq:find_v_star_lang_2}
&~V_{\hat f}(x_0)+H\cdot \max_{w\in\Wcal,h\in[H]} |\Lcal_{\Dcal}(\hat f,w,h)|
\notag\\
\ge&~ V_0^*(x_0)-  \sum_{h=0}^{H-1}|\Ecal(\hat f,\pi^*,h)|+H\cdot \max_{w\in\Wcal,h\in[H]} |\Lcal_{\Dcal}(\hat f,w,h)|\notag
\\
\ge&~ V_0^*(x_0)-\sum_{h=0}^{H-1}|\Ecal(\hat f,\pi^*,h)|+H\cdot \max_{w\in\Wcal,h\in[H]} |\EE[\Lcal_{\Dcal}(\hat f,w,h)]|-H\estat
\notag\\
\ge &~V_0^*(x_0)-\sum_{h=0}^{H-1}|\Ecal(\hat f,\pi^*,h)|+ \sum_{h=0}^{H-1}|\EE[\Lcal_{\Dcal}(\hat f,w^*,h)]|-H\estat
\notag\\
= &~V_0^*(x_0)-\sum_{h=0}^{H-1}|\Ecal(\hat f,\pi^*,h)|+\sum_{h=0}^{H-1}|\Ecal(\hat f,\pi^*,h)|-H\estat
\notag\\
= &~V_0^*(x_0)-H\estat.
\end{align}
Combining \Cref{eq:find_v_star_lang_1} and \Cref{eq:find_v_star_lang_2} yields
\[
|V_{\hat f}(x_0)+H\cdot \max_{w\in\Wcal,h\in[H]} |\Lcal_{\Dcal}(\hat f,w,h)|-v^*|=|V_{\hat f}(x_0)+H\cdot \max_{w\in\Wcal,h\in[H]} |\Lcal_{\Dcal}(\hat f,w,h)|-V_0^*(x_0)|\le H\estat\le \varepsilon,
\]
which completes the proof.
\end{proof}

\subsubsection{LEARNING A NEAR-OPTIMAL POLICY}
\label{app:find_near_optimal_lang}
Here we present the result for learning a near optimal policy. Compared with its counterpart (\pref{thm:main}), the sample complexity only differs in the constant.
\begin{theorem}[Sample complexity of learning a near-optimal policy, Lagrangian version]
\label{thm:main_lang}
	Suppose  \pref{assum:realizablity_q}, \pref{assum:realizablity_w}, \pref{assum:bound_q}, \pref{assum:bound_w}, \pref{assum:gap_plus} hold and the total number of samples $nH$ satisfies 
	\[nH\ge \frac{32C^2H^7\log(2|\Fcal||\Wcal|H/\delta)}{\varepsilon^2 \gapq^2}.
	\]
	Then with probability at least $1-\delta$, running \pref{alg:pess_lang_alg} with $\gapmin=\gapq$ guarantees 
	\[
	v^{\pi_{\hat f}} \ge v^*-\varepsilon.
	\]
\end{theorem}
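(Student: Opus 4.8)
The plan is to merge the gap-based telescoping argument of \pref{thm:main} with the Lagrangian manipulation already used in \pref{thm:find_v_star_lang}. First I would condition on the concentration event of \pref{lem:conc}, so that $\abr{\Lcal_\Dcal(f,w,h)-\EE[\Lcal_\Dcal(f,w,h)]}\le\estat$ holds simultaneously for all $f\in\Fcal$, $w\in\Wcal$, $h\in[H]$. Because the algorithm is run with $\gapmin=\gapq$, prescreening retains $Q^*$, i.e. $Q^*\in\Fcal(\gapq)$; and since $Q^*$ has zero population loss under every $w$ (\pref{lem:trans}), its Lagrangian objective value is at most $V_0^*(x_0)+H\estat$. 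By optimality of $\hat f$ for the objective in \Cref{eq:objective}, this gives the anchor inequality
\[
V_0^*(x_0)+H\estat \ge V_{\hat f}(x_0)+H\cdot\max_{w\in\Wcal,h\in[H]}\abr{\Lcal_\Dcal(\hat f,w,h)}.
\]

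Next I would apply the gap-based telescoping of \pref{thm:main} to $\hat f\in\Fcal(\gapq)$: using $\gap(\hat f)\ge\gapq$ at every state where $\pi_{\hat f}$ deviates from $\pi^*$, the same layer-by-layer expansion yields
\[
V_{\hat f}(x_0)\ge V_0^*(x_0)+\gapq\,\EE\left[\sum_{h=0}^{H-1}\one\{\pi_{\hat f}(x_h)\neq\pi^*(x_h)\}\mid\pi^*\right]-\sum_{h=0}^{H-1}\abr{\Ecal(\hat f,\pi^*,h)}.
\]
The step that makes the Lagrangian variant go through without an explicit threshold $\alpha$ is to control $\sum_h\abr{\Ecal(\hat f,\pi^*,h)}$ by the very $\max_{w,h}$ quantity that appears in the objective: since $w^*\in\Wcal$ (\pref{assum:realizablity_w}), \pref{lem:trans} and the concentration event give $\abr{\Ecal(\hat f,\pi^*,h)}=\abr{\EE[\Lcal_\Dcal(\hat f,w^*,h)]}\le\abr{\Lcal_\Dcal(\hat f,w^*,h)}+\estat\le\max_{w\in\Wcal,h'\in[H]}\abr{\Lcal_\Dcal(\hat f,w,h')}+\estat$, hence $\sum_h\abr{\Ecal(\hat f,\pi^*,h)}\le H\max_{w,h}\abr{\Lcal_\Dcal(\hat f,w,h)}+H\estat$.

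Substituting the telescoping bound into the anchor inequality, the two copies of $H\max_{w,h}\abr{\Lcal_\Dcal(\hat f,w,h)}$ cancel exactly, leaving $\gapq\,\EE[\sum_h\one\{\pi_{\hat f}(x_h)\neq\pi^*(x_h)\}\mid\pi^*]\le 2H\estat$, so the expected disagreement of $\pi_{\hat f}$ with $\pi^*$ along $\pi^*$'s distribution is at most $2H\estat/\gapq$. From here I would reuse the RL-to-SL reduction in the proof of \pref{thm:main} (the $\Ccal_1,\Ccal_2,\Ccal_3$ trajectory decomposition, or equivalently Theorem~2.1 of \citet{ross2010efficient}) to convert this into $v^{\pi_{\hat f}}\ge v^*-2H^2\estat/\gapq$, and finally substitute $\estat=2CH\sqrt{\log(2|\Fcal||\Wcal|H/\delta)/(2n)}$; the stated sample size, whose constant $32$ is a factor comfortably larger than the minimum needed, then forces $2H^2\estat/\gapq\le\varepsilon$.

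The main obstacle is the bookkeeping around this cancellation. Because PABC-L carries no explicit constraint threshold, the term $\max_{w,h}\abr{\Lcal_\Dcal(\hat f,w,h)}$ must be tracked symbolically through both the objective-optimality inequality (where it enters with a $+$ sign on $\hat f$'s side) and the average-Bellman-error bound (where it enters with a $-$ sign), and one must verify that the Lagrangian weight $H$ is exactly the factor that makes them annihilate rather than merely combine into a single residual. Once this alignment is confirmed, the remainder is a direct recombination of arguments already established for \pref{thm:main} and \pref{thm:find_v_star_lang}.
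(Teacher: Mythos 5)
Your proposal is correct and follows essentially the same route as the paper's proof: the anchor inequality from the Lagrangian objective evaluated at $Q^*$, the gap-based telescoping for $\hat f\in\Fcal(\gapq)$, the bound $\sum_h\abr{\Ecal(\hat f,\pi^*,h)}\le H\max_{w,h}\abr{\Lcal_\Dcal(\hat f,w,h)}+H\estat$ via $w^*\in\Wcal$, the exact cancellation of the $H\max_{w,h}$ terms yielding a disagreement bound of $2H\estat/\gapq$, and the RL-to-SL reduction. The constant check also goes through as you state ($\estat\le\varepsilon\gapq/(4H^2)$ under the stated sample size, so $2H^2\estat/\gapq\le\varepsilon/2$).
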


\begin{proof}
The proof mostly follows the proof of \pref{thm:main} and \pref{thm:find_v_star_lang}, and we only show the different and crucial steps here. We still condition on the high probability event from concentration (\pref{lem:conc}).

Similar as the proof of \pref{thm:find_v_star_lang}, from pessimism, we have
\begin{align}
\label{eq:main_lang_2}
V_0^*(x_0)+H\estat \ge V_{\hat f}(x_0)+H\cdot \max_{w\in\Wcal,h\in[H]} |\Lcal_{\Dcal}(\hat f,w,h)|,
\end{align}
where $\estat\le \varepsilon\gapq/(2H^2)$.

On the other hand, following the proof of \pref{thm:main} and \pref{thm:find_v_star_lang}, we have
\begin{align}
\label{eq:main_lang_1}
&~V_f(x_0)+H\cdot \max_{w\in\Wcal,h\in[H]} |\Lcal_{\Dcal}(\hat f,w,h)|
\notag\\
\ge&~ V^*_0(x_0)+\gapq \EE\left[\sum_{h=0}^{H-1}\one\{\pi_f(x_h)\neq \pi^*(x_h)\}\mid a_{0:H-1}\sim\pi^*\right]- \sum_{h=0}^{H-1}|\Ecal(\hat f,w^*,h)|+H\cdot \max_{w\in\Wcal,h\in[H]} |\Lcal_{\Dcal}(\hat f,w,h)|
\notag\\
\ge&~V^*_0(x_0)+\gapq \EE\left[\sum_{h=0}^{H-1}\one\{\pi_f(x_h)\neq \pi^*(x_h)\}\mid a_{0:H-1}\sim\pi^*\right]- \sum_{h=0}^{H-1}|\Ecal(\hat f,w^*,h)|+\sum_{h=0}^{H-1}|\Ecal(\hat f,w^*,h)|-H\estat
\notag\\
\ge&~V^*_0(x_0)+\gapq \EE\left[\sum_{h=0}^{H-1}\one\{\pi_f(x_h)\neq \pi^*(x_h)\}\mid a_{0:H-1}\sim\pi^*\right]-H\estat.
\end{align}
Combining \Cref{eq:main_lang_2} and \Cref{eq:main_lang_1} yields
\[\EE\left[\sum_{h=0}^{H-1}\one\{\pi_f(x_h)\neq \pi^*(x_h)\}\mid a_{0:H-1}\sim\pi^*\right]\le 2H\estat /\gapq\le\varepsilon.
\]
The remaining steps are followed from the proof of \pref{thm:main}.
\end{proof}

\subsection{Robustness to Misspecification}
\label{app:appx_error_lang}
In this part, we present the sample complexity results of PABC-L (\pref{alg:pess_lang_alg}) under misspecification. In parallel with \pref{sec:appx_error}, we show that PABC-L can identify $v^*$ in \pref{app:find_v_star_appx_lang}  and show its results for learning a near-optimal policy in \pref{app:find_near_optimal_appx_lang}. The major advantage of PABC-L is that it does not take $\alpha$ as the input, therefore, we no longer require the knowledge of approximation errors. 

\subsubsection{ESTIMATING OPTIMAL EXPECTED RETURN}
\label{app:find_v_star_appx_lang}
We present the result for identifying $v^*$. The sample complexity of PABC-L is the same as its counterpart (\pref{thm:find_v_star_appx}).
\begin{theorem}[Robust version of \pref{thm:find_v_star_lang}]
\label{thm:find_v_star_appx_lang}
Suppose  \pref{assum:bound_q}, \pref{assum:bound_w} hold and the total number of samples $nH$ satisfies 
\[nH\ge \frac{8C^2H^5\log(2|\Fcal||\Wcal|H/\delta)}{\varepsilon ^2}.
\]
Then with probability $1-\delta$, running \pref{alg:pess_lang_alg} with $\gapmin=0$ guarantees 
\[|V_{\hat f}(x_0)-v^*| \le \varepsilon + H\varepsilon_{\Fcal}+H\varepsilon_\Wcal.
\]
\end{theorem}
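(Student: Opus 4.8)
The plan is to merge the proof of the exact Lagrangian guarantee (\pref{thm:find_v_star_lang}) with the device for introducing approximation errors used in the constrained robust result (\pref{thm:find_v_star_appx}). Since we run \pref{alg:pess_lang_alg} with $\gapmin=0$, prescreening is vacuous and $\Fcal(\gapmin)=\Fcal$, so the approximator $\tilde Q^*_\Fcal$ from \Cref{eq:appx_f} is an admissible comparator for the $\argmin$. Throughout I condition on the concentration event of \pref{lem:conc}, under which the sample-size requirement yields $\estat\le \varepsilon/(2H)$, and I abbreviate the penalized objective/return estimate as $\widehat{\mathrm{Ret}}(f):=f_0(x_0,\pi_f(x_0))+H\max_{w\in\Wcal,h\in[H]}|\Lcal_\Dcal(f,w,h)|$, so that $\hat f$ minimizes $\widehat{\mathrm{Ret}}$ and (following the convention of \pref{thm:find_v_star_lang}) the quantity to be controlled is $|\widehat{\mathrm{Ret}}(\hat f)-v^*|$.

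For the upper bound I would evaluate $\widehat{\mathrm{Ret}}$ at $\tilde Q^*_\Fcal$. By \pref{lem:trans} the population loss equals an average Bellman error, so the first term inside the $\max$ of \Cref{eq:appx_f} controls $A:=\max_{w,h}|\EE[\Lcal_\Dcal(\tilde Q^*_\Fcal,w,h)]|$, while the second controls $B:=|\tilde Q^*_{\Fcal,0}(x_0,\pi_{\tilde Q^*_\Fcal}(x_0))-v^*|$, and the definition of $\varepsilon_\Fcal$ gives $A+B=\varepsilon_\Fcal$. Adding one $\estat$ of concentration slack to the penalty yields $\widehat{\mathrm{Ret}}(\tilde Q^*_\Fcal)\le v^*+B+H(A+\estat)$. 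The crucial observation is that $B+HA\le H(A+B)=H\varepsilon_\Fcal$, so pessimism gives $\widehat{\mathrm{Ret}}(\hat f)\le \widehat{\mathrm{Ret}}(\tilde Q^*_\Fcal)\le v^*+H\varepsilon_\Fcal+H\estat$.

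For the lower bound I would reuse the telescoping identity from the proof of \pref{thm:find_v_star_lang}, namely $V_f(x_0)\ge V_0^*(x_0)-\sum_{h}|\Ecal(f,\pi^*,h)|$ for every $f$. Since $w^*$ need not lie in $\Wcal$, I replace the exact identity $\Ecal(f,\pi^*,h)=\EE[\Lcal_\Dcal(f,w^*,h)]$ by \pref{lem:appx_w}, giving $|\Ecal(\hat f,\pi^*,h)|\le |\EE[\Lcal_\Dcal(\hat f,\tilde w^*,h)]|+\varepsilon_\Wcal$. Because $\tilde w^*\in\Wcal$, the max-over-$w$ penalty still dominates the telescoped errors up to concentration and the new slack: $H\max_{w,h}|\Lcal_\Dcal(\hat f,w,h)|\ge \sum_h|\EE[\Lcal_\Dcal(\hat f,\tilde w^*,h)]|-H\estat\ge \sum_h|\Ecal(\hat f,\pi^*,h)|-H\varepsilon_\Wcal-H\estat$. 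Substituting this back cancels the telescoped errors and leaves $\widehat{\mathrm{Ret}}(\hat f)\ge v^*-H\estat-H\varepsilon_\Wcal$.

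Combining the two directions gives $|\widehat{\mathrm{Ret}}(\hat f)-v^*|\le \max\{H\varepsilon_\Fcal+H\estat,\ H\varepsilon_\Wcal+H\estat\}\le \varepsilon+H\varepsilon_\Fcal+H\varepsilon_\Wcal$, using $H\estat\le \varepsilon/2$, which is the claim. I expect the main subtlety to be the upper-bound bookkeeping: bounding the initial-state discrepancy and the penalty separately would cost a spurious $(H{+}1)\varepsilon_\Fcal$, and one must instead exploit that the two pieces are \emph{summed} inside the single $\min$-$\max$ defining $\varepsilon_\Fcal$ (\Cref{eq:appx_f}) to collapse $B+HA$ to $H\varepsilon_\Fcal$. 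The lower bound is more routine; the only care needed is that $\tilde w^*$ (rather than $w^*$) plays the role of the comparator weight and that \pref{lem:appx_w} injects exactly one $\varepsilon_\Wcal$ per timestep.
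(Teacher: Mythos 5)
Your proposal is correct and follows essentially the same route as the paper's proof: compare the penalized objective at $\hat f$ against $\tilde Q^*_\Fcal$ via pessimism for the upper bound, and use the telescoping identity together with \pref{lem:appx_w} (with $\tilde w^*$ as the comparator weight) to cancel the telescoped Bellman errors against the penalty for the lower bound, with both bounding the algorithm's penalized return estimate as the paper does. Your explicit observation that $B+HA\le H(A+B)=H\varepsilon_\Fcal$ correctly fills in the bookkeeping the paper leaves implicit in obtaining the $H\varepsilon_\Fcal$ (rather than $(H+1)\varepsilon_\Fcal$) constant.
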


\begin{proof}
The proof mostly follows the proof of \pref{thm:find_v_star_appx} and \pref{thm:find_v_star_lang}, and we only show the different and crucial steps here. We still condition on the high probability event from concentration (\pref{lem:conc}).

For $\tilde Q^*_{\Fcal}$, from the concentration result and the definition of $\varepsilon_{\Fcal}$, we get
\[
\tilde Q^*_{\Fcal,0}(x_0,\pi_{Q^*_{\Fcal}(x_0)}) + H\cdot \max_{w\in\Wcal,h\in[H]} |\Lcal_{\Dcal}(\tilde Q^*_{\Fcal},w,h)| \le V^*_0(x_0) +H\varepsilon_{\Fcal}+ H\estat,
\]
where $\estat\le \varepsilon/H$.

From pessimism and the objective in  \pref{alg:pess_lang_alg}, we have 
\begin{align*}
\tilde Q^*_{\Fcal,0}(x_0,\pi_{Q^*_{\Fcal}(x_0)}) + H\cdot \max_{w\in\Wcal,h\in[H]} |\Lcal_{\Dcal}(\tilde Q^*_{\Fcal},w,h)| &\ge~ V_{\hat f}(x_0)+H\cdot \max_{w\in\Wcal,h\in[H]} |\Lcal_{\Dcal}(\hat f,w,h)|.
\end{align*}
Therefore, we get
\begin{align}
\label{eq:find_v_star_appx_lang_1}
V_0^*(x_0)+H\varepsilon_{\Fcal}+H\estat \ge V_{\hat f}(x_0)+H\cdot \max_{w\in\Wcal,h\in[H]} |\Lcal_{\Dcal}(\hat f,w,h)|.
\end{align}
For any $f\in\Fcal$, following the telescoping step in the proof of \pref{thm:find_v_star_lang}, we know that
\begin{align*}
    V_f(x_0)\ge V^*_0(x_0)-  \sum_{h=0}^{H-1}|\Ecal(f,\pi^*,h)|.
\end{align*}
Therefore, similar as the proof of \pref{thm:find_v_star_lang} and applying \pref{lem:appx_w}, we get
\begin{align}
\label{eq:find_v_star_appx_lang_2}
&~V_{\hat f}(x_0)+H\cdot \max_{w\in\Wcal,h\in[H]} |\Lcal_{\Dcal}(\hat f,w,h)|
\notag\\
\ge&~ V_0^*(x_0)-\sum_{h=0}^{H-1}|\Ecal(\hat f,\pi^*,h)|+H\cdot \max_{w\in\Wcal,h\in[H]} |\EE[\Lcal_{\Dcal}(\hat f,w,h)]|-H\estat
\notag\\
\ge &~V_0^*(x_0)-\sum_{h=0}^{H-1}|\Ecal(\hat f,\pi^*,h)|+ \sum_{h=0}^{H-1}|\EE[\Lcal_{\Dcal}(\hat f,\tilde w^*,h)]|-H\estat
\notag\\
\ge &~V_0^*(x_0)-\sum_{h=0}^{H-1}|\Ecal(\hat f,\pi^*,h)|+\sum_{h=0}^{H-1}|\Ecal(\hat f,\pi^*,h)|-H\varepsilon_{\Wcal}- H\estat
\notag\\
= &~V_0^*(x_0)-H\varepsilon_{\Wcal}-H\estat.
\end{align}
Combining \Cref{eq:find_v_star_appx_lang_1} and \Cref{eq:find_v_star_appx_lang_2} yields
\begin{align*}
|V_{\hat f}(x_0)+H\cdot \max_{w\in\Wcal,h\in[H]} |\Lcal_{\Dcal}(\hat f,w,h)|-v^*|&=~|V_{\hat f}(x_0)+H\cdot \max_{w\in\Wcal,h\in[H]} |\Lcal_{\Dcal}(\hat f,w,h)|-V_0^*(x_0)|
\\
&=~ H(\varepsilon_{\Fcal}+ \varepsilon_{\Wcal}+\estat)
\\
&\le~  \varepsilon + H(\varepsilon_{\Fcal}+ \varepsilon_{\Wcal}),
\end{align*}
which completes the proof.
\end{proof}

\subsubsection{LEARNING A NEAR-OPTIMAL POLICY}
\label{app:find_near_optimal_appx_lang}
In this part, we show the results for learning a near-optimal policy. Compared with the ones for PABC (\pref{thm:main_appx} and \pref{corr:main_appx}), the differences are only the constants.

\begin{theorem}[Robust version of \pref{thm:main_lang}]
\label{thm:main_appx_lang}
Suppose  \pref{assum:bound_q}, \pref{assum:bound_w} hold and the total number of samples $nH$ satisfies 
\[nH\ge \frac{32C^2H^7\log(2|\Fcal||\Wcal|H/\delta)}{\varepsilon^2 \gapmin^2}.\]
Then with probability $1-\delta$, running \pref{alg:pess_lang_alg} with a user-specified $\gapmin$ guarantees 
\[v^{\pi_{\hat f}} \ge v^*-\varepsilon  - \frac{H^2\varepsilon_{\Fcal(\gapmin)}+H^2\varepsilon_\Wcal}{\gapmin}.\]
\end{theorem}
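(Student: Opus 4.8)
The plan is to combine the Lagrangian self-cancellation mechanism from the proof of \pref{thm:main_lang} with the misspecification bookkeeping from the proof of \pref{thm:main_appx}, organizing everything as a two-sided sandwich on the regularized objective $\Phi(f) := V_f(x_0) + H\max_{w\in\Wcal,h\in[H]}|\Lcal_{\Dcal}(f,w,h)|$ evaluated at the learned $\hat f$. First I would condition on the concentration event of \pref{lem:conc}; the stated sample size guarantees $\estat \le \varepsilon\gapmin/(4H^2)$. Note $\hat f\in\Fcal(\gapmin)$ since the algorithm optimizes only over the prescreened class, so the gap $\gapmin$ is available in the telescoping step.

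For the pessimism (upper) side I would use $\tilde Q^*_{\Fcal(\gapmin)}\in\Fcal(\gapmin)$ as comparator. Writing $a := |\tilde Q^*_{\Fcal(\gapmin),0}(x_0,\pi_{\tilde Q^*_{\Fcal(\gapmin)}}(x_0)) - Q^*_0(x_0,\pi^*(x_0))|$ and $b := \max_{w,h}|\EE[\Lcal_{\Dcal}(\tilde Q^*_{\Fcal(\gapmin)},w,h)]|$, the definition of $\varepsilon_{\Fcal(\gapmin)}$ (\Cref{eq:appx_f}) together with \pref{lem:trans} gives $a + b \le \varepsilon_{\Fcal(\gapmin)}$. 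The crucial observation here, and the reason this theorem enjoys a tighter $\varepsilon_{\Fcal(\gapmin)}$ constant than the constrained \pref{thm:main_appx}, is that $\Phi$ weights these quantities as $a + Hb$, and $a + Hb = (a+b) + (H-1)b \le H\varepsilon_{\Fcal(\gapmin)}$. Combined with $\max_{w,h}|\Lcal_{\Dcal}(\tilde Q^*_{\Fcal(\gapmin)},w,h)|\le b+\estat$, this yields $\Phi(\tilde Q^*_{\Fcal(\gapmin)}) \le V^*_0(x_0) + H\varepsilon_{\Fcal(\gapmin)} + H\estat$, and by optimality of $\hat f$, $\Phi(\hat f) \le V^*_0(x_0) + H\varepsilon_{\Fcal(\gapmin)} + H\estat$.

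For the telescoping (lower) side, since $\hat f\in\Fcal(\gapmin)$ I would run the gap-based telescoping of \pref{thm:main} to get $V_{\hat f}(x_0) \ge V^*_0(x_0) + \gapmin\,\EE[\sum_h \one\{\pi_{\hat f}(x_h)\neq\pi^*(x_h)\}\mid\pi^*] - \sum_h|\Ecal(\hat f,\pi^*,h)|$. I would then control the residual $\sum_h|\Ecal(\hat f,\pi^*,h)|$ via \pref{lem:appx_w}, which introduces the $H\varepsilon_\Wcal$ term through the approximate weight $\tilde w^*\in\Wcal$, and then bound $\sum_h|\EE[\Lcal_{\Dcal}(\hat f,\tilde w^*,h)]| \le H\max_{w,h}|\Lcal_{\Dcal}(\hat f,w,h)| + H\estat$ because $\tilde w^*\in\Wcal$. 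Adding $H\max_{w,h}|\Lcal_{\Dcal}(\hat f,w,h)|$ to both sides, the regularizer exactly cancels this residual (as in \pref{thm:main_lang}), leaving $\Phi(\hat f) \ge V^*_0(x_0) + \gapmin\,\EE[\sum_h\one\{\ldots\}\mid\pi^*] - H\estat - H\varepsilon_\Wcal$. Crucially, no $\varepsilon_{\Fcal(\gapmin)}$ enters from this side, in contrast to the constrained algorithm where the threshold $\alpha$ carries an extra $\varepsilon_{\Fcal(\gapmin)}$.

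Combining the two sides cancels $\Phi(\hat f)$ and $V^*_0(x_0)$, giving $\gapmin\,\EE[\sum_h\one\{\ldots\}\mid\pi^*] \le H\varepsilon_{\Fcal(\gapmin)} + 2H\estat + H\varepsilon_\Wcal$; dividing by $\gapmin$ and using $\estat \le \varepsilon\gapmin/(4H^2)$ bounds the expected disagreement along $\pi^*$ by $(H\varepsilon_{\Fcal(\gapmin)} + H\varepsilon_\Wcal)/\gapmin + \varepsilon/(2H)$. Finally I would invoke the RL-to-SL reduction from the proof of \pref{thm:main} (the $\Ccal_1,\Ccal_2,\Ccal_3$ trajectory decomposition), which converts this $0/1$ disagreement probability into the value gap $v^* - v^{\pi_{\hat f}}$ at a cost of a factor $H$, producing $v^{\pi_{\hat f}} \ge v^* - \varepsilon/2 - (H^2\varepsilon_{\Fcal(\gapmin)}+H^2\varepsilon_\Wcal)/\gapmin$, which implies the claim. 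I expect the main obstacle to be the constant bookkeeping, specifically the $a + Hb \le H\varepsilon_{\Fcal(\gapmin)}$ combination: this is precisely where the Lagrangian form improves on the constrained algorithm, and it must be tracked exactly to avoid an extraneous additive $H\varepsilon_{\Fcal(\gapmin)}$ in the final rate.
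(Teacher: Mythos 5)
Your proposal is correct and follows essentially the same route as the paper: sandwich the Lagrangian objective $\Phi(\hat f)$ between the pessimism upper bound via the comparator $\tilde Q^*_{\Fcal(\gapmin)}$ (the paper's Eq.~\ref{eq:main_appx_lang_1}) and the gap-telescoping lower bound in which the regularizer cancels the average Bellman error residual after invoking \pref{lem:appx_w} (Eq.~\ref{eq:main_appx_lang_2}), then finish with the RL-to-SL reduction. Your explicit $a+Hb\le H\varepsilon_{\Fcal(\gapmin)}$ bookkeeping is a welcome elaboration of a step the paper leaves implicit by reference to \pref{thm:find_v_star_appx_lang}, and it is exactly what is needed to recover the stated constant.
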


\begin{proof}
The proof mostly follows the proof of \pref{thm:main_lang} and \pref{thm:find_v_star_appx_lang}, and we only show the different and crucial steps here. We still condition on the high probability event from concentration (\pref{lem:conc}).

Similar as the proof of \pref{thm:find_v_star_appx_lang}, we have
\begin{align}
\label{eq:main_appx_lang_1}
V_0^*(x_0)+H\varepsilon_{\Fcal(\gapmin)}+H\estat \ge V_{\hat f}(x_0)+H\cdot \max_{w\in\Wcal,h\in[H]} |\Lcal_{\Dcal}(\hat f,w,h)|,
\end{align}
where $\estat\le \varepsilon\gapmin/(2H^2)$.

On the other hand, following the proof of \pref{thm:main_lang} and \pref{thm:find_v_star_appx_lang}, we have
\begin{align}
\label{eq:main_appx_lang_2}
&~V_{\hat f}(x_0)+H\cdot \max_{w\in\Wcal,h\in[H]} |\Lcal_{\Dcal}(\hat f,w,h)|
\notag\\
\ge&~ V^*_0(x_0)+\gapmin \EE\left[\sum_{h=0}^{H-1}\one\{\pi_f(x_h)\neq \pi^*(x_h)\}\mid a_{0:H-1}\sim\pi^*\right]- \sum_{h=0}^{H-1}|\Ecal(\hat f,w^*,h)|
\notag\\
&~\quad +H\cdot \max_{w\in\Wcal,h\in[H]} |\Lcal_{\Dcal}(\hat f,w,h)|
\notag\\
\ge&~V^*_0(x_0)+\gapmin \EE\left[\sum_{h=0}^{H-1}\one\{\pi_f(x_h)\neq \pi^*(x_h)\}\mid a_{0:H-1}\sim\pi^*\right]-H\varepsilon_{\Wcal}-H\estat.
\end{align}
Combining \Cref{eq:main_appx_lang_1} and \Cref{eq:main_appx_lang_2} yields
\[\EE\left[\sum_{h=0}^{H-1}\one\{\pi_f(x_h)\neq \pi^*(x_h)\}\mid a_{0:H-1}\sim\pi^*\right]\le H(2\estat+\varepsilon_{\Wcal}+\varepsilon_{\Fcal(\gapmin)}) /\gapmin.
\]
The remaining steps can be followed from the proof of \pref{thm:main}.
\end{proof}

\begin{corollary}[Corollary from \pref{thm:main_appx_lang}]
\label{corr:main_appx_alg}
Suppose  \pref{assum:bound_q}, \pref{assum:bound_w} hold, the weight function class satisfies the additional mild regularity assumptions stated in \pref{lem:appx_f_vs_infty}. Assume we are given $\varepsilon_{\Fcal,\infty},\gapq$ and $2\varepsilon_{\Fcal,\infty}<\gapq$. If the total number of samples $nH$ satisfies 
\[
nH\ge \frac{8C^2H^7\log(2|\Fcal||\Wcal|H/\delta)}{\varepsilon^2 (\gapq-2\varepsilon_{\Fcal,\infty})^2},
\]
then with probability $1-\delta$, running \pref{alg:pess_lang_alg} with $\gapmin=\gapq-2\varepsilon_{\Fcal,\infty}$ guarantees 
\[v^{\pi_{\hat f}} \ge v^*-\varepsilon  - \frac{2H^2\varepsilon_{\Fcal,\infty}+H^2\varepsilon_\Wcal}{\gapq-2\varepsilon_{\Fcal,\infty}}.\]
\end{corollary}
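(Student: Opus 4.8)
The plan is to establish this as the PABC-L (Lagrangian) analogue of \pref{corr:main_appx}, re-running that argument with the $\ell_\infty$ approximation bound in hand rather than invoking \pref{thm:main_appx_lang} as a black box. A black-box route — substituting $\gapmin = \gapq - 2\varepsilon_{\Fcal,\infty}$ into \pref{thm:main_appx_lang} and then bounding $\varepsilon_{\Fcal(\gapmin)} \le 3\varepsilon_{\Fcal,\infty}$ via \pref{lem:appx_f_vs_infty} — already gives a valid guarantee, but with the looser numerator $3H^2\varepsilon_{\Fcal,\infty}$, since that blanket bound lumps together the Bellman-loss part and the initial-value-offset part of the error. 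Re-deriving with the $\ell_\infty$ error split into these two pieces is what recovers the sharper $2H^2\varepsilon_{\Fcal,\infty}$ dominant term, and the stated sample size is precisely what forces $\estat \le \varepsilon\gapmin/(2H^2)$ with $\gapmin = \gapq - 2\varepsilon_{\Fcal,\infty}$.

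First I would handle the prescreening bookkeeping exactly as in the proof of \pref{corr:main_appx}. Since $\|\tilde Q^*_{\Fcal,\infty,h} - Q^*_h\|_\infty \le \varepsilon_{\Fcal,\infty}$ for every $h$, each per-state gap can shrink by at most $2\varepsilon_{\Fcal,\infty}$, so $\gap(\tilde Q^*_{\Fcal,\infty}) \ge \gapq - 2\varepsilon_{\Fcal,\infty} > 0$ and hence $\tilde Q^*_{\Fcal,\infty} \in \Fcal(\gapmin)$ with $\gapmin = \gapq - 2\varepsilon_{\Fcal,\infty}$; moreover $\tilde Q^*_{\Fcal,\infty}$ remains the best $\ell_\infty$ approximator of $Q^*$ inside $\Fcal(\gapmin)$. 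This guarantees the anchor function survives prescreening and can serve as the pessimism comparator.

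Next I would condition on the concentration event of \pref{lem:conc}, under which $\estat \le \varepsilon\gapmin/(2H^2)$. The quantitative input for the anchor is the $\ell_\infty$-to-loss bound already derived inside the proof of \pref{corr:main_appx}: splitting $\tilde Q^*_{\Fcal,\infty,h} - \Tcal_h \tilde Q^*_{\Fcal,\infty,h+1}$ into a value gap and a $\Tcal_h$-image gap and using $\EE_{d^D_h}[w_h]=1$ and $w_h \ge 0$ yields $|\EE[\Lcal_\Dcal(\tilde Q^*_{\Fcal,\infty},w,h)]| \le 2\varepsilon_{\Fcal,\infty}$ for all $w,h$, while $\ell_\infty$ closeness together with optimality of $\pi^*$ for $Q^*$ gives the initial-value offset $\tilde Q^*_{\Fcal,\infty,0}(x_0,\pi_{\tilde Q^*_{\Fcal,\infty}}(x_0)) \le V^*_0(x_0) + \varepsilon_{\Fcal,\infty}$. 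Feeding $\tilde Q^*_{\Fcal,\infty}$ into the PABC-L objective of \pref{alg:pess_lang_alg} and using pessimism (its feasibility in $\Fcal(\gapmin)$) then yields $V_{\hat f}(x_0) + H\max_{w,h}|\Lcal_\Dcal(\hat f,w,h)| \le V^*_0(x_0) + \varepsilon_{\Fcal,\infty} + 2H\varepsilon_{\Fcal,\infty} + H\estat$, the $\ell_\infty$ replacement for \Cref{eq:main_appx_lang_1}.

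Finally I would reuse the lower bound of \pref{thm:main_appx_lang} essentially verbatim: the telescoping-and-gap argument of \pref{thm:main_lang} combined with the approximate-$w^*$ control of \pref{lem:appx_w} gives $V_{\hat f}(x_0) + H\max_{w,h}|\Lcal_\Dcal(\hat f,w,h)| \ge V^*_0(x_0) + \gapmin\,\EE[\sum_{h}\one\{\pi_{\hat f}(x_h)\neq\pi^*(x_h)\}\mid\pi^*] - H\varepsilon_\Wcal - H\estat$, i.e.\ \Cref{eq:main_appx_lang_2} with $\gapmin = \gapq-2\varepsilon_{\Fcal,\infty}$. Subtracting the two bounds controls the policy-disagreement probability by $O((H\varepsilon_{\Fcal,\infty} + H\varepsilon_\Wcal + H\estat)/\gapmin)$, and the $\Ccal_2$-set RL-to-SL reduction from the proof of \pref{thm:main} converts this into $v^{\pi_{\hat f}} \ge v^* - H\times(\text{disagreement})$; plugging in $\estat \le \varepsilon\gapmin/(2H^2)$ absorbs the statistical contribution into $\varepsilon$. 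The only delicate point — everything else being assembly of existing lemmas — is the constant tracking: the loss part $2\varepsilon_{\Fcal,\infty}$ picks up one factor $H$ from the Lagrangian penalty and a second from the performance-difference conversion (hence $2H^2$), and it must be kept separate from the once-counted initial-value offset, so that the $\ell_\infty$ error enters with the claimed $2H^2\varepsilon_{\Fcal,\infty}$ coefficient rather than the looser $3H^2\varepsilon_{\Fcal,\infty}$ produced by a black-box use of $\varepsilon_{\Fcal(\gapmin)}$.
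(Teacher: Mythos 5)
Your proposal takes essentially the same route as the paper's own proof: it anchors on $\tilde Q^*_{\Fcal,\infty}$ surviving prescreening since $\gap(\tilde Q^*_{\Fcal,\infty})\ge\gapq-2\varepsilon_{\Fcal,\infty}$, bounds $|\EE[\Lcal_\Dcal(\tilde Q^*_{\Fcal,\infty},w,h)]|\le 2\varepsilon_{\Fcal,\infty}$ via the $\ell_\infty$ split from \pref{lem:appx_f_vs_infty}, combines the resulting pessimism upper bound with the gap/telescoping lower bound of \pref{thm:main_appx_lang}, and finishes with the RL-to-SL reduction. The only (harmless) divergence is that you retain the initial-state offset $+\varepsilon_{\Fcal,\infty}$ in the upper bound, which yields a coefficient $(2H^2+H)\varepsilon_{\Fcal,\infty}$ matching \pref{corr:main_appx}, whereas the paper's display \Cref{eq:corr_appx_lang_1} silently drops this lower-order term to state $2H^2\varepsilon_{\Fcal,\infty}$.
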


\begin{proof}
The proof mostly follows the proof of \pref{corr:main_appx} and \pref{thm:main_appx_lang}, and we only show the different and crucial steps here. We still condition on the high probability event from concentration (\pref{lem:conc}).

Similar as the proof of \pref{corr:main_appx} and \pref{thm:main_appx_lang}, we have
\begin{align}
\label{eq:corr_appx_lang_1}
V_0^*(x_0)+2H\varepsilon_{\Fcal,\infty}+H\estat \ge V_{\hat f}(x_0)+H\cdot \max_{w\in\Wcal,h\in[H]} |\Lcal_{\Dcal}(\hat f,w,h)|.
\end{align}
On the other hand, following the proof of \pref{thm:main_appx_lang}, we have
\begin{align}
\label{eq:corr_appx_lang_2}
&~V_{\hat f}(x_0)+H\cdot \max_{w\in\Wcal,h\in[H]} |\Lcal_{\Dcal}(\hat f,w,h)|
\notag\\
\ge&~V^*_0(x_0)+(\gapq-2\varepsilon_{\Fcal,\infty}) \EE\left[\sum_{h=0}^{H-1}\one\{\pi_f(x_h)\neq \pi^*(x_h)\}\mid a_{0:H-1}\sim\pi^*\right]-H\varepsilon_{\Wcal}-H\estat.
\end{align}
Combining \Cref{eq:corr_appx_lang_1} and \Cref{eq:corr_appx_lang_2} yields
\[\EE\left[\sum_{h=0}^{H-1}\one\{\pi_f(x_h)\neq \pi^*(x_h)\}\mid a_{0:H-1}\sim\pi^*\right]\le H(2\estat+\varepsilon_{\Wcal}+2\varepsilon_{\Fcal,\infty}) /(\gapq-2\varepsilon_{\Fcal,\infty}).
\]
The remaining steps can be followed from the proof of \pref{thm:main}.
\end{proof}

\section{Discussion on the Data Coverage Assumption}
\label{app:conc_example}

In this section, we provide an example that shows our data coverage assumption is more relaxed than the $\pi^*$-concentrability assumption in \citet{zhan2022offline} (their Assumption 1) based on raw density ratios. Notice that their assumption translates into $d^*_h(x_h,a_h)/d^D_h(x_h,a_h)\le C,\forall h\in[H],x_h\in\Xcal_h,a_h\in\Acal$ in our finite-horizon episodic setting. We will show an instance where there exists some $h,(x_h,a_h)$ such that $d^*_h(x_h,a_h)/d^D_h(x_h,a_h)=\infty$ and $w^*$ does not even exist (thus $w^*\notin \Wcal$), but we still have $\varepsilon_\Wcal=0$. Therefore, our robust version of sample complexity results can give us meaningful guarantees, however, we cannot apply the (robustness) results in \citet{zhan2022offline}.

\begin{figure}[h!]
	\center
	\begin{tikzpicture}[scale=3]
		\node[state] (s0) at (0,0) {$x_0$};
		\node[state] (s2) [below=4.2em of s0] {$\mathrm{Null}$};
		
		\draw[->] (s0) edge[bend right=60] node[near start,above = .2 em,left=.3em] {$\pi^*,\textrm{L}$} (s2);
		\draw[->] (s0) -> node[near start,below = .2 em,right=.em] {$\textrm{M}$} (s2);
		\draw[->] (s0) edge[bend left=60] node[near start,below = .2 em,right=.3em] {$\textrm{R}$} (s2);
	\end{tikzpicture}
	\caption{Example for comparison with $\pi^*$-concentrability assumption \citep{zhan2022offline}.}
	\label{fig:conc_example}
\end{figure}
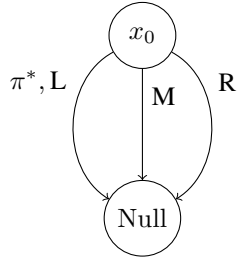

\renewcommand{\arraystretch}{1.25}
\begin{table}[htb]
\begin{center}
	\begin{tabular}{ |c|c|c|c|c| } 
		\hline
		&$(x_0,\mathrm{L})$ & $(x_0,\mathrm{M})$ & $(x_0,\mathrm{R})$  \\ 
		\hline
		$R$ & 0.8 & 0.6 & 0.3  \\ 
		$Q^*$ & 0.8 & 0.6 & 0.3 \\ 
		$f$ & 0.7 & 0.3 & 0.8 \\ 
		\hline 
		$d^*$ & 1 & 0 & 0 \\ 
		$d^D$ & 0  & 0.5 & 0.5 \\
		$w$ & 0 & 1 & 1 \\ 
		\hline
	\end{tabular}
\end{center}
\caption{Example for comparison with $\pi^*$-concentrability assumption \citep{zhan2022offline}.}
\label{table:conc_example}
\end{table}

As shown in \pref{fig:conc_example}, circles denote states and arrows denote actions with deterministic transitions. In this MDP, the length of horizon is $H=1$ and taking any action $\mathrm{L}$, $\mathrm{M}$, or $\mathrm{R}$ at the initial state $x_0$ transits to the $\mathrm{Null}$ terminal state. Since $H=1$, in the following discussion we drop the subscript $h$ for simplicity. In \Cref{table:conc_example}, we show the reward function, the optimal value function $Q^*$, the bad function $f$, the density-ratio function of the optimal policy $d^*$, the data distribution $d^D$, and the weight function $w$. We construct a singleton weight function class $\Wcal=\{w\}$ and a realizable function class $\Fcal=\{Q^*,f\}$. One can easily verify that $d^*(x_0,\mathrm{L})/d^D(x_0,\mathrm{L})=\infty$, $w^*$ does not exist, and the approximation error $\varepsilon_\Wcal$ as defined in \Cref{eq:appx_w} is 0. 

\end{document}